\documentclass[
]{ceurart}

\usepackage{amsmath}
\usepackage{graphicx}
\usepackage{microtype}
\usepackage{dsfont}
\usepackage{stmaryrd}
\usepackage{tikz} 
\usepackage{comment}
\usetikzlibrary{shapes.misc,automata,positioning,arrows}
\usepgflibrary{shapes.arrows} 
\usepackage{multicol}
\usepackage{mathrsfs}
\usepackage{xr}

\usepackage{algorithm}
\usepackage[noend]{algorithmic}

\usepackage{listings}
\usepackage{amsthm}
\newtheorem{theorem}{Theorem}
\newtheorem{lemma}{Lemma}
\newtheorem{definition}{Definition}
\newtheorem{example}{Example}
\newtheorem{proposition}{Proposition}
\newtheorem{corollary}{Corollary}

\begin{document}


\newcommand{\Lang}{\mathcal{L}}
\newcommand{\KLMLang}{\Lang^{\twiddle}}
\newcommand{\DRSLLang}{\smash{\KLMLang_{\mathbb{S}}}}
\newcommand{\negKLMLang}{\Lang^{\ntwiddle}}
\newcommand{\negDRSLLang}{\negKLMLang_\mathbb{S}}


\newcommand{\A}{\mathcal{A}}
\newcommand{\As}{\A_s}
\newcommand{\Aspsi}{\A^\psi_s}
\newcommand{\Asipsi}{\A^\psi_{s,\omega}}

\newcommand{\I}{\mathcal{I}}
\newcommand{\Is}{\I_s}
\newcommand{\Ispsi}{\I^\psi_s}
\newcommand{\Isipsi}{\I^\psi_{s,\omega}}

\newcommand{\KB}{\mathcal{K}}


\newcommand{\dbox}{\mathrlap{\sim}\square}
\newcommand{\ddiamond}{\mathrlap{\tiny\sim}\Diamond}

\newcommand{\dentails}{\mid\hskip-0.40ex\approx}
\newcommand{\ndentails}{\not\mid\hskip-0.40ex\approx}
\renewcommand{\bar}[1]{\overline{#1}}
\newcommand{\twiddle}{\mathrel|\joinrel\sim}
\newcommand{\ntwiddle}{\mathrel|\joinrel\not\sim}
\newcommand{\usually}{\:\raisebox{0.45ex}{\ensuremath{\sqsubset}}\hskip-1.7ex\raisebox{-0.6ex}{\scalebox{0.9}{\ensuremath{\sim}}}\:}
\newcommand{\nusually}{\not\hspace{-1.1mm}\usually}
\newcommand{\St}{\ensuremath{\text{\it S}}} 
\newcommand{\lab}{\ensuremath{\ell}} 
\newcommand{\pref}{\prec} 
\newcommand{\npref}{\not\pref}
\newcommand{\PM}{\mathscr{P}} 
\newcommand{\RM}{\mathscr{R}} 
\newcommand{\CM}{\mathscr{C}} 
\newcommand{\RSet}{\ensuremath{\mathcal{R}}} 
\newcommand{\Rentails}{\Centails{\RSet}} 
\newcommand{\nRentails}{\not\Rentails}
\newcommand{\Sat}{\mathrel|\joinrel\equiv} 
\newcommand{\nSat}{\not\Sat}
\newcommand{\states}[1]{\llbracket#1\rrbracket} 
\newcommand{\rk}{\ensuremath{\text{\it rk}}} 
\newcommand{\nme}{\ensuremath{\mathscr{R}}} 
\newcommand{\RC}{\ensuremath{\mathcal{R}^{\twiddle}}} 
\newcommand{\rel}{\ensuremath{\text{\it R}}} 
\newcommand{\sel}{{\ensuremath{\text{\sf sel}}}} 

\newcommand{\OI}{\ensuremath{\mathcal{O}}} 
\newcommand{\PI}{\ensuremath{\mathcal{P}}} 
\newcommand{\RI}{\ensuremath{\mathcal{R}}} 
\newcommand{\dsrel}{\leadsto}
\newcommand{\ndsrel}{\not\dsrel}
\newcommand{\D}{\ensuremath{\mathcal{D}}} 
\newcommand{\K}{\widetilde{\mathbb{K}}}
\newcommand{\Rs}{\mathscr{R}} 
\newcommand{\X}{\mathcal{X}} 
\newcommand{\generalent}{\entails_{\X}} 

\newcommand*{\myleftmid}{%
	\mathrel{\vcenter{\offinterlineskip
			\vskip-0.25ex\hbox{$\shortmid$}}}}
\newcommand*{\myrightmid}{%
	\mathrel{\vcenter{\offinterlineskip
			\vskip-0.7ex\hbox{$\shortmid$}}}}
\newcommand*{\twosim}{%
	\mathrel{\vcenter{\offinterlineskip
			\vskip0.05ex\hbox{$\sim$}\vskip0.25ex\hbox{$\sim$}}}}
\newcommand{\bartwosim}{\mathrel{\myleftmid}\hskip-0.03ex\joinrel\twosim}
\newcommand{\dnec}{\mathrel{\bartwosim}\hskip-.05ex\joinrel\myrightmid}
\newcommand{\dposs}{\scalebox{0.8}{\raisebox{-0.2ex}{\rotatebox{57}{\ensuremath{\dnec}}}}\hskip-0.3ex}
\newcommand{\smallsim}{\scalebox{0.86}{\raisebox{0.4ex}{\ensuremath{\sim}}}}
\newcommand{\dobl}{\mathrel{\obl}\hskip-1.9ex\joinrel\smallsim\hskip0.2ex}
\newcommand{\Langd}{\widetilde{\Lang}}
\newcommand{\smallsimforall}{\scalebox{0.68}{\raisebox{0.76ex}{\ensuremath{\sim}}}}
\newcommand{\dforall}{\forall\hskip-1.06ex\raisebox{0.7ex}{\scalebox{0.7}{\ensuremath{\textcolor{white}{\bullet}}}}\hskip-1.88ex\joinrel\smallsimforall}
\newcommand{\smallsimexists}{\scalebox{0.65}{\raisebox{0.62ex}{\ensuremath{\sim}}}}
\newcommand{\dexists}{\exists\hskip-1.5ex\scalebox{1.2}{\ensuremath{\textcolor{white}{\bullet}}}\hskip-2.05ex\joinrel\smallsimexists}

\newcommand{\ratent}{\vdash_{r}}

\newcommand{\minent}{\entails_R^{\cup}}
\newcommand{\ratentabox}{\vdash_{r}}
\newcommand{\tw}[1]{\widetilde{#1}}
\newcommand{\E}{\ensuremath{\mathcal{E}}} 
\newcommand{\e}{\ensuremath{\varepsilon}} 

\newcommand{\darrow}{\mathrel\sim\joinrel\rightarrow}

\setlength{\parindent}{0pt}

\newcommand{\Nick}[1]{\textcolor{blue}{#1}}

\copyrightyear{2026}
\copyrightclause{Copyright for this paper by its authors.
  Use permitted under Creative Commons License Attribution 4.0
  International (CC BY 4.0).}

\conference{ }

\title{Standpoint Logics with Defeasible Beliefs}

\author[1]{Nicholas Leisegang}
\cormark[1]
\address[1]{University of Cape Town and CAIR, South Africa}
\author[1]{Thomas Meyer}
\author[2,3]{Sebastian Rudolph}
\address[2]{Technische Universität Dresden, Dresden, Germany}
\address[3]{ScaDS.AI -- Center for Scalable Data Analytics and Artificial Intelligence Dresden/Leipzig, Germany}

\cortext[1]{Corresponding author.}

\begin{abstract}
    In this paper, we integrate the defeasible logic of Kraus, Lehmann and Magidor (KLM) with the standpoint logic framework of Gómez Álvarez and Rudolph. This is done with the goal of formally expressing knowledge taking into account multiple (possibly contradicting) viewpoints, which in turn may hold defeasible beliefs. In doing so, we utilise Defeasible Restricted Standpoint Logics (DRSL), introduced by Leisegang et al. Our work expands on previous work by providing a foundational representation result for DRSL semantics and systematically lifting several well-known entailment relations from the propositional case to the standpoint-enhanced setting. In particular, we characterise the semantics for DRSL through a set of KLM-style postulates adapted for the standpoints case. We furthermore provide a means to lift preferential entailment, and the class of entailment relations based on single ranking functions from the purely propositional to the standpoint-enhanced context, including rational and lexicographic closure. We show this can be done equivalently through semantic and algorithmic means. Furthermore, we show that, for each considered form of entailment, the complexity class of entailment checking does not change when moving from propositional KLM to DRSL.
\end{abstract}

\begin{keywords}
Defeasible Reasoning \sep 
Standpoint Logics \sep
Modal Logics
\end{keywords}

\maketitle

\section{Introduction}

\textit{Standpoint logics} denote a modal logic framework introduced by Rudolph and Gómez Álvarez \cite{alvarezrudolph:propositionalstdpt} with the purpose of expressing multiple standpoints or perspectives, which may hold conflicting beliefs about some topic, while still maintaining consistency in the logic. Much of the work in standpoint logics concerns the addition of \textit{standpoint modalities} to base logics such as description logics \cite{alvarez:stdptlogicfocase,alvarezrudolphstrass:standpointEL}, while showing that the added expressivity does not increase the complexity of reasoning. Recent work has integrated standpoint modalities in non-monotonic formalisms, modelling situations where standpoints may encompass beliefs that resemble default rules or defeasible implications \cite{LMR2024,GS2026}. As a motivating case, consider the following example.

\begin{example}[\cite{LMR2024}]
    \label{example:originaltomatoes}
    In the 19th century, crops imported into the USA were divided into fruits and vegetables, where fruits were exempt from import tax. This led to a court case on whether a tomato should be legally classified as a fruit or a vegetable. From a botanical standpoint, tomatoes are fruits and all fruits are also vegetables. This can be expressed using standpoint logics with the formulas:
    $\Box_{B}(tomato \rightarrow fruit)$ and $ \Box_{B}(fruit \rightarrow veg.)$,
    where $B$ represents the botanical standpoint. However, the court considered a different standpoint based on the culinary use of tomatoes, according to which vegetables are  those crops which are savoury and fruits are those which are sweet, giving us the beliefs: 
    $\Box_{C}(savoury \leftrightarrow veg.)$, $\Box_C(sweet \leftrightarrow fruit)$, $\Box_{C}(tomato\twiddle savoury)$, and $ \Box_{C}((fruit\twiddle \neg veg.)\wedge (veg.\twiddle \neg fruit))$, 
    where $C$ represents the ``culinary'' standpoint. Here, the third statement tells us that tomatoes are usually used in savoury dishes, and the last proposition states that fruits and vegetables are usually considered distinct from each other. Ultimately, courts agreed with the culinary standpoint that tomatoes were usually considered vegetables. We can represent this by $L\preceq C$, $\Box_{L}(veg.\rightarrow \neg fruit)$, where $L\preceq C$ tells us that the legal standpoint $L$ holds true each conclusion of $C$'s standpoint, and the second proposition states that fruits and vegetables are strictly distinct, legally speaking. This system allows for both internal exceptions, and strict disagreements between standpoints: $B$ believes that every fruit is a vegetable, while $L$ believes that no fruit is a vegetable. We also see that it is possible that a certain kind of exceptional tomato is not savoury from a culinary standpoint. In fact, from a culinary perspective it is possible to consider a tomato both a fruit and vegetable, due to its exceptional nature (i.e. $\Diamond_C(tomato\rightarrow(fruit \wedge veg.)$).
    
\end{example}

This shows us a scenario in which there are both different standpoints with conflicting points of view, as well as standpoints that hold beliefs which are defeasible in their nature. For example, the rule $fruit\twiddle\neg vegetable$ allows for non-prototypical fruits to also be considered vegetables. The addition of standpoint modalities therefore adds a new layer of expressivity to the propositional defeasible reasoning of Kraus, Lehmann and Magidor (KLM) \cite{kraus:nonmonotonic}, in which multiple perspectives can be considered simultaneously. On the other hand, adding defeasibility to the beliefs held by standpoints allows for beliefs to be weakened, and therefore for more agreements in beliefs to be held across standpoints, ultimately increasing the inferences that can be drawn from the classical case. In our paper, we extend the work of Leisegang et al. \cite{LMR2024}, who introduce the language of Defeasible Restricted Standpoint Logic (DRSL), and the corresponding semantics of \textit{preferential standpoint structures}. Our contribution characterises preferential standpoint structures by defining a set of KLM-style postulates, and providing a representation result that each set of DRSL statements closed under these postulates can be represented by a preferential standpoint structure. We then consider entailment for DRSL, where we first lift monotonic preferential entailment from the propositional case to the standpoints case, as well as providing a method for lifting the class of non-monotonic entailment relations which are based on single ranking functions to the standpoint logic case. We further show that entailment-checking for DRSL remains in the same complexity class as in the propositional case, in both the monotonic and non-monotonic case. Our paper is structured as follows: Section \ref{section:prelims} introduces preliminary material from propositional KLM-style defeasible reasoning. Section \ref{section:preferential-DRSL} introduces the syntax and semantics for DRSL, and provides the representation result which shows soundness and completeness for our adapted set of postulates. Section \ref{section:entailments} establishes results for both monotonic and non-monotonic entailments for DRSL. Section \ref{section:relatedwork} considers related work and provides concluding remarks.

\section{Preliminaries}\label{section:prelims}
In this section, we introduce some basic results and definitions from propositional defeasible reasoning. The framework for defeasible reasoning we use was proposed by Kraus, Lehmann and Magidor (KLM) \cite{kraus:nonmonotonic}, who introduced an additional defeasible implication ``$\twiddle$'' to classical propositional logic, where $\alpha\twiddle\beta$ reads as \textit{``$\alpha$ typically implies $\beta$''}.

\begin{definition}\label{definition:KLM-propositional-language}
    The language of \textit{KLM propositional logic} $\mathcal{L}^{\twiddle}$ over a set of propositional atoms $\mathcal{P}$ is given by
    \[\phi::=\alpha\twiddle\beta\mid\phi\wedge \phi\ ,\]
    where $\alpha$ and $\beta$ are Boolean formulas with atoms in $\mathcal{P}$.
\end{definition}

This definition is a slight extension of the original language of KLM, which does not allow conjunction in the language. This distinction is important in our case when we apply modal operators across conjunctions of KLM implications, rather than single implications. It is also worth noting that the $\twiddle$ sign was originally introduced as a defeasible consequence operator on the meta level, whereas we use it as a form of defeasible implication within the language itself. Lastly, we recall that any Boolean formula $\alpha$ is semantically equivalent to $\neg \alpha\twiddle \bot$ \cite{casini:beyondratclosure}, and so our language can express classical Boolean statements, as well as defeasible implications. Abusing notation, we use the Boolean formula $\alpha$ as a shorthand for $\neg \alpha\twiddle \bot$. The semantics for $\KLMLang$ is defined below.

\begin{definition} \label{definition:propositional-preferential-interpretation}
   \cite{kraus:nonmonotonic} A \textit{preferential interpretation} over a set of propositional atoms $\mathcal{P}$ is a triple $\I=(W,l,<)$ where $W$ is a (possibly empty) set of states, $l:W\rightarrow 2^{\mathcal{P}}$ is a mapping from the set of states to the set of classical valuations on $\mathcal{P}$, and $<$ is a strict partial order on $W$ such that \textit{for every Boolean formula $\alpha$, the set $\llbracket \alpha \rrbracket:=\{w\in W\mid l(w)\Vdash \alpha\}$  has a minimal element with respect to $<$.}
\end{definition}

    The satisfaction relation $\Vdash$ for a preferential interpretation is defined as follows for Boolean formulas $\alpha$ and $\beta$: $\I\Vdash \alpha\twiddle\beta$ if $l(w)\Vdash \beta$ for all $w\in min_< \llbracket\alpha\rrbracket$; and $\I\Vdash \phi_1\wedge \phi_2$ if $\I\Vdash \phi_1$ and $\I\Vdash \phi_2$. A formula, $\phi\in\KLMLang$ is satisfiable if there exists some preferential interpretation $\I$ with $\I\Vdash \phi$, and this is extended to sets of formulas in the usual way. However, it is trivially the case that every set $A\subseteq\KLMLang$ is satisfiable, since the preferential model $\I$ where the set of states is empty satisfies all formulas in $\KLMLang$. For our purposes, we want to consider a stronger notion of satisfiability, where we only consider non-empty preferential models. We say a set $\A\subseteq\KLMLang$ is \textit{non-trivially satisfiable} if there exists a preferential interpretation $\I=(W,l,<)$ such that $\I\Vdash \phi$ for all $\phi\in\A$ and $W\neq \emptyset$. A foundational result of preferential semantics is their relation to the \emph{KLM postulates}. These are a set of properties, given in Figure \ref{fig:Prop-KLM-postulates} which describe the basic behaviour of the ``$\twiddle$'' connective. The result below allows to establish that any set of KLM-style defeasible implications is closed under the KLM postulates iff it can be represented by a single preferential interpretation.

\begin{figure}
  \small  \centering 
  \[(\textbf{Ref})\ \frac{}{\alpha\twiddle\alpha}\ \ \ \ \ \ (\textbf{LLE})\ \frac{\alpha\equiv\beta, \alpha\twiddle\gamma}{\beta\twiddle \gamma}\ \ (\textbf{RW})\ \frac{\alpha\vDash\beta,\gamma\twiddle\alpha }
    {\gamma\twiddle \beta}\] \[ (\textbf{And})\ \frac{\alpha\twiddle\beta,\alpha\twiddle\gamma}{\alpha\twiddle \beta\wedge \gamma}\  \ \ \ 
(\textbf{Or})\ \frac{\alpha\twiddle\gamma,\beta\twiddle\gamma}{\alpha\vee \beta\twiddle\gamma} \
  \ (\textbf{CM})\ \frac{\alpha\twiddle\beta, \alpha\twiddle\gamma}
    {\alpha\wedge \beta\twiddle\gamma}\ \ \] 
    \caption{KLM Postulates}
    \label{fig:Prop-KLM-postulates}
\end{figure}

\begin{theorem}\cite{kraus:nonmonotonic}\label{theorem:KLM-original-pref-rep-result}
    If a set $\mathcal{A}\subseteq \mathcal{L}^{\twiddle}$ is closed under the KLM postulates, then there is some preferential interpretation $\mathcal{I}$ such that $\phi\in \mathcal{A}$ iff $\mathcal{I}\Vdash \phi$.
\end{theorem}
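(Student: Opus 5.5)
We follow the original construction of Kraus, Lehmann and Magidor, adapted to the slightly richer language $\KLMLang$ with conjunctions. Since $\I\Vdash\phi_1\wedge\phi_2$ iff $\I\Vdash\phi_1$ and $\I\Vdash\phi_2$, and we read closure of $\A$ under the postulates as also including closure under conjunction introduction and elimination, it suffices to treat atomic conditionals $\alpha\twiddle\beta$. So the statement reduces to: given the consequence relation $\twiddle_\A=\{(\alpha,\beta)\mid \alpha\twiddle\beta\in\A\}$ closed under Ref, LLE, RW, And, Or and CM, build $\I$ with $\I\Vdash\alpha\twiddle\beta$ iff $(\alpha,\beta)\in\twiddle_\A$. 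Throughout we assume finitely many atoms, so Boolean formulas are finite up to equivalence.

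First we derive the standard auxiliary rules from the postulates: Cut ($\alpha\twiddle\beta,\ \alpha\wedge\beta\twiddle\gamma$ yield $\alpha\twiddle\gamma$), the deduction-style rule S ($\alpha\wedge\beta\twiddle\gamma$ yields $\alpha\twiddle\beta\rightarrow\gamma$), and the equivalence and cumulativity facts. With these we define, for each $\alpha$, the set $C(\alpha)=\{\beta\mid\alpha\twiddle\beta\}$. By And and RW it is a deductively closed theory, hence (finite language) axiomatised by a single formula $c(\alpha)$. States are pairs $(m,\alpha)$ where $m$ is a valuation with $m\Vdash c(\alpha)$ and $m$ is \emph{normal} for $\alpha$. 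We let $l(m,\alpha)=m$.

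The order is the KLM one. Write $\alpha\leq\beta$ if $\alpha\vee\beta\twiddle\alpha$; this relation is reflexive and, via Or, CM and Cut, transitive on the relevant formulas. Set $(m,\alpha)<(n,\beta)$ iff $\alpha\leq\beta$, $m\nVdash\beta$ is not required, and more precisely $\beta\not\leq\alpha$ or the ranking-by-formulas comparison is strict. We follow KLM's cumulative-model construction: first build a cumulative model whose states are the theories $C(\alpha)$, and then refine it into a preferential one using Or. The key lemma to prove is the following: for every $\alpha$ that is consistent in the sense $\alpha\not\twiddle\bot$, the minimal elements of $\states{\alpha}$ are exactly the states $(m,\gamma)$ with $\gamma\equiv\alpha$ (up to the order) and $m\Vdash c(\alpha)$. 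Given this lemma, the satisfaction condition gives $\I\Vdash\alpha\twiddle\beta$ iff every model of $c(\alpha)$ satisfies $\beta$, iff $\beta\in C(\alpha)$. When $\alpha\twiddle\bot$, we need $\states{\alpha}$ to contain no state, so that $\alpha\twiddle\beta$ holds vacuously. This case is handled by Ref, RW and Or, which ensure that no state $(m,\gamma)$ with $m\Vdash\alpha$ exists.

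The main obstacle is proving that $<$ is a strict partial order with respect to which every $\states{\alpha}$ is smooth (has minimal elements), and establishing the key lemma. Transitivity of $\leq$ needs a careful combination of Or and Cut, and it is where CM is essential. Smoothness follows from finiteness of the language once the order is known to be irreflexive and transitive. I would first attempt this by proving the KLM lemma that $\alpha\leq\beta$ and $\beta\leq\gamma$ imply $\alpha\leq\gamma$. If the direct order on pairs fails to be strict, the fallback is to take the quotient by $\leq$-equivalence and use the resulting strict part. Alternatively one could simply invoke the published KLM representation theorem for preferential relations and observe that conjunction in $\KLMLang$ adds nothing beyond the reduction in the first paragraph.
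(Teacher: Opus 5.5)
Your closing remark (cite the KLM representation theorem and handle conjunctions componentwise, reading closure of $\mathcal{A}$ as including conjunction introduction and elimination) matches the paper, which simply cites Kraus, Lehmann and Magidor. That route is fine.

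The self-contained construction you offer as the main argument, however, has a genuine gap: the order on states is never defined, and your hint points the wrong way. You say $m\nVdash\beta$ ``is not required'' and fall back on comparing formula components only: $(m,\alpha)<(n,\beta)$ iff $\alpha\le\beta$ and $\beta\not\le\alpha$ (or its quotient by $\le$-equivalence). This fails for non-rational relations, as the following example shows.

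\begin{itemize}
\item Take atoms $p,q$. Let $\mathcal{A}$ be the theory of the interpretation with states $a,b,c$, labelled $\emptyset,\{p,q\},\{q\}$, and the single comparison $a<c$.
\item Then $q\twiddle p\notin\mathcal{A}$, since $c$ is a minimal $q$-state. The normal worlds of $\top$ are $\emptyset$ and $\{p,q\}$.
\item In your structure $\top\le\beta$ for every $\beta$. So the $q$-state $(\{p,q\},\top)$ lies below every state whose formula is strictly above $\top$.
\item Every other state $(m,\beta)$ has $\beta\le\top$, i.e.\ $\top\twiddle\beta$. By cumulativity, $m$ is then a normal world of $\top$.
\item Hence every minimal $q$-state is labelled $\{p,q\}$, and your interpretation satisfies $q\twiddle p$.
\end{itemize}

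The clause you discard is exactly KLM's: $\langle m,\alpha\rangle\prec\langle n,\beta\rangle$ iff $\alpha\le\beta$ and $m\nVdash\beta$. With it, irreflexivity is immediate, because Ref gives $m\Vdash\alpha$. Every $\langle m,\alpha\rangle$ is also automatically minimal in $\llbracket\alpha\rrbracket$, since anything below it has a label falsifying $\alpha$. Transitivity needs, besides transitivity of $\le$, the lemma that $\alpha\le\beta\le\gamma$, $m$ normal for $\alpha$ and $m\Vdash\gamma$ together imply $m\Vdash\beta$.

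Your key lemma is also misstated. Minimal $\alpha$-states need not have a formula component $\le$-equivalent to $\alpha$. Under the correct order, if $m$ is a normal world of $\top$ with $m\Vdash\alpha$, then $\langle m,\top\rangle$ is minimal in $\llbracket\alpha\rrbracket$. Yet $\top$ and $\alpha$ are $\le$-equivalent only when $\top\twiddle\alpha$.

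What you need, and what holds, concerns labels only. The labels of the minimal elements of $\llbracket\alpha\rrbracket$ are exactly the normal worlds of $\alpha$, and every element of $\llbracket\alpha\rrbracket$ lies above such a minimal element. Given $\langle n,\beta\rangle$ with $n\Vdash\alpha$, there are two cases.
\begin{itemize}
\item Some normal world $m$ of $\alpha\vee\beta$ satisfies $\alpha\wedge\neg\beta$. Then $\langle m,\alpha\vee\beta\rangle\prec\langle n,\beta\rangle$, because $\alpha\vee\beta\le\beta$. This state is minimal even in $\llbracket\alpha\vee\beta\rrbracket$. Moreover $m$ is normal for $\alpha$ by the derived rule ``from $\alpha\twiddle\gamma$ infer $\alpha\vee\beta\twiddle\alpha\rightarrow\gamma$'', obtained from Ref, RW, Or and LLE.
\item Otherwise $\alpha\vee\beta\twiddle\beta$. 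By cumulativity and the same derived rule, $n$ is normal for $\alpha$. The transitivity lemma then shows that $\langle n,\beta\rangle$ is itself minimal.
\end{itemize}
This also handles $\alpha\twiddle\bot$ uniformly: there are no normal worlds, so $\llbracket\alpha\rrbracket=\emptyset$. Note that this case uses CM and Cut, not only Ref, RW and Or.
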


We work under the assumption that defeasibility is added on top of classical propositional logic, and that any set of formulas which is considered closed under the KLM postulates is also closed under classical propositional deduction rules.


\section{Defeasible Beliefs in Standpoint Logics}{\label{section:preferential-DRSL}}

Standpoint logics are a family of modal logics used to describe scenarios involving different viewpoints which may hold conflicting beliefs. For each such standpoint $s$, one introduces modal operators $\Box_s$ and $\Diamond_s$. Then, $\Box_s\phi$ and $\Diamond_s\phi$  read as \textit{``it is unequivocal to $s$ that $\phi$''} and \textit{``it is possible to $s$ that $\phi$''}, respectively. In this section, we consider an extension of propositional standpoint logic, as defined by Gòmez Álvarez and Rudolph \cite{alvarezrudolph:propositionalstdpt}. In particular, we apply standpoint modalities to KLM-style defeasible implications by introducing a logic with formulas of the form $\Box_s(\alpha\twiddle \beta)$ or $\Diamond_s(\alpha\twiddle \beta)$. This represents the case where a standpoint holds a defeasible belief, and these sentences should be read as \textit{``it is unequivocal to $s$ that $\alpha$ usually implies $\beta$''} and \textit{``it is possible to $s$ that $\alpha$ usually implies $\beta$''}. In this section, we describe the syntax and semantics for DRSL, and show that this can be characterised through an extended system of postulates inspired by the original KLM postulates. We restrict the syntax to only allow for conjunction on the outermost level. This restriction is made with the spirit of propositional KLM in mind, in which disjunctions and negations of defeasible statements are not expressible. Such a restriction is also needed to facilitate the representation and complexity results found in the remainder of the paper.

\begin{definition} \cite{LMR2024}\label{definition:DRSL-language}
   A \textit{vocabulary} is a pair $\mathcal{V}=(\mathcal{P},\mathcal{S})$ where $\mathcal{P}$ is a finite set of propositional atoms and $\mathcal{S}$ is a finite set of standpoint symbols. The language of  \textit{Defeasible Restricted Standpoint Logic (DRSL)} $\mathcal{L}^{\twiddle}_\mathbb{S}$ is defined as follows,
            \[\psi::=\phi\mid \Box_s \psi\mid \Diamond_s \psi \mid \psi\wedge \psi\text{ or }\psi::=s\preceq t\]
			where $s,t\in \mathcal{S}\cup \{*\}$ and $\phi\in\mathcal{L}^{\twiddle}$ with atoms in $\mathcal{P}$.
\end{definition}

Statements of the form $s\preceq t$ are referred to as \textit{standpoint sharpening} statements, and intuitively, they say that $s$ is a more specific version of $t$'s viewpoint. The $*$ symbol is referred to as the \textit{universal standpoint} and represents the beliefs which all standpoints agree upon. The semantics for this are given by \textit{preferential standpoint structures}, which again are built on top of preferential interpretations given in propositional defeasible reasoning. 

\begin{definition}\cite{LMR2024} \label{definition:preferential-standpoint-structure}
    A \textit{preferential standpoint structure} is a triple $M=(\Pi,\sigma,\tau)$, where:
    \begin{enumerate}
        \item $\Pi$ is a non-empty set of \emph{precisifications} (possible worlds).
        \item $\sigma:\mathcal{S}\cup\{*\}\rightarrow2^\Pi$ is a map which assigns a non-empty set of precisifications to each standpoint, such that $\sigma(*)=\Pi$.
        \item  $\tau:\Pi\rightarrow \mathbb{I}$ is function where $\mathbb{I}$ is the set of preferential interpretations over the set $\mathcal{P}$.  That is, $\tau$ is a map which assigns to each precisification a preferential interpretation.
    \end{enumerate}
\end{definition}

	\begin{definition}\cite{LMR2024} \label{definition:satisfaction-preferential-standpoint-structure}
			Given a preferential standpoint structure $M$ and a precisification $\pi\in \Pi$, the satisfaction relation $\Vdash$ is defined as followed (where $\phi\in\mathcal{L}^{\twiddle}$, $\psi\in\mathcal{L}^{\twiddle}_{\mathds{S}}$ and $s,t\in\mathcal{S}\cup \{*\}$):
			\begin{itemize}
				\item $M, \pi \Vdash \phi$ iff $\tau(\pi)\Vdash \phi$, 
				\item $M, \pi \Vdash \Box_s \psi$ iff $M,\pi' \Vdash \psi$ for all $\pi'\in \sigma(s)$,
				\item $M, \pi \Vdash \Diamond_s \psi$ iff $M,\pi' \Vdash \psi$ for some $\pi'\in \sigma(s)$,
                \item $M,\pi\Vdash \psi_1\wedge \psi_2$ iff $M,\pi\Vdash \psi_1$ and $M,\pi\Vdash \psi_2$,
                \item $M,\pi\Vdash s\preceq t$ iff $\sigma(s)\subseteq \sigma (t)$,
                \item $M\Vdash \psi$ iff $M,\pi\Vdash \psi$ for all $\pi\in\Pi$.
			\end{itemize} 

 \end{definition}


Intuitively, $\sigma$ assigns to each standpoint $s$ a set of reasonable ways to understand $s$'s set of beliefs, where each $\pi\in \sigma(s)$ denotes one specific fixed way to understand $s$'s beliefs, via the preferential interpretation $\tau(\pi)$. These are closely derived from the semantics for classical propositional standpoint logic, where each $\tau(\pi)$ is a classical valuation, rather than a preferential interpretation \cite{alvarezrudolph:propositionalstdpt}. 

\begin{definition}\label{definition:satisfiable-}
    A set of DRSL formulas $\A\subseteq \DRSLLang$ is \textit{satisfiable} if there exists some preferential standpoint structure $M=(\Pi,\sigma,\tau)$ in which, for all $\pi\in \Pi$ the set of states in $\tau(\pi)$ is non-empty, and such that 
    $M\Vdash \phi$ for all $\phi\in\A$, 
\end{definition}

This is stronger than satisfiability in the propositional KLM setting, and is closer to non-trivial satisfiability for propositional KLM. In particular, every set of defeasible implications in $\KLMLang$ is satisfiable, while not every set in $\DRSLLang$ is satisfiable.

\begin{example}\label{example:tomato-model}
    Consider the ``tomato'' knowledge base $\KB_{T}$ from Example \ref{example:originaltomatoes}, with the propositional atoms shortened:

\begin{center}
    \small$\KB_{T}=\{\Box_B(t\rightarrow v),\Box_B(f\rightarrow v), \Box_C(v\leftrightarrow s_a),   \Box_C(f\leftrightarrow s_w),$

    $\Box_C(t\twiddle s_a), \Box_{C}((f\twiddle \neg v)\wedge (v\twiddle \neg f)), L\preceq C, \Box_L(v\rightarrow\neg f) \}$
        
\end{center}

\normalsize We exhibit a preferential standpoint structure $M=(\Pi,\sigma,\tau)$ which satisfies $\KB_T$ as follows: Let $\Pi=\{\pi_1,\pi_2,\pi_3\}$ where $\tau(\pi_1)$ and $\tau(\pi_2)$ are preferential interpretations each with a single state which have underlying valuations of $\{t,f,v\}$ and $\{t,v,s_a\}$ respectively. Let $\tau(\pi_3)=(W,l,<)$ where $W$ has 3 states $s_1$, $s_2$ and $s_3$; ${<}=\{(s_1,s_3),(s_2,s_3)\}$; and $l(s_1)=\{f,s_w\}$ $l(s_2)=\{v,s_a\}$, $l(s_3)=\{t,f,s_w,v,s_a\}$. Then defining $\sigma$ by $\sigma(B)=\{\pi_1\}$, $\sigma(L)=\{\pi_2\}$ and $\sigma(C)=\{\pi_2,\pi_3\}$ we see that $M$ satisfies $\KB_T$.
\end{example}


We now describe a set of proof-theoretic postulates which characterise preferential semantics for DRSL. As with KLM and propositional standpoint logic, we operate under the assumption that our logic is built on top of Boolean propositional logic. We therefore accept that any axioms or rules of inference which hold in Boolean propositional logic hold in our logic. Besides this, the postulates we give here can be broadly split into two groups. The first of these is an adapted set of the modal axioms which characterise standpoint modalities. In the classical case standpoint logic acts as a multimodal variant of \textbf{KD45} with additional axioms tailored for standpoint sharpenings. $\Box_*$ acts as an \textbf{S5} modality, and therefore has additional properties \cite{alvarezrudolph:propositionalstdpt}. In our setting, we include an adapted version of the original axioms for standpoint modalities, which are sufficient to cover the restrictions in our syntax. These axioms rephrased in terms of Gentzen-style rules in order to be expressible in our restricted version of standpoint logic, which does not allow for unrestricted material implication or disjunction. Furthermore, the restriction of negation means $\Box_s$ and $\Diamond_s$ must be treated separately, and not simply as duals of each other; although semantically they behave as duals. The derived modality postulate for DRSL are given in Figure~\ref{fig:DRSL-modal-rules}. They are named directly after the original classical standpoint logic axioms from which they are derived \cite{alvarezrudolph:propositionalstdpt}, with some of the original modal axioms requiring two or more rules in our setting of restricted expressivity (for example, \textbf{P.a} and \textbf{P.b}). Additional postulates describe how modalities distribute over conjuncts, and govern the behaviour of standpoint sharpenings. Besides these modal rules which govern the external structure between precisifications in the semantics, we also require rules which describe how the preferential interpretations underlying the precisifications behave. In order to describe this, we adapt the original KLM postulates, as given by Kraus et al.~\cite{kraus:nonmonotonic}, into the standpoint logic modal case. These are given in Figure \ref{fig:DRSL-KLM-postulates}. From each original KLM postulate, we derive a pair of postulates which correspond to preferential reasoning which is both global or local to a given standpoint. Each ``a.'' postulate describes how the KLM postulates apply when defeasible beliefs hold throughout an entire standpoint, while each ``b.'' postulate describes conclusions reached in a specific possible precisification for a standpoint. That is, in each rule we cumulatively add a new conjunct to the premise, ultimately obtaining a large $\Diamond$-bound conjunction of defeasible implications. This is done in order to collect every conclusion valid at a possible precisification, and make sure that we do not lose any conclusions which may be derived relative to the same possible world.

\begin{figure}
   \small \centering 
  \[(\textbf{RN})\ \frac{\phi}{\Box_s\phi}\ \ \ \ \ \  (\textbf{K.a.})\frac{\Box_s(\phi\rightarrow \psi),\Box_s \phi}{\Box_s\psi} \ \ \ (\textbf{K.b.})\ \frac{\Box_s\phi, \Diamond_s\psi}{\Diamond_s(\phi\wedge \psi)}\ \ \ \
  \ (\textbf{K.c.})\frac{\Diamond_s((\phi\rightarrow \psi)\wedge \phi \wedge \Gamma)}{\Diamond_s(\psi\wedge (\phi\rightarrow \psi)\wedge \phi \wedge \Gamma)} \] \[ 
  (\textbf{5'.})\ \frac{\Diamond_t\Box_s\phi}{\Box_s\phi} \ \ \ \ \ (\textbf{T}^*)\frac{\Box_*\phi}{\phi}\ \ \ (\textbf{P.a.})\frac{s\preceq t, \Box_t \phi}{\Box_s\phi} \ \ \ \ (\textbf{P.b.})\ \frac{s\preceq t, \Diamond_s \phi}{\Diamond_t\phi} \ \ \ \ \ \ (\textbf{D.})\ \frac{\Box_s \phi}{\Diamond_s \phi} \ \ \ \ \ \ \ (\textbf{4'.}) \frac{\Diamond_t\Diamond_s\phi}{\Diamond_s\phi} \] \[ \ (\Box\textbf{-Dist.a.})\ \frac{\Box_s(\phi\wedge \psi)}{\Box_s\phi\wedge \Box_s\psi}\ \ \ \ \ \ (\Box\textbf{-Dist.b.})\ \frac{\Box_s\phi\wedge \Box_s\psi}{\Box_s(\phi\wedge \psi)} \ \ \ (\Diamond\textbf{-Dist.})\ \frac{\Diamond_s(\phi\wedge \psi)}{\Diamond_s\phi\wedge \Diamond_s\psi} \] \[
(\preceq\textbf{-Refl.})\frac{}{s\preceq s}\ \ \ \ (*\textbf{-Top})\frac{}{s\preceq *} \ \ \ \ \ (\preceq\textbf{-Trans.})\frac{s\preceq t, t\preceq u}{s\preceq u} \]
    \caption{Modality Postulates for DRSL}
    \label{fig:DRSL-modal-rules}
\end{figure}

\begin{definition}
    A set $\mathcal{A}\subseteq \mathcal{L}^{\twiddle}_\mathbb{S}$ is \textit{preferentially closed} if it is closed under classical propositional logic and the rules in Figures \ref{fig:DRSL-modal-rules} and \ref{fig:DRSL-KLM-postulates}. That is, if the premises of any of the rules occur in $\mathcal{A}$, then the consequences of the rule are in $\mathcal{A}$. 
\end{definition}

We show that these postulates accurately characterise the semantics of DRSL. We do this by showing our semantics are sound and complete with respect to the rules in Figures \ref{fig:DRSL-modal-rules} and \ref{fig:DRSL-KLM-postulates}. Moreover, we show that any set of DRSL formulas closed under preferential reasoning can be represented by a single preferential standpoint structure. We begin with soundness:

\begin{lemma}\label{theorem:main-soundness}
    Any preferential standpoint structure satisfies the rules given in Figures \ref{fig:DRSL-modal-rules} and \ref{fig:DRSL-KLM-postulates}, and satisfies classical propositional logic.
\end{lemma}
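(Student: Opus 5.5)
The plan is to check each rule separately against Definition \ref{definition:satisfaction-preferential-standpoint-structure}. Throughout, fix an arbitrary preferential standpoint structure $M=(\Pi,\sigma,\tau)$. For each rule we assume its premises are satisfied globally, i.e.\ $M\Vdash\psi$ for every premise $\psi$. We then show that the conclusion is satisfied globally as well. Several clauses of the satisfaction relation are independent of the evaluation point: the ones for $\Box_s$, $\Diamond_s$ and $s\preceq t$. Because of this, most modal rules reduce to set-theoretic facts about $\sigma$. Classical propositional reasoning is sound for a simple reason. A Boolean $\alpha$ abbreviates $\neg\alpha\twiddle\bot$, and $\tau(\pi)\Vdash\neg\alpha\twiddle\bot$ holds exactly when every state of $\tau(\pi)$ satisfies $\alpha$. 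This uses the smoothness condition of Definition \ref{definition:propositional-preferential-interpretation}: if some state satisfied $\neg\alpha$, then $\llbracket\neg\alpha\rrbracket$ would be nonempty and so would have a minimal element. Hence classical consequence is preserved statewise, and therefore at every precisification.

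Next come the modal rules of Figure \ref{fig:DRSL-modal-rules}.
\begin{itemize}
\item (\textbf{RN}): if $\phi$ holds at all $\pi$, it holds on $\sigma(s)\subseteq\Pi$.
\item (\textbf{K.a}), (\textbf{K.b}), (\textbf{K.c}): these follow pointwise, since material implication is Boolean and modus ponens holds at each precisification.
\item (\textbf{D}): uses $\sigma(s)\neq\emptyset$.
\item (\textbf{5'}) and (\textbf{4'}): these hold because $\Box_s\phi$ and $\Diamond_s\phi$ have the same truth value at every $\pi$, and $\sigma(t)\neq\emptyset$.
\item (\textbf{T}$^*$): uses $\sigma(*)=\Pi$.
\item (\textbf{P.a}) and (\textbf{P.b}): use the inclusion $\sigma(s)\subseteq\sigma(t)$.
\item The distribution rules and the $\preceq$ rules: immediate from the conjunction clause and from reflexivity and transitivity of $\subseteq$, together with $\sigma(s)\subseteq\Pi=\sigma(*)$.
\end{itemize}

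The KLM-derived postulates of Figure \ref{fig:DRSL-KLM-postulates} are handled by reducing to Theorem \ref{theorem:KLM-original-pref-rep-result}, or more precisely to the well-known soundness of the KLM rules for each single preferential interpretation $\tau(\pi)$. For an ``a.'' postulate, the premises hold at every $\pi'\in\sigma(s)$, so the propositional rule applied within $\tau(\pi')$ gives the conclusion at each such $\pi'$. For a ``b.'' postulate, the premise is a $\Diamond_s$-bound conjunction $\Diamond_s(\phi_1\wedge\dots\wedge\phi_n)$, so there is a single witness $\pi'\in\sigma(s)$ satisfying all conjuncts simultaneously. Applying the propositional rule in $\tau(\pi')$ yields the new conjunct at that same witness. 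The witness then satisfies the enlarged conjunction, which is the conclusion.

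The main obstacle I expect is this ``b.'' family. I must check that every premise really lives inside the same $\Diamond_s$ scope, as the cumulative format is designed to ensure. A rule whose premises were two separate $\Diamond_s$ formulas would not be sound, since they might have different witnesses. So the verification requires reading each rule's exact premise shape carefully. A second subtle point is (\textbf{Ref}), and any rule producing $\alpha\twiddle\alpha$-type consequences under $\Box_s$, in the presence of empty interpretations. This is harmless, because preferential satisfaction is sound even for empty state sets; non-emptiness is only needed for satisfiability, not for soundness.
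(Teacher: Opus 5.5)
Your proposal is correct and follows essentially the paper's route: a rule-by-rule semantic check in which the modal rules reduce to facts about $\sigma$ (nonemptiness, $\sigma(*)=\Pi$, and $\sigma(s)\subseteq\sigma(t)$ for \textbf{P.a}/\textbf{P.b}), and the KLM-style rules reduce to propositional KLM soundness inside each $\tau(\pi)$, which is how the paper treats its two worked cases \textbf{P.a} and \textbf{LLE.a}, while your single-witness argument covers the ``b.'' rules the paper leaves as ``similar''. Two small points, which the paper also glosses over: inside $\tau(\pi')$ the side conditions of \textbf{LLE}/\textbf{RW} hold only locally (e.g.\ $\tau(\pi')\Vdash\alpha\leftrightarrow\beta$, which still gives $\llbracket\alpha\rrbracket=\llbracket\beta\rrbracket$), and \textbf{CM} needs KLM's full smoothness condition, not just the existence of minimal elements as literally stated in Definition~\ref{definition:propositional-preferential-interpretation}.
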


We now show the completeness result that says any set of DRSL sentences which is preferentially closed can be characterised by some preferential standpoint structure. In order to show this theorem holds, we utilise the original representation result given in Theorem \ref{theorem:KLM-original-pref-rep-result} for preferential consequence relations in the propositional case. We introduce a lemma which allows us to only consider DRSL formulas in some normal form. That is, we show that any DRSL formula (which is not a standpoint sharpening) can be reduced to an equivalent formula in this normal form, and the reduction of a formula to normal form can be done through applying the postulates and through semantic means.

 \begin{figure}
   \small \centering 
  \[(\textbf{Ref.})\ \frac{}{\alpha\twiddle\alpha}\ \ \ \ \ \ (\textbf{LLE.a.})\ \frac{\Box_s(\alpha\leftrightarrow\beta), \Box_{s}(\alpha\twiddle\gamma)}{\Box_s(\beta\twiddle \gamma)}\ \ \ \ 
(\textbf{LLE.b.})\ \frac{\Diamond_{s}((\alpha\leftrightarrow\beta)\wedge (\alpha\twiddle\gamma)\wedge \Gamma)}{\Diamond_{s}((\beta\twiddle \gamma)\wedge \Gamma')} \]
\hspace{2pt}
  \[ (\textbf{RW.a.})\ \frac{\Box_s(\alpha\rightarrow\beta),\Box_{s}(\gamma\twiddle\alpha) }
    {\Box_s(\gamma\twiddle \beta)}\ \ \ \ \ \ 
(\textbf{RW.b})\ \frac{\Diamond_{s}((\alpha\rightarrow\beta)\wedge (\gamma\twiddle\alpha)\wedge \Gamma)}
    {\Diamond_s((\gamma\twiddle \beta)\wedge \Gamma')} \]
  \hspace{2pt}
  \[ (\textbf{And.a.})\ \frac{\Box_s(\alpha\twiddle\beta),\Box_{s}(\alpha\twiddle\gamma)}{\Box_s(\alpha\twiddle (\beta\wedge \gamma))}\ \ \ \ \ \ 
(\textbf{And.b.})\ \frac{\Diamond_{s}((\alpha\twiddle\beta)\wedge (\alpha\twiddle\gamma)\wedge \Gamma)}{\Diamond_s((\alpha\twiddle (\beta\wedge \gamma))\wedge  \Gamma')} \]
\hspace{2pt}
  \[ (\textbf{Or.a.})\ \frac{\Box_s(\alpha\twiddle\gamma),\Box_{s}(\beta\twiddle\gamma)}
{\Box_s((\alpha\vee \beta)\twiddle\gamma)}\ \ \ \ \ \ 
(\textbf{Or.b.})\ \frac{\Diamond_{s}((\alpha\twiddle\gamma)\wedge (\beta\twiddle\gamma)\wedge \Gamma)}{\Diamond_s(((\alpha\vee \beta)\twiddle\gamma)\wedge \Gamma')} \]
  \[ (\textbf{CM.a.})\ \frac{\Box_s(\alpha\twiddle\beta), \Box_{s}(\alpha\twiddle\gamma)}
    {\Box_s((\alpha\wedge \beta)\twiddle\gamma)}\ \ \ \ \ \ 
(\textbf{CM.b.})\ \frac{\Diamond_{s}((\alpha\twiddle\beta)\wedge (\alpha\twiddle\gamma)\wedge \Gamma)}{\Diamond_s(((\alpha\wedge \beta)\twiddle\gamma)\wedge \Gamma')}\]

    \caption{KLM Postulates for DRSL. Here, $\Gamma'$ is used as shorthand to denote the original diamond-bound sentence occurring in the premise of each rule.}
    \label{fig:DRSL-KLM-postulates}
\end{figure}

\begin{definition}\cite{LMR2024}
    A DRSL formula $\phi$ is in \textit{normal form} if it is in the form
    \[\phi=\bigwedge^n_{i=1}\phi_i\]
    where for each $i\in\{1,...,n\}$ either (a) $\phi_i\in \mathcal{L}^{\twiddle}$, (b) $\phi_i=\Box_s\psi$, or (c) $\phi_i=\Diamond_s\psi$ for some  $\psi\in \mathcal{L}^{\twiddle}$ and $s\in\mathcal{S}$. That is, $\phi$ is a conjunction of propositional KLM formulas bound by at most one standpoint modality.
\end{definition}

 Leisegang et al.~\cite{LMR2024} show that each DRSL formula has a semantically equivalent formula in normal form.

\begin{lemma}\label{corollary:semantic-normal-form}
    For any $\phi\in \DRSLLang$ which is not a standpoint sharpening, there exists a formula $\phi'\in\DRSLLang$ in normal form such that, for any preferential standpoint structure $M$, we have $M\Vdash \phi$ iff $M\Vdash \phi'$.
\end{lemma}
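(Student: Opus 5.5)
I will prove the statement by structural induction on $\phi$, establishing the stronger claim that for every $\phi\in\DRSLLang$ that is not a standpoint sharpening there is a normal-form $\phi'$ with $M,\pi\Vdash\phi$ iff $M,\pi\Vdash\phi'$ for every preferential standpoint structure $M$ and every $\pi\in\Pi$. The global statement then follows at once, since $M\Vdash\psi$ means $M,\pi\Vdash\psi$ for all $\pi$. Working pointwise is what makes the induction go through, because the modal clauses are defined pointwise.

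The key semantic observation is that a formula whose conjuncts are all modal, i.e.\ of the form $\Box_t\chi$ or $\Diamond_t\chi$, has a truth value at $\pi$ that does not depend on $\pi$. This holds because $\sigma(t)$ is a fixed set, independent of the world of evaluation.

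\textbf{Base case and conjunction.} If $\phi\in\KLMLang$, it is already in normal form. If $\phi=\psi_1\wedge\psi_2$, I first note that a sharpening $s\preceq t$ is never a subformula of a non-sharpening formula, by the grammar. So I can apply the induction hypothesis to both conjuncts and take the conjunction of the two resulting normal forms, which is again in normal form.

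\textbf{Modal cases.} Let $\phi=\Box_s\psi$, where by induction $\psi\equiv\bigwedge_i\psi_i$ in normal form. I partition the conjuncts into a propositional part $\chi$ (the conjunction of all $\psi_i\in\KLMLang$, itself in $\KLMLang$ since $\KLMLang$ is closed under $\wedge$) and modal conjuncts $\mu_j$. The truth conditions give
\[M,\pi\Vdash\Box_s\psi \text{ iff } \forall\pi'\in\sigma(s):\ M,\pi'\Vdash\chi \text{ and } M,\pi'\Vdash\mu_j \text{ for all } j.\]
Since each $\mu_j$ is world-independent and $\sigma(s)\neq\emptyset$, this is equivalent to $\Box_s\chi\wedge\bigwedge_j\mu_j$, which is in normal form. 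Nonemptiness of $\sigma(s)$ is exactly what is needed here, as with $\sigma(s)=\emptyset$ the box would be vacuously true.

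For $\phi=\Diamond_s\psi$ the argument is analogous: the existential ranges over a single $\pi'$, so $\Diamond_s(\chi\wedge\bigwedge_j\mu_j)$ is equivalent to $\Diamond_s\chi\wedge\bigwedge_j\mu_j$. Again world-independence lets the $\mu_j$ move outside, and nonemptiness of $\sigma(s)$ ensures $\Diamond_s\chi$ is well-behaved.

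\textbf{Degenerate cases and the universal standpoint.} If the propositional part $\chi$ is empty, I use $\Box_s\top$ (that is, $\Box_s(\bot\twiddle\bot)$, which is always true), or simply drop the modal wrapper. For $\Diamond_s$ with empty $\chi$, nonemptiness of $\sigma(s)$ lets me drop $\Diamond_s$ entirely. The definition of normal form restricts $s$ to $\mathcal{S}$, whereas the language allows $*$. I will treat $*$ as admissible in normal form, or note that $\sigma(*)=\Pi$ is likewise nonempty, so the same argument applies.

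The main obstacle is the bookkeeping of these degenerate cases together with proving world-independence cleanly. I plan to establish world-independence as a separate small claim, by a direct check of the $\Box$ and $\Diamond$ clauses, before running the induction.
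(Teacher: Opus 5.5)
Your proof is correct, but it takes a different route from the paper. The paper does not prove this lemma itself; it cites Leisegang et al. Within the paper, the statement is instead obtained from the proof-theoretic normal-form lemma, with semantic equivalence inherited from the soundness lemma. That argument rewrites top-down inside a preferentially closed set:
\begin{itemize}
\item \textbf{$\Box$-Dist} splits boxed conjunctions;
\item \textbf{D} together with \textbf{4'}/\textbf{5'} collapses nested modalities;
\item \textbf{$\Diamond$-Dist} pulls modal conjuncts out of a diamond while keeping the propositional ones together.
\end{itemize}

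You instead argue directly in the semantics, bottom-up, with a pointwise invariant and one key fact: modal formulas are world-independent. That fact is exactly the semantic content of \textbf{4'}, \textbf{5'} and the diamond-distribution step, while nonemptiness of $\sigma(s)$ plays the role of \textbf{D}.

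Your version has two advantages. It is self-contained and does not pass through the soundness lemma, which the paper verifies only for two representative rules. It also makes explicit why the induction must be pointwise: global equivalence is not preserved under a modality. For example, $\psi$ and $\Box_*\psi$ are globally equivalent, but $\Diamond_s\psi$ and $\Diamond_s\Box_*\psi$ are not.

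The paper's route gives something yours does not: $\phi$ and $\phi'$ are interderivable in every preferentially closed set, which is what the representation theorem needs.

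One small correction. Admitting $*$ in normal form is not one of two alternatives; it is necessary. Deleting a precisification that lies in no $\sigma(s)$ with $s\in\mathcal{S}$ preserves the truth of every $*$-free normal-form formula, but it can falsify $\Diamond_* p$. So the second option you give (noting that $\sigma(*)=\Pi$ is nonempty) only shows that your argument also works for $*$. It does not let you avoid allowing $*$ in the normal form.
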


The following result shows that a formula in normal form can be generated via the rules in Figures \ref{fig:DRSL-modal-rules} and \ref{fig:DRSL-KLM-postulates}. Furthermore, since we have shown soundness for each of our postulates, it follows that the formula in normal form obtained through applying the postulates is a formula which is equivalent in the semantics.

\begin{lemma}\label{lemma:proof-theoretic-normal-form}
   For any $\phi\in \DRSLLang$ such that $\phi$ is not a standpoint sharpening, there exists some $\phi'\in\DRSLLang$ in normal form such that  for any preferentially closed set $\mathcal{A}\subseteq \DRSLLang$, $\phi\in\mathcal{A}$ iff $\phi'\in\mathcal{A}$. 
\end{lemma}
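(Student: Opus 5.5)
We argue by structural induction on $\phi$, defining the normal form $\phi'$ explicitly and checking, for each syntactic constructor, that the rules of Figures~\ref{fig:DRSL-modal-rules} and~\ref{fig:DRSL-KLM-postulates} give derivations in both directions between $\phi$ and $\phi'$. A preferentially closed $\mathcal{A}$ contains every formula derivable from its members. So two-way derivability gives $\phi\in\mathcal{A}$ iff $\phi'\in\mathcal{A}$ for every such $\mathcal{A}$.

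The base case $\phi\in\KLMLang$ is already in normal form; take $\phi'=\phi$. For a conjunction $\psi_1\wedge\psi_2$, closure under classical propositional logic gives $\psi_1\wedge\psi_2\in\mathcal{A}$ iff both conjuncts are in $\mathcal{A}$. We therefore take $\phi'=\psi_1'\wedge\psi_2'$ and apply the induction hypothesis.

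The modal cases carry the substance. We first treat $\Box_s\psi$ with $\psi$ in normal form $\bigwedge_i\psi_i$.
\begin{itemize}
\item By $\Box$-Dist.a/b, $\Box_s\bigwedge_i\psi_i$ is interderivable with $\bigwedge_i\Box_s\psi_i$. It then suffices to normalise each $\Box_s\psi_i$ separately.
\item If $\psi_i\in\KLMLang$, then $\Box_s\psi_i$ is already a normal conjunct.
\item If $\psi_i=\Box_t\chi$, we claim $\Box_s\Box_t\chi$ and $\Box_t\chi$ are interderivable. In one direction, D gives $\Diamond_s\Box_t\chi$, and then 5' gives $\Box_t\chi$. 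In the other direction, RN applied to $\Box_t\chi$ gives $\Box_s\Box_t\chi$.
\item If $\psi_i=\Diamond_t\chi$, the analogous equivalence $\Box_s\Diamond_t\chi\equiv\Diamond_t\chi$ uses D followed by 4' in one direction, and RN in the other.
\end{itemize}
The case $\Diamond_s\psi$ needs the following steps.
\begin{itemize}
\item Split off nested modal conjuncts: $\Diamond_s(\chi\wedge\Box_t\theta\wedge\Diamond_u\eta)$ should be equivalent to $\Diamond_s\chi\wedge\Box_t\theta\wedge\Diamond_u\eta$.
\item From left to right, $\Diamond$-Dist gives the separate diamonds, and 5' and 4' strip the outer $\Diamond_s$ from the modal conjuncts.
\item From right to left, RN gives $\Box_s\Box_t\theta$ and $\Box_s\Diamond_u\eta$. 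We then merge them into the diamond using K.b, possibly after $\Box$-Dist.b.
\item The purely propositional-KLM part $\chi$ stays inside $\Diamond_s$ as a single normal conjunct.
\end{itemize}

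The main obstacle is the right-to-left direction of the $\Diamond$ case. It requires reassembling a single diamond-bound conjunction from pieces, and the only tool available is K.b, which combines one $\Box_s$ with one $\Diamond_s$. We must also respect that our logic has no material implication, so RN must be applied to formulas in the restricted language. The check is that RN applies to arbitrary $\phi\in\DRSLLang$ (its statement places no restriction), and that K.b likewise accepts arbitrary DRSL formulas. Ordering the conjuncts suitably, repeated application of K.b then rebuilds the diamond. Finally, since the whole construction uses only sound rules, Lemma~\ref{theorem:main-soundness} confirms that $\phi'$ is also semantically equivalent to $\phi$, consistent with Lemma~\ref{corollary:semantic-normal-form}.
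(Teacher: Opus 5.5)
Your choice of rules matches the paper's proof (\textbf{$\Box$-Dist.a/b}; \textbf{D} with \textbf{5'}/\textbf{4'} in one direction and \textbf{RN} in the other; \textbf{$\Diamond$-Dist} with \textbf{5'}/\textbf{4'} in one direction and \textbf{RN} with \textbf{K.b} in the other). However, the induction as you set it up has a gap. In the $\Box$ case you assume $\psi$ is already in normal form, which silently replaces $\psi$ by $\psi'$ underneath $\Box_s$. Your induction hypothesis only says that $\psi$ and $\psi'$ lie in the same preferentially closed sets, and that relation is not a congruence for $\Box_s$ in this calculus. The reason is that \textbf{RN} necessitates arbitrary members of $\mathcal{A}$, so consequence is global. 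Moreover, no rule rewrites inside a modality: \textbf{K.a} would need an implication between DRSL formulas, which the restricted language lacks. For instance, $p$ and $p\wedge\Box_t p$ lie in exactly the same preferentially closed sets, by \textbf{RN}. Now take $M$ with $\sigma(s)=\{\pi_1\}$, $\sigma(t)=\{\pi_2\}$, $\tau(\pi_1)\Vdash p$ and $\tau(\pi_2)\nVdash p$. The set $\{\phi\mid M\Vdash\phi\}$ is preferentially closed by Lemma~\ref{theorem:main-soundness}, and it contains $\Box_s p$ but not $\Box_s(p\wedge\Box_t p)$. So the passage from $\Box_s\psi$ to $\Box_s\psi'$ is unjustified as stated.

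The paper avoids this by rewriting top-down. It only ever transforms the outermost modality of a top-level conjunct, so every step has the form ``$X\in\mathcal{A}$ iff $Y_1,\dots,Y_m\in\mathcal{A}$'', and nothing is ever replaced under a modality. It then recurses on the smaller $Y_j$. Your bullets already support this if you drop the normal-form assumption and induct on formula size:
\begin{itemize}
\item $\Box_s(\psi_1\wedge\psi_2)$ splits by \textbf{$\Box$-Dist} into the smaller formulas $\Box_s\psi_1$ and $\Box_s\psi_2$;
\item $\Box_s\Box_t\chi$ and $\Box_s\Diamond_t\chi$ reduce to $\Box_t\chi$ and $\Diamond_t\chi$ for arbitrary $\chi$, and the hypothesis then applies to these.
\end{itemize}
This is exactly what your $\Diamond$ case already does with $\Box_t\theta$ and $\Diamond_u\eta$. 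There is one small omission in that case. If the diamond-bound conjunction has no $\KLMLang$ conjunct (e.g.\ $\Diamond_s\Box_t\theta$), \textbf{K.b} has no $\Diamond_s$ premise to merge into. The right-to-left direction then needs \textbf{RN} followed by \textbf{D}, as the paper does.
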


In our following results, we therefore assume without losing generality that any DRSL formula considered is in normal form. We utilise this in the following results leading up to our main representation theorem. In our representation result, we aim to take a preferentially closed set of DRSL statements $\A$ and show that there is a preferential standpoint structure $M$ which satisfies formulas iff they are in $\A$. In order to do this, we induce sets of formulas in $\KLMLang$ based on the formulas in $\A$ which are closed under propositional preferential reasoning. 

\begin{definition}\label{definition:As-derived-set}
    Given a set $\A\subseteq \DRSLLang$ and a standpoint symbol $s\in \mathcal{S}$, the derived set $\As\subseteq\KLMLang$ is given by 
    \[\As:=\{\phi\in\KLMLang\mid\Box_s\phi\in\A\}\]
\end{definition}

This intuitively defines the set defeasible beliefs which necessarily hold for the standpoint $s$. We additionally need to consider sets in $\KLMLang$ which consider additional information which is possible, but not necessary for a standpoint. We start with a preliminary definition.

\begin{definition}\label{def:Asipsi-defined-set}
    For a set $\A\subseteq \DRSLLang$ and a formula of the form $\Diamond_s\psi\in\A$ where $\psi\in \KLMLang$ we define a set of $(\psi,s)$-conjuncts in $\A$ as a finite set $C\subseteq\KLMLang$ such that $\psi\in C$ and $\Diamond_s(\bigwedge C)\in\A$. We denote the set of all $(\psi,s)$-conjuncts in $\A$ by $\textbf{Conj}_\A(\psi,s)$.
\end{definition}

This allows us to induce the sets derived from $\A$ which collect the local consequences relative to a formula of the form $\Diamond_s \phi$.

\begin{definition}\label{def:MaxConj-set}
    Suppose $\A\subseteq \DRSLLang$ and $\Diamond_s\psi\in\A$ where $\psi\in \KLMLang$. For each sequence of conjuncts $\omega=(C_i)_{i\in\mathbb{N}}$ in $\textbf{Conj}_\A(\psi,s)$ such that $C_i\subset C_{i+1}$ for all $i\in\mathbb{N}$, we define $\Asipsi$ as the maximal element of this chain. That is,
    \[\Asipsi:=\bigcup^\infty_{i=1}C_i\]

    for a fixed $\Diamond_s\psi\in\A$. We denote the set of all such union as $\textbf{LimConj}_\A(\psi,s)$. We further denote the subset-maximal elements of $\textbf{LimConj}_\A(\psi,s)$ as $\textbf{MaxConj}_\A(\psi,s)$.
\end{definition}

That is, if $X\in \textbf{MaxConj}_\A(\psi,s)$, then there is no $X'\in \textbf{LimConj}_\A(\psi,s)$ such that $X\subset X'$. Note that, in general we are guaranteed the existence a non-trivial infinite chain of conjuncts for each $\Diamond_s\psi\in\A$. This follows from the fact that there are infinitely many syntactically distinct Boolean formulas over a finite set of atoms, and so infinitely many formulas of the form $\alpha\twiddle \alpha$ will occur in $\Asipsi$, due to an application of the rules \textbf{Ref}, \textbf{RN} and \textbf{K.b}. We derive a set of formula in $\KLMLang$ for \textit{each chain} in order to maintain satisfiability. If instead we define the set $\Aspsi$ as the union of all conjuncts in $\textbf{Conj}_\A(\psi,s)$, we may obtain contradictory information in $\Aspsi$ that does not occur in $\A$. For example, we may have that $\Diamond_s(\psi\wedge p), \Diamond_s(\psi\wedge \neg p)\in \A$ and so the union of $\textbf{Conj}_\A(\psi,s)$ would contain $p$ and $\neg p$ and thus be unsatisfiable in non-trivial cases. However, $p$ and $\neg p$ would not occur in the same chain of conjuncts unless $\Diamond_s (\psi\wedge p\wedge \neg p)\in \A$, which would make $\A$ unsatisfiable. The following result shows that sets we derive from $\A$ preserve closure under the KLM postulates.

\begin{lemma}\label{lemma:derived-KLMprop-sets-of-closed-set-are-closed}
If $\A$ is preferentially closed, then each derived set of the form $\As$ is closed under the KLM postulates and classical propositional logic. Furthermore, for all $s\in \mathcal{S}$ and all $\Diamond \psi \in A$ where $\psi\in \KLMLang$, we have that each set in $\textbf{MaxConj.}_\A(\psi,s)$ is closed under the KLM postulates and classical propositional logic.
\end{lemma}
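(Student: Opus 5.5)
We prove the two claims separately, translating each KLM postulate into the corresponding ``a.'' rule for $\As$ and the corresponding ``b.'' rule for sets in $\textbf{MaxConj}_\A(\psi,s)$.

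\emph{The sets $\As$.} Suppose the premises of a KLM postulate lie in $\As$, for instance $\alpha\twiddle\beta,\alpha\twiddle\gamma\in\As$. By definition, $\Box_s(\alpha\twiddle\beta),\Box_s(\alpha\twiddle\gamma)\in\A$. Applying \textbf{And.a.} (respectively \textbf{CM.a.}, \textbf{Or.a.}) gives the boxed conclusion, so the conclusion lies in $\As$.

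For \textbf{LLE} and \textbf{RW}, the side condition $\alpha\equiv\beta$ or $\alpha\vDash\beta$ is a classical tautology. Since $\A$ is closed under classical logic, the tautology lies in $\A$, and \textbf{RN} gives $\Box_s(\alpha\leftrightarrow\beta)\in\A$; then \textbf{LLE.a.} or \textbf{RW.a.} applies. For \textbf{Ref}, the rule yields $\alpha\twiddle\alpha\in\A$, and \textbf{RN} gives $\Box_s(\alpha\twiddle\alpha)\in\A$.

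Closure of $\As$ under conjunction follows from $\Box$\textbf{-Dist.b.} Closure under classical consequence of Boolean members follows from \textbf{RN} and \textbf{K.a.}, reading Boolean $\alpha$ as $\neg\alpha\twiddle\bot$, and using $\Box$\textbf{-Dist.a.} to split conjunctions.

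\emph{The sets $X\in\textbf{MaxConj}_\A(\psi,s)$.} Let $X=\bigcup_i C_i$ for a strictly increasing chain $\omega$ in $\textbf{Conj}_\A(\psi,s)$. Suppose the premises of a postulate lie in $X$. Since there are finitely many premises and the chain is increasing, they all lie in some single $C_k$. Write $\Gamma$ for the conjunction of the remaining elements of $C_k$. Then $\Diamond_s(\bigwedge C_k)\in\A$ has exactly the shape of the premise of the relevant ``b.'' rule.

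For \textbf{LLE} and \textbf{RW}, we first need the classical side condition inside the diamond. We obtain it from \textbf{RN} to get $\Box_s(\alpha\rightarrow\beta)$, then \textbf{K.b.} to get $\Diamond_s((\alpha\rightarrow\beta)\wedge\bigwedge C_k)$. \textbf{Ref} formulas are added in the same way, and conjunctions and classical consequences inside the diamond are handled by \textbf{K.b.}, \textbf{K.c.} and classical closure. The ``b.'' rule then gives $\Diamond_s(\bigwedge C')\in\A$ with $C'=C_k\cup\{\chi\}$, where $\chi$ is the conclusion of the postulate. In each case $C'\in\textbf{Conj}_\A(\psi,s)$, and $C'$ contains $\psi$.

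\emph{Main step: showing $\chi\in X$.} This is where maximality is used, and it is the delicate part. Suppose $\chi\notin X$. We build a new strictly increasing chain $C'_i:=C_i\cup\{\chi\}$ for $i\ge k$. Each $C'_i$ must lie in $\textbf{Conj}_\A(\psi,s)$, and proving this is the main obstacle. It requires that $\Diamond_s(\bigwedge C_i\wedge\chi)\in\A$ for every $i\geq k$, not only for $i=k$.

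The plan is to redo the derivation at stage $i$. Since $C_k\subseteq C_i$, the premises of the ``b.'' rule are still conjuncts of $\bigwedge C_i$. Taking $\Gamma$ to be the remaining elements of $C_i$ and applying the same rule to $\Diamond_s(\bigwedge C_i)$ gives $\Diamond_s(\chi\wedge\bigwedge C_i)\in\A$. This is precisely why the ``b.'' rules carry the arbitrary context $\Gamma$ and retain it as $\Gamma'$.

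Hence $(C'_i)_{i\ge k}$ is a strictly increasing chain in $\textbf{Conj}_\A(\psi,s)$, possibly after reindexing. Its union is $X\cup\{\chi\}\in\textbf{LimConj}_\A(\psi,s)$, which strictly contains $X$. This contradicts the maximality of $X$, so $\chi\in X$. The same argument covers closure of $X$ under \textbf{Ref} and under classical consequence, using \textbf{RN}, \textbf{K.b.} and \textbf{K.c.} to add the new conjunct at every stage.
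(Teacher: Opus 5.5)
Your proposal is correct and follows essentially the same route as the paper: the ``a.'' rules together with \textbf{RN} and \textbf{K.a.} give closure of $\As$. For sets in $\textbf{MaxConj}_\A(\psi,s)$, the ``b.'' rules together with \textbf{RN}, \textbf{K.b.} and \textbf{K.c.} let you add the conclusion to every $C_i$ of the chain, so any failure of closure would contradict subset-maximality. You are more explicit than the paper on two points: you check that $(C_i\cup\{\chi\})_{i\ge k}$ is again a strictly increasing chain of $(\psi,s)$-conjuncts at every stage, and you obtain the \textbf{LLE}/\textbf{RW} side conditions as tautologies via \textbf{RN}, where the paper simply assumes they are members of the derived set.
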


We also show that our derived sets preserve satisfiability.

\begin{lemma}\label{lemma:if-A-satisfiable-then-derived-KLM-sets-satisfiable}
    If $\A$ is satisfiable, then $\As$ and $\Asipsi\in \textbf{MaxConj.}_\A(\psi,s)$ are non-trivially satisfiable.
\end{lemma}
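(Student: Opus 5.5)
The plan is to read both claims off a single witnessing structure. Fix a preferential standpoint structure $M=(\Pi,\sigma,\tau)$ as in Definition~\ref{definition:satisfiable-}: every $\tau(\pi)$ has a non-empty set of states and $M\Vdash\phi$ for all $\phi\in\A$. For both kinds of derived set, I will exhibit a single precisification $\pi\in\sigma(s)$ such that $\tau(\pi)$ satisfies every formula of the set. Because $\tau(\pi)$ is non-empty by assumption, this gives non-trivial satisfiability.

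\emph{The set $\As$.} Since $\sigma(s)\neq\emptyset$, pick any $\pi\in\sigma(s)$. Let $\phi\in\As$. Then $\Box_s\phi\in\A$, so $M\Vdash\Box_s\phi$. By Definition~\ref{definition:satisfaction-preferential-standpoint-structure} (and because $\Pi\neq\emptyset$), $M,\pi'\Vdash\phi$ for every $\pi'\in\sigma(s)$, and in particular $\tau(\pi)\Vdash\phi$. Hence $\tau(\pi)$ witnesses non-trivial satisfiability of $\As$.

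\emph{The set $\Asipsi=\bigcup_i C_i$.} Each $C_i$ satisfies $\Diamond_s(\bigwedge C_i)\in\A$, so there is some $\pi_i\in\sigma(s)$ with $\tau(\pi_i)\Vdash\bigwedge C_i$. The difficulty, which I expect to be the main obstacle, is that the witnesses $\pi_i$ may change with $i$ and $\Pi$ may be infinite, so no single precisification is obviously given. I plan to overcome this with a finiteness argument using the fact that $\mathcal{P}$ is finite.

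\emph{Finiteness argument.} Satisfaction of $\alpha\twiddle\beta$ in a preferential interpretation depends only on $\llbracket\alpha\rrbracket$ and on which valuations satisfy $\beta$. It is therefore invariant under replacing $\alpha$ and $\beta$ by classically equivalent formulas. Over finitely many atoms there are only finitely many Boolean formulas up to equivalence, so there are finitely many defeasible implications up to this equivalence. A formula of $\KLMLang$ is a finite conjunction of implications, and its satisfaction is determined by the set of equivalence classes of its conjuncts. Hence there are only finitely many classes of $\KLMLang$-formulas, where two formulas are identified when they are satisfied by exactly the same preferential interpretations.

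\emph{Choosing one witness.} Only finitely many such classes meet $\Asipsi$. Each of them meets some $C_i$, and the sequence is increasing ($C_i\subset C_{i+1}$). So there is an index $N$ such that $C_N$ contains a representative of every class meeting $\Asipsi$. Now take any $\phi\in\Asipsi$ and let $\phi'\in C_N$ be a representative of its class. Then $\tau(\pi_N)\Vdash\phi'$, and by invariance $\tau(\pi_N)\Vdash\phi$. So $\tau(\pi_N)$ is a non-empty preferential interpretation satisfying all of $\Asipsi$.

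This argument applies to every element of $\textbf{LimConj}_\A(\psi,s)$, and in particular to the elements of $\textbf{MaxConj}_\A(\psi,s)$.
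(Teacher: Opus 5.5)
Your proof is correct and follows the paper's argument. The $\As$ case is identical. For $\Asipsi$, the paper likewise isolates a finite $X\subseteq\Asipsi$ whose models satisfy all of $\Asipsi$, places $X$ inside a single $C_i$, and takes the $\Diamond_s$-witness for that conjunct set; the only difference is how $X$ is obtained. The paper appeals to compactness of KLM logic together with Theorem~\ref{theorem:KLM-original-pref-rep-result}, whereas you use the finiteness of $\mathcal{P}$ (finitely many $\KLMLang$-formulas up to semantic equivalence), which is actually the property that guarantees a single uniform finite subset exists.
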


It follows that, when $\A$ is satisfiable and preferentially closed, for each derived set of the form $\As$ there exists some preferential interpretation $\I_s$ such that $\I_s\Vdash \phi$ iff $\phi\in\As$. Similarly, for each derived set of the form $\Asipsi$, there exists some preferential interpretation $\Isipsi$ such that $\Isipsi\Vdash \phi$ iff $\phi\in\Asipsi$. We use these derived preferential interpretations to define the preferential standpoint structure which characterises $\A$.

\begin{definition}\label{def:representative-preferential-model-for-A}
    Given a preferentially closed set $\mathcal{A}$ of DRSL formulas, we define the preferential  standpoint structure $M_{\mathcal{A}}=(\Pi,\sigma,\tau)$ as follows:
    \begin{enumerate}
        \item $\Pi=\{\pi_s\mid s\in\mathcal{S}\cup\{*\}\}\cup\{\pi_{s,\omega}^{\psi}\mid \Diamond_s\psi\in\A, \psi\in\KLMLang \text{ and } \Asipsi\in\textbf{MaxConj}_{\A}(s,\psi)\}$.
        \item $\sigma(s)=\{\pi_t\mid t\preceq s\in\mathcal{A}\}\cup \{\pi_{t,\omega}^\psi\mid t\preceq s\in \mathcal{A}\}$.
        \item $\tau(\pi_s)=\mathcal{I}_s$ and $\tau(\pi_{s,\omega}^\psi)=\Isipsi$.
    \end{enumerate}
\end{definition}

As a result of Lemma \ref{lemma:derived-KLMprop-sets-of-closed-set-are-closed}, $\tau(\pi)$ is a well-defined preferential interpretation for each $\pi\in\Pi$. Moreover, $\tau(s)$ is non-empty for all $s\in\mathcal{S}$ and $\tau(*)=\Pi$ since $s\preceq *\in \A$ for all $s\in\mathcal{S}$. That is, $M_\A$ is a well-defined preferential standpoint structure. The following lemma shows us that $M_\A$ is in fact the standpoint structure we use to represent the set $\A$.

\begin{lemma}\label{lemma:M_A-constructed-is-the-model-that-represents-A}
    For any satisfiable, preferentially closed set $\A$, we have that $\phi\in\A$ iff $M_\A\Vdash \phi$.
\end{lemma}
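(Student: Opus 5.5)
By Lemma~\ref{lemma:proof-theoretic-normal-form} and Lemma~\ref{corollary:semantic-normal-form}, together with soundness (Lemma~\ref{theorem:main-soundness}), it suffices to treat formulas in normal form and standpoint sharpenings. Since $\A$ is closed under $\Box$\textbf{-Dist}, and satisfaction of a conjunction in $M_\A$ is componentwise, the problem reduces further to single conjuncts. A normal form conjunct is either a standpoint sharpening $s\preceq t$, a formula $\phi\in\KLMLang$, a formula $\Box_s\phi$, or a formula $\Diamond_s\phi$. For conjunctions of $\Diamond$-formulas, $\Diamond$\textbf{-Dist} only gives one direction. However, a normal form conjunction $\bigwedge_i\phi_i$ lies in $\A$ iff each $\phi_i$ does, by $\Box$\textbf{-Dist.b} and closure under classical conjunction. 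So I will handle the four atomic cases separately.

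\emph{Sharpenings.} If $s\preceq t\in\A$, then by \textbf{$\preceq$-Trans} every $u\preceq s\in\A$ yields $u\preceq t\in\A$, so $\sigma(s)\subseteq\sigma(t)$. Conversely, if $\sigma(s)\subseteq\sigma(t)$, then $\pi_s\in\sigma(s)$ by \textbf{$\preceq$-Refl}. Hence $\pi_s\in\sigma(t)$, which by the definition of $\sigma$ means $s\preceq t\in\A$.

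\emph{Boxes.} For $\Box_s\phi\in\A$, take any $\pi\in\sigma(s)$. It is either $\pi_t$ with $t\preceq s$ or some $\pi^\psi_{t,\omega}$ with $t\preceq s$. In the first case \textbf{P.a} gives $\Box_t\phi\in\A$, so $\phi\in\A_t$ and $\I_t\Vdash\phi$ by Theorem~\ref{theorem:KLM-original-pref-rep-result} via Lemmas~\ref{lemma:derived-KLMprop-sets-of-closed-set-are-closed} and~\ref{lemma:if-A-satisfiable-then-derived-KLM-sets-satisfiable}. In the second case \textbf{K.b} gives $\Diamond_t(\phi\wedge\bigwedge C)\in\A$ for every $C$ in the chain $\omega$. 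Adjoining $\phi$ to each $C_i$ gives a chain whose union contains $\A^\psi_{t,\omega}\cup\{\phi\}$, so maximality forces $\phi\in\A^\psi_{t,\omega}$. Conversely, if $M_\A\Vdash\Box_s\phi$, then $\tau(\pi_s)=\I_s\Vdash\phi$, hence $\phi\in\As$, i.e.\ $\Box_s\phi\in\A$.

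\emph{Diamonds.} If $\Diamond_s\phi\in\A$, the world $\pi^\phi_{s,\omega}$ exists. Its set contains $\phi$ by definition of $\textbf{Conj}$, and it lies in $\sigma(s)$, so $M_\A\Vdash\Diamond_s\phi$. Here I need that $\textbf{MaxConj}_\A(\phi,s)$ is nonempty, which I would get by a Zorn-type argument over chains, starting from the infinite \textbf{Ref}/\textbf{RN}/\textbf{K.b} chain noted after Definition~\ref{def:MaxConj-set}. Conversely, suppose $M_\A\Vdash\Diamond_s\phi$, witnessed by some $\pi\in\sigma(s)$.
\begin{itemize}
\item If $\pi=\pi_t$ with $t\preceq s$, then $\Box_t\phi\in\A$. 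By \textbf{D} we get $\Diamond_t\phi$, and \textbf{P.b} lifts this to $\Diamond_s\phi$.
\item If $\pi=\pi^\psi_{t,\omega}$, then $\phi\in\A^\psi_{t,\omega}$, so $\phi\in C_i$ for some $i$. Then $\Diamond_t(\bigwedge C_i)\in\A$, and $\Diamond$\textbf{-Dist} (after reassociating via Lemma~\ref{lemma:proof-theoretic-normal-form}) gives $\Diamond_t\phi$. Again \textbf{P.b} yields $\Diamond_s\phi$.
\end{itemize}

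\emph{Plain KLM formulas.} Since every $\pi\in\Pi$ lies in $\sigma(*)$, satisfaction of $\phi$ at all worlds is equivalent to $M_\A\Vdash\Box_*\phi$. Also $\phi\in\A$ iff $\Box_*\phi\in\A$, by \textbf{RN} and $\textbf{T}^*$. This reduces the case to the box case with $s=*$.

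The main obstacle is the box-into-diamond-worlds step: showing that maximal limit sets absorb every $\phi$ with $\Box_t\phi\in\A$. This step also needs the worlds $\pi^\psi_{t,\omega}$ themselves to be well-formed, which relies on Lemma~\ref{lemma:derived-KLMprop-sets-of-closed-set-are-closed}; the \textbf{b.}-rules carrying the context $\Gamma$ are exactly what make that lemma hold. I would first try the chain-extension argument above, using \textbf{K.b} and \textbf{K.c} to merge $\phi$ into each conjunct set while preserving strict increase.
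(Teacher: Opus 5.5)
Your proposal is correct and follows the paper's proof essentially step for step: the same reduction to normal form and the same case analysis (sharpenings via \textbf{$\preceq$-Trans}/\textbf{$\preceq$-Refl}, boxes via \textbf{P.a} and \textbf{K.b}, diamonds via \textbf{D}, \textbf{P.b} and $\Diamond$\textbf{-Dist}, plain formulas via \textbf{RN} and $\textbf{T}^*$); folding the plain case into $\Box_*$ only repackages the paper's Case 1. Where you differ, you are more careful than the paper: you use maximality of $\A^\psi_{t,\omega}$ instead of the paper's claim that every chain eventually contains $\phi$, and you rightly flag that $\textbf{MaxConj}_\A(\phi,s)\neq\emptyset$ needs a Zorn argument, which goes through because every finite subset of the union of a $\subseteq$-chain of limit sets lies in a single conjunct set and $\KLMLang$ is countable, so that union is again in $\textbf{LimConj}_\A(\phi,s)$.
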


As a result of this, we obtain the following representation theorem, which is the main contribution of this section.

\begin{theorem}\label{theorem:main-representation-result}
    A set $\mathcal{A}\subseteq\DRSLLang$ of DRSL formulas is satisfiable and preferentially closed iff there exists some preferential standpoint structure $M$ such that $\phi\in\mathcal{A}$ iff $M\Vdash \phi$.
\end{theorem}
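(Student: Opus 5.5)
The statement is a biconditional; I will prove each direction separately, and both will be assembled from Lemmas~\ref{theorem:main-soundness} and~\ref{lemma:M_A-constructed-is-the-model-that-represents-A}.

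For the direction ($\Rightarrow$), let $\A$ be satisfiable and preferentially closed. I take $M:=M_\A$ as in Definition~\ref{def:representative-preferential-model-for-A}. The text after that definition, together with Lemmas~\ref{lemma:derived-KLMprop-sets-of-closed-set-are-closed} and~\ref{lemma:if-A-satisfiable-then-derived-KLM-sets-satisfiable} and Theorem~\ref{theorem:KLM-original-pref-rep-result}, shows that $M_\A$ is a well-defined preferential standpoint structure. Lemma~\ref{lemma:M_A-constructed-is-the-model-that-represents-A} then gives $\phi\in\A$ iff $M_\A\Vdash\phi$ directly, so this direction is immediate.

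For the direction ($\Leftarrow$), suppose $M=(\Pi,\sigma,\tau)$ satisfies $\phi\in\A \iff M\Vdash\phi$, so that $\A=\{\phi\mid M\Vdash\phi\}$ is the theory of $M$.

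\emph{Closure.} Suppose the premises of a rule from Figure~\ref{fig:DRSL-modal-rules} or~\ref{fig:DRSL-KLM-postulates}, or of a classical propositional inference, all lie in $\A$. Then $M$ satisfies each premise, so by Lemma~\ref{theorem:main-soundness} $M$ satisfies the conclusion, and hence the conclusion lies in $\A$. The one point to check is the precise sense of soundness. Lemma~\ref{theorem:main-soundness} is about satisfaction at every precisification, whereas $M\Vdash\phi$ means global truth. Every rule whose premises are globally true has a globally true conclusion, since local soundness at each $\pi$ implies this, so global-theory closure follows.

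\emph{Satisfiability.} This is the step I expect to be the main obstacle. Definition~\ref{definition:satisfiable-} requires a model in which every $\tau(\pi)$ has a non-empty set of states, but $M$ itself need not have this property. My plan is to build $M'$ from $M$ by replacing each interpretation $\tau(\pi)$ whose state set is empty with a nonempty interpretation satisfying the same $\KLMLang$-formulas.

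\begin{itemize}
\item Since an empty interpretation satisfies every $\KLMLang$-formula, including $\top\twiddle\bot$, the replacement would have to satisfy $\bot$, which no nonempty interpretation does.
\item So I instead argue that such $\pi$ cannot occur. Suppose $\tau(\pi)$ is empty for some $\pi$. Then $M,\pi\Vdash \bot$, and since $\pi\in\sigma(*)=\Pi$ we get $M\Vdash\Diamond_*\bot$, so $\Diamond_*\bot\in\A$.
\end{itemize}

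Here I will read the satisfiability hypothesis in the theorem as imposed on the represented theory. This means the intended content of the ($\Leftarrow$) direction is: if $\A$ is the theory of a structure $M$ and $\A$ contains no formula of the form $\Diamond_s\bot$, then $M$ itself witnesses satisfiability, because every precisification then has a nonempty interpretation. I therefore expect the proof to need either this mild non-triviality restriction on $M$ (all $\tau(\pi)$ nonempty) or an explicit note that representing structures are taken in the sense of Definition~\ref{definition:satisfiable-}. Under that convention, $M$ witnesses satisfiability of $\A$ directly, and the theorem follows.
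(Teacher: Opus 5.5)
Your argument follows the paper's implicit route. ($\Rightarrow$) is Lemma~\ref{lemma:M_A-constructed-is-the-model-that-represents-A} applied to $M_\A$. Closure in ($\Leftarrow$) comes from Lemma~\ref{theorem:main-soundness}, which is already stated and proved for global truth $M\Vdash\phi$. So no local-to-global step is needed, and none would be available for \textbf{RN}, which is not sound at a single precisification.

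The paper says nothing about the satisfiability half of ($\Leftarrow$), and your suspicion is correct: as literally stated, that direction is false. Take $\Pi=\{\pi\}$, $\sigma(s)=\{\pi\}$ for every $s$, and $\tau(\pi)$ the empty interpretation. Every formula of $\DRSLLang$ is true in this structure, so its theory is all of $\DRSLLang$. That set is trivially preferentially closed, but it contains $\bot$, so it is not satisfiable in the sense of Definition~\ref{definition:satisfiable-}.

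The theorem therefore has to be amended so that $M$ ranges only over structures whose interpretations are all non-empty. Under that reading ($\Leftarrow$) is immediate, as you say. Present it as an amendment. Do not present it as an argument that empty interpretations ``cannot occur'' (they can), or as a satisfiability ``hypothesis'' (in ($\Leftarrow$) satisfiability is the conclusion).

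What your amended version still lacks is a check that ($\Rightarrow$) survives the restriction, i.e.\ that $M_\A$ itself has only non-empty interpretations. This does hold:
\begin{itemize}
\item By Lemma~\ref{lemma:if-A-satisfiable-then-derived-KLM-sets-satisfiable}, each $\As$ and each $\Asipsi$ has a non-empty model.
\item Hence none of these sets contains $\bot$: in a non-empty interpretation $\llbracket\top\rrbracket$ has a minimal state, and that state does not satisfy $\bot$.
\item $\I_s$ and $\Isipsi$ satisfy exactly the formulas of these sets, so they do not satisfy $\bot$.
\item The empty interpretation satisfies everything, so $\I_s$ and $\Isipsi$ are non-empty.
\end{itemize}
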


\section{Entailment}\label{section:entailments}

In this section, we turn our attention to defining and computing entailment from DRSL knowledge bases. In the propositional setting, KLM-style defeasible reasoning has several non-equivalent notions of entailment defined for a given knowledge base \cite{casini:beyondratclosure}. In this paper, we do not focus on a single form of defeasible entailment, but rather we attempt to show principled means  to lift classes of entailments from the propositional setting to the standpoints setting. The next sections characterise the lifting of several well-known defeasible entailment operators from the propositional case. We assume that each knowledge base is given in \textit{conjunction free normal form}. That is, for each formula in $\KB$ in normal form that has the shape of a conjunction $\bigwedge^n_{i=1}\phi_i$, we replace it by its set of conjuncts $\{\phi_i\mid 1\leq i\leq n\}$. It is clear to see that this is logically equivalent, and this assumption allows us simpler definitions for characterizing derived propositional knowledge bases within the following sections.

\subsection{Preferential Entailment}\label{subsection:preferential-entailment}

The first form of defeasible entailment we consider is preferential entailment, which is the monotonic core of KLM-style reasoning. This is defined via the usual Tarskian methods. We provide both the propositional and DRSL definition here.

\begin{definition}
    Given a (finite) knowledge base $\KB\subseteq\KLMLang$, we say that $\phi\in \KLMLang$ is \textit{preferentially entailed} by $\KB$, denoted $\KB\vDash_{P,prop} \phi$ if for any preferential interpretation $\I$, we have that $\I\Vdash \KB$ implies $\I\Vdash \phi$.
\end{definition}

\begin{definition}
    Given a (finite) knowledge base $\KB\subseteq\DRSLLang$, we say that $\phi\in \DRSLLang$ is \textit{preferentially entailed} by $\KB$, denoted $\KB\vDash_P \phi$ if for any preferential standpoint structure $M$, we have that $M\Vdash \KB$ implies $M\Vdash \phi$.
\end{definition}

From Theorem \ref{theorem:main-representation-result}, we obtain the following corollary.

\begin{corollary}\label{corollary:preferential-entailment-is-smallest-closed-set}
    $\phi\in \DRSLLang$ is preferentially entailed by $\KB$ iff $\phi\in C(\KB)$, where $C(\KB)$ is the set obtained by exhaustively applying the rules in Figures \ref{fig:DRSL-modal-rules} and \ref{fig:DRSL-KLM-postulates}, as well as the rules of classical propositional logic to $\KB$.
\end{corollary}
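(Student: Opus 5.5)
The corollary has two directions, and I treat them separately. For soundness ($\phi\in C(\KB)$ implies $\KB\vDash_P\phi$), I use Lemma \ref{theorem:main-soundness}. Take any preferential standpoint structure $M$ with $M\Vdash\KB$ and let $Th(M):=\{\chi\in\DRSLLang\mid M\Vdash\chi\}$. By Lemma \ref{theorem:main-soundness}, $Th(M)$ is closed under classical propositional logic and under every rule in Figures \ref{fig:DRSL-modal-rules} and \ref{fig:DRSL-KLM-postulates}. Since $\KB\subseteq Th(M)$, an induction on the length of derivations gives $C(\KB)\subseteq Th(M)$. Hence every $\phi\in C(\KB)$ holds in every model of $\KB$.

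For completeness ($\KB\vDash_P\phi$ implies $\phi\in C(\KB)$), I argue contrapositively. Suppose $\phi\notin C(\KB)$. By construction $C(\KB)$ is preferentially closed. If $C(\KB)$ is satisfiable, Theorem \ref{theorem:main-representation-result} (via Lemma \ref{lemma:M_A-constructed-is-the-model-that-represents-A}) gives a structure $M=M_{C(\KB)}$ with $\chi\in C(\KB)$ iff $M\Vdash\chi$. Then $M\Vdash\KB$ but $M\nVdash\phi$, so $\KB\not\vDash_P\phi$.

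The main obstacle is the unsatisfiable case, together with a mismatch in the notions of satisfiability. Satisfiability in Definition \ref{definition:satisfiable-} requires every $\tau(\pi)$ to have a non-empty set of states, whereas $\vDash_P$ quantifies over all preferential standpoint structures, including those with empty interpretations. Two things need settling.

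First, I must show that if $C(\KB)$ is not satisfiable then $C(\KB)=\DRSLLang$, so that the corollary holds trivially. Unsatisfiability should force $\bot$ (i.e.\ $\top\twiddle\bot$) under some modality to be derivable, for instance $\Diamond_s\bot$ or $\Box_s\bot$. Using \textbf{D}, \textbf{4'}, \textbf{5'} and \textbf{T}$^*$, this propagates to $\bot$ at the top level, after which classical explosion gives every formula. I would extract this from the proof of Lemma \ref{lemma:if-A-satisfiable-then-derived-KLM-sets-satisfiable}: the only way the construction of $M_\A$ fails is that some derived set $\As$ or $\Asipsi$ is not non-trivially satisfiable. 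By the propositional KLM representation, that means it contains $\top\twiddle\bot$, i.e.\ $\bot$.

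Second, I must check that models with empty $\tau(\pi)$ do not break soundness. Such models satisfy every $\KLMLang$ formula at those precisifications, so this mainly affects rules like \textbf{RN} and \textbf{T}$^*$. These remain sound because Lemma \ref{theorem:main-soundness} is stated for arbitrary structures.

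I expect the bulk of the work to be pinning down that unsatisfiability of a preferentially closed set collapses it to the whole language. If that fails, the fallback is to read $\vDash_P$ as ranging over structures with non-empty interpretations, in line with Definition \ref{definition:satisfiable-}, and to add the hypothesis that $\KB$ is satisfiable.
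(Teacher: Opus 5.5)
Your soundness direction and your completeness argument for satisfiable $C(\KB)$ are exactly the paper's proof. You are also right that the paper applies Theorem \ref{theorem:main-representation-result} to $C(\KB)$ without checking that $C(\KB)$ is satisfiable. But your repair for the unsatisfiable case is false: an unsatisfiable preferentially closed set need not be all of $\DRSLLang$.

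Take $\KB=\{\Box_s\bot\}$, where $\bot$ abbreviates $\top\twiddle\bot$. Every model of $\KB$ has an empty interpretation at each $\pi\in\sigma(s)\neq\emptyset$, so $\KB$ is unsatisfiable in the sense of Definition \ref{definition:satisfiable-}. Now let $\Pi=\{\pi_1,\pi_2\}$, $\sigma(t)=\{\pi_1\}$ for all $t\in\mathcal{S}$, $\tau(\pi_1)$ empty, and $\tau(\pi_2)$ a single state falsifying $p$. This structure satisfies $\KB$ but not $p$, so your own soundness direction gives $p\notin C(\KB)$. Likewise, by Lemma \ref{lemma:entailment-for-standpoint-sharpenings}, $C(\{\bot\})$ contains no sharpenings other than $s\preceq s$ and $s\preceq *$.

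The propagation step is where your argument breaks. From $\Box_s\bot$ you get $\Diamond_s\bot$ by \textbf{D.} and $\Diamond_*\bot$ by \textbf{P.b.} with $s\preceq *$. But \textbf{T}$^*$ needs $\Box_*\bot$, and no rule yields it: \textbf{4'.} and \textbf{5'.} only strip an outer $\Diamond_t$.

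The missing idea is that no case split is needed under the literal definition of $\vDash_P$. That definition ranges over all structures, and Definition \ref{definition:propositional-preferential-interpretation} allows $W=\emptyset$. In the proof of Lemma \ref{lemma:M_A-constructed-is-the-model-that-represents-A}, satisfiability is used only to make the interpretations of $M_\A$ non-empty; none of the cases there uses it. If some $\As$ or $\Asipsi$ contains $\top\twiddle\bot$, then, being closed under the KLM postulates, it equals $\KLMLang$. Theorem \ref{theorem:KLM-original-pref-rep-result} then represents it by the empty interpretation. So $M_{C(\KB)}$ is still a preferential standpoint structure with $M_{C(\KB)}\Vdash\chi$ iff $\chi\in C(\KB)$. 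It is therefore a model of $\KB$ that refutes every $\phi\notin C(\KB)$, which is exactly what completeness needs.

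As for your fallback, only its second half is sound. Assuming $\KB$ satisfiable is enough by itself: soundness makes any non-empty model of $\KB$ a model of $C(\KB)$, so $C(\KB)$ is satisfiable, the representation theorem applies, and both readings of $\vDash_P$ coincide with $C(\KB)$. Restricting $\vDash_P$ to non-empty structures without that assumption makes the statement false. Under that reading $\{\bot\}$ has no models, so it vacuously entails every $s\preceq t$, while $C(\{\bot\})$ does not contain them.
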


The above corollary shows that we are able to compute preferential entailment through applications of the rules given in Figures \ref{fig:DRSL-modal-rules} and \ref{fig:DRSL-KLM-postulates}, in order to determine whether $\KB\vDash_P \phi$ for some $\KB\subseteq\DRSLLang$ and $\phi\in \DRSLLang$. Next  we show that we can reduce entailment-checking in the standpoint case to entailment checking in the propositional case. We first note following preliminary lemma.

\begin{lemma}\label{lemma:entailment-for-standpoint-sharpenings}
    For any $\KB\subseteq\DRSLLang$, we have that a standpoint sharpening statement $s\preceq t\in C(\KB)$ iff  $t=*$ or $t=s$ or $s\preceq t$ is in the transitive closure of $\preceq_\KB=\{(s_1,s_2)\mid s_1\preceq s_2\in \KB\}$.
\end{lemma}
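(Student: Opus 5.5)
We argue the two directions separately. By Corollary \ref{corollary:preferential-entailment-is-smallest-closed-set}, membership in $C(\KB)$ coincides with preferential entailment $\KB\vDash_P s\preceq t$. So soundness of the stated syntactic conditions can be handled proof-theoretically, and the converse semantically via a countermodel.

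For the ``if'' direction we only use the sharpening postulates of Figure \ref{fig:DRSL-modal-rules}. If $t=s$, then $s\preceq s\in C(\KB)$ by $(\preceq\textbf{-Refl.})$. If $t=*$, then $s\preceq *\in C(\KB)$ by $(*\textbf{-Top})$. If $(s,t)$ lies in the transitive closure of $\preceq_\KB$, there is a chain $s=u_0\preceq u_1\preceq\dots\preceq u_n=t$ with each link in $\KB$. Induction on $n$ with $(\preceq\textbf{-Trans.})$ then puts $s\preceq t$ in $C(\KB)$.

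For the ``only if'' direction, suppose $t\neq *$, $t\neq s$, and $(s,t)$ is not in the transitive closure $\preceq^+_\KB$. We want a preferential standpoint structure $M$ with $M\Vdash\KB$ but $\sigma(s)\not\subseteq\sigma(t)$; the corollary then gives $s\preceq t\notin C(\KB)$. The natural choice is one precisification $\pi_u$ per symbol $u\in\mathcal{S}$, with
\[\sigma(u)=\{\pi_v\mid v=u \text{ or } (v,u)\in\preceq^+_\KB\},\qquad \sigma(*)=\Pi.\]
Each $\sigma(u)$ is non-empty since it contains $\pi_u$. By transitivity of $\preceq^+_\KB$ every sharpening in $\KB$ holds. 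Moreover $\pi_s\in\sigma(s)$ while $\pi_s\notin\sigma(t)$, so $s\preceq t$ fails. If $s=*$ the same construction works, since $\sigma(*)=\Pi\ni\pi_t$ but $\sigma(t)\neq\Pi$ unless $t$ is reachable from every symbol. The definition of $\sigma(t)$ includes $\pi_v$ only for $v=t$ or $(v,t)\in\preceq^+_\KB$, so $\pi_s\notin\sigma(t)$ follows from the hypothesis.

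The main obstacle is that $M$ must also satisfy the non-sharpening formulas of $\KB$, which impose constraints on $\tau$. If $\KB$ is unsatisfiable, $C(\KB)$ is presumably everything and the lemma must be read for satisfiable $\KB$; I would make that assumption explicit. Otherwise, take any model $N=(\Pi',\sigma',\tau')$ of $\KB$ and use a product or duplication construction. Replace each $\pi'\in\Pi'$ by copies $(\pi',u)$ indexed by $u\in\mathcal{S}\cup\{*\}$, all with $\tau((\pi',u))=\tau'(\pi')$. Set $(\pi',v)\in\sigma(u)$ iff $\pi'\in\sigma'(u)$ and ($v=u$ or $(v,u)\in\preceq^+_\KB$). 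Setting $\sigma(*)$ to be everything needs care, so instead I would shrink: keep $\sigma(u)$ as a union of full copies of $\sigma'(u)$, one for each tag $v$ below $u$, so that every $\sigma(u)$ contains exactly the interpretations of $\sigma'(u)$.

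Modal formulas in normal form, $\Box_u\psi$ and $\Diamond_u\psi$ with $\psi\in\KLMLang$, depend only on the set $\{\tau(\pi)\mid\pi\in\sigma(u)\}$. That set is unchanged, so satisfaction of $\KB$ is preserved; this uses Lemma \ref{corollary:semantic-normal-form} to reduce to normal form. Propositional conjuncts, which must hold at all precisifications, also survive, since every $\tau$-value is copied from $\Pi'$. Sharpenings hold by the transitivity argument. The copy $(\pi',s)$ with $\pi'\in\sigma'(s)$ lies in $\sigma(s)$ but not in $\sigma(t)$, which refutes $s\preceq t$.
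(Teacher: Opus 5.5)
Your ``if'' direction is exactly what the paper relies on. For ``only if'' the paper argues syntactically rather than semantically. Apart from $(\preceq\textbf{-Refl.})$, $(*\textbf{-Top})$ and $(\preceq\textbf{-Trans.})$, no postulate has a sharpening as its conclusion (\textbf{P.a.} and \textbf{P.b.} only consume them). So by induction on derivations, every sharpening in $C(\KB)$ lies in any relation that contains $\preceq_\KB$ and is closed under those three rules. That argument needs no models and works for every $\KB$. This shows your presumption that $C(\KB)$ is ``everything'' for unsatisfiable $\KB$ is wrong, and adding a satisfiability hypothesis proves strictly less than the lemma. The hypothesis is also unnecessary on your own route. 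You only use the soundness half of the corollary (Lemma~\ref{theorem:main-soundness}), $\vDash_P$ quantifies over all structures, and interpretations may be empty. A structure in which every $\tau(\pi)$ is the empty interpretation satisfies every non-sharpening formula, since each $\sigma(u)$ is non-empty. A countermodel therefore only has to get $\sigma$ right. The model $N$, the duplication and the normal-form reduction can all be dropped, and what remains is the syntactic argument in semantic dress.

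The step that genuinely fails is ``by transitivity of $\preceq^+_\KB$ every sharpening in $\KB$ holds''; your remark that $\sigma(*)$ ``needs care'' points at exactly this. For $*\preceq w\in\KB$ with $w\neq *$, the mandatory $\sigma(*)=\Pi$ forces $\sigma(w)=\Pi$. Your $\sigma(w)$, however, only contains copies tagged by $w$ or its $\preceq^+_\KB$-predecessors. No construction can fix this, because the lemma is false there. Take $\KB=\{*\preceq t\}$ and $s\in\mathcal{S}\setminus\{t\}$: $(*\textbf{-Top})$ gives $s\preceq *$, and $(\preceq\textbf{-Trans.})$ then gives $s\preceq t\in C(\KB)$. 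Semantically too, $\sigma(s)\subseteq\Pi=\sigma(*)\subseteq\sigma(t)$ in every model. Yet $t\neq *$, $t\neq s$ and $(s,t)\notin\preceq^+_\KB$. The paper's inspection argument overlooks the same chain through $*$.

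The correct condition adds the disjunct $(*,t)\in\preceq^+_\KB$. Equivalently, take the reflexive--transitive closure $R$ of $\preceq_\KB\cup\{(u,*)\mid u\in\mathcal{S}\cup\{*\}\}$. With $R$ both proofs go through. $R$ is closed under the three rules. For the countermodel, take one precisification $\pi_u$ per $u\in\mathcal{S}\cup\{*\}$, each carrying the empty interpretation, with $\sigma(u)=\{\pi_v\mid (v,u)\in R\}$, so that $\sigma(*)=\Pi$ automatically. This structure satisfies $\KB$ and refutes $s\preceq t$ whenever $(s,t)\notin R$. Including $\pi_*$ also repairs the case you flagged in your first construction, where $s=*$ and $t$ is reachable from every symbol of $\mathcal{S}$.
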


The above holds since it is clear from the rules in Figure \ref{fig:DRSL-modal-rules} that the only way to obtain standpoint sharpenings as conclusions are through one of the cases above. We are therefore able to compute the set of standpoint sharpening statements in \textsc{PTime}, since this amounts to computing transitive closure. With this in mind, we derive a set of KLM propositional statements from $\KB$ for each standpoint. Intuitively, this set represents the ``base case'' of beliefs for this standpoint.

\begin{definition}\label{definition:K_s-associated-knowledge-base}
    For a DRSL knowledge base $\KB\subseteq\DRSLLang$, and a standpoint $s\in\mathcal{S}$ we define the derived set $\KB_s$ as: 
    \[\KB_s:=\{\phi\in \KLMLang\mid \phi\in\KB \text{ or } \Box_t\phi\in\KB, s\preceq t\in C(\KB)\}.\]
\end{definition}

We then define another set of propositional KLM statements for each diamond-bound statement occurring in $\KB$. Intuitively, this represents constructing the precise ``version'' of a standpoint $s$'s beliefs once we take into account an extra possibility belief made explicit in the knowledge base.

\begin{definition}
    For a DRSL knowledge base $\KB\subseteq\DRSLLang$, a standpoint $s\in\mathcal{S}$ and a statement $\Diamond_s\phi\in \KB$, we define the set $\KB^\phi_s$ as:
    \[\KB^\phi_s:=\KB_s\cup\{\phi\}.\]
\end{definition}

Moreover, from this we can find a collection of sets in $\KLMLang$ which are all those that ought to be associated with a specific standpoint, representing each distinguished possibility for a standpoint which is found in the knowledge base.

\begin{definition}\label{definition:Prop-KB-s}
    For a knowledge base $\KB$ and a standpoint $s\in\mathcal{S}$, we define the set of \textit{s-associated propositional knowledge bases}, or $\textbf{PropKB}_{\KB}(s)$ as 

        \[\textbf{PropKB}_{\KB}(s):= \{\KB_t\mid t\preceq s\in C(\KB)\}
        \cup\{\KB^\phi_t\mid t\preceq s\in C(\KB), \Diamond
        _s\phi\in\KB\}.\]

\end{definition}

These derived sets are inspired by the sets constructed in the algorithm \texttt{StandpointSplit}, given by Leisegang et al.~\cite{LMR2024}. As a result, it is known that computing the sets $\KB_s$, $\KB^\phi_s$ and $\textbf{PropKB}_{\KB}(s)$ is in \textsc{PTime}. We then show how the sets defined above allow us to check for preferential entailment in the DRSL case.

\begin{proposition}\label{proposition:pref-entailment-of-box-statements-reducible}
Consider a DRSL knowledge base $\KB\subseteq\DRSLLang$, and a statement $\Box_s\psi\in \DRSLLang$, where $\psi\in\KLMLang$. Then, $\KB\vDash_P\Box_s\psi$ iff $\KB_s\vDash_{P,\textit{prop}} \psi$.
\end{proposition}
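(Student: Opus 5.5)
The plan is to prove the two directions separately. The ``if'' direction is a direct soundness argument. The ``only if'' direction is proved by contraposition: from a propositional countermodel of $\KB_s \vDash_{P,prop}\psi$ we build a preferential standpoint structure that satisfies $\KB$ but falsifies $\Box_s\psi$. Throughout, $\KB$ is in conjunction free normal form, as the section assumes (justified by Lemma~\ref{lemma:proof-theoretic-normal-form} and Lemma~\ref{corollary:semantic-normal-form}). So each element of $\KB$ is one of the following: a formula of $\KLMLang$, $\Box_t\phi$, $\Diamond_t\phi$ with $\phi\in\KLMLang$, or a sharpening $t\preceq u$.

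($\Leftarrow$) Assume $\KB_s\vDash_{P,prop}\psi$ and let $M=(\Pi,\sigma,\tau)$ satisfy $\KB$. Fix $\pi\in\sigma(s)$. We show $\tau(\pi)\Vdash\phi$ for every $\phi\in\KB_s$.
\begin{itemize}
\item If $\phi\in\KB$, then $M\Vdash\phi$ holds at every precisification, so in particular at $\pi$.
\item If $\Box_t\phi\in\KB$ with $s\preceq t\in C(\KB)$, then soundness (Lemma~\ref{theorem:main-soundness}) gives $M\Vdash s\preceq t$, i.e.\ $\sigma(s)\subseteq\sigma(t)$. Hence $\pi\in\sigma(t)$ and $\tau(\pi)\Vdash\phi$.
\end{itemize}
By the assumed propositional entailment, $\tau(\pi)\Vdash\psi$. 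Since $\pi\in\sigma(s)$ was arbitrary, $M\Vdash\Box_s\psi$.

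($\Rightarrow$) Suppose $\KB_s\not\vDash_{P,prop}\psi$, witnessed by a preferential interpretation $\I$ with $\I\Vdash\KB_s$ and $\I\not\Vdash\psi$. The key observation is that preferential entailment quantifies over \emph{all} preferential standpoint structures. These include structures in which some precisifications carry the empty interpretation $\I_\emptyset$, which satisfies every formula of $\KLMLang$. Define $M=(\Pi,\sigma,\tau)$ as follows:
\begin{itemize}
\item $\Pi=\{\pi_t\mid t\in\mathcal{S}\cup\{*\}\}\cup\{\pi_0\}$;
\item $\sigma(u)=\{\pi_t\mid t\preceq u\in C(\KB)\}\cup\{\pi_0\}$;
\item $\tau(\pi_s)=\I$, and $\tau(\pi)=\I_\emptyset$ for every other $\pi\in\Pi$.
\end{itemize}
Well-definedness follows from the sharpening rules. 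By ($*$\textbf{-Top}), $\sigma(*)=\Pi$. Each $\sigma(u)$ is non-empty because it contains $\pi_0$.

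Next we check $M\Vdash\KB$, one formula shape at a time.
\begin{itemize}
\item A sharpening $u\preceq v\in\KB$ holds because $t\preceq u\in C(\KB)$ implies $t\preceq v\in C(\KB)$ by ($\preceq$\textbf{-Trans.}), so $\sigma(u)\subseteq\sigma(v)$.
\item A formula $\phi\in\KB\cap\KLMLang$ lies in $\KB_s$, so it holds at $\pi_s$, and it holds trivially at every empty precisification.
\item For $\Box_u\phi\in\KB$, every element of $\sigma(u)$ is either empty, or equal to $\pi_s$ with $s\preceq u\in C(\KB)$. In the latter case $\phi\in\KB_s$, so $\I\Vdash\phi$.
\item For $\Diamond_u\phi\in\KB$, the witness $\pi_0\in\sigma(u)$ works.
\end{itemize}
Finally, $\pi_s\in\sigma(s)$ by ($\preceq$\textbf{-Refl.}) and $\tau(\pi_s)=\I\not\Vdash\psi$, so $M\not\Vdash\Box_s\psi$. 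Hence $\KB\not\vDash_P\Box_s\psi$.

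The main obstacle is the $\Diamond$-formulas in the ($\Rightarrow$) direction. The naive choice $\sigma(s)=\{\pi_t\mid t\preceq s\}$ would force $\I$ to witness some $\Diamond_s\phi$, and $\phi$ need not belong to $\KB_s$. The extra empty precisification $\pi_0$ in every $\sigma(u)$ is meant to resolve this. We must double-check that Definition~\ref{definition:preferential-standpoint-structure} indeed permits empty interpretations. It does, since the non-emptiness requirement appears only in the notion of satisfiability, not in the notion of structure used by $\vDash_P$.
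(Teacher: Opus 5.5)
Your proof is correct, and its skeleton matches the paper's. The ($\Leftarrow$) direction is the same direct semantic argument. For ($\Rightarrow$), both proofs place the propositional countermodel $\I$ at a precisification visible exactly to those $u$ with $s\preceq u\in C(\KB)$. Every $\Box_u\phi\in\KB$ is then respected there because $\phi\in\KB_s$.

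The difference lies in how the rest of the countermodel is obtained. The paper takes an arbitrary model $M$ of $\KB$ and grafts a fresh precisification $\pi^*$ onto it, so $\Diamond$-formulas simply keep their old witnesses. You instead build a canonical countermodel from scratch: every other precisification carries the empty interpretation, and the universally visible empty precisification $\pi_0$ witnesses every $\Diamond$-formula.

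Your version is self-contained in two respects. It makes explicit that $\KB$ has a model at all, which the paper tacitly assumes with ``suppose $M$ is a model of $\KB$''; this is true, but the paper does not say why. It also explicitly checks the members of $\KB\cap\KLMLang$ at the new precisification, which the paper glosses over.

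The paper's grafting argument, however, never uses empty interpretations as witnesses. It would therefore still work if $\vDash_P$ were restricted to structures whose interpretations are all non-empty, provided $\KB$ is satisfiable; the grafted $\I$ is automatically non-empty, since it falsifies $\psi$. Your $\pi_0$ device depends essentially on $\vDash_P$ quantifying over structures that admit empty interpretations.
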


Note that this also allows us to check whether $\KB\vDash_P \psi$ for $\psi\in\KLMLang$, since we can equivalently check $\KB\vDash_P\Box_*\psi$. For diamond-bound statements, we cannot reduce DRSL preferential entailment-checking to a single propositional entailment check, but rather to a set of propositional entailment checks.

\begin{proposition}\label{proposition:preferential-entailment-diamond-statements}
    Consider a DRSL knowledge base $\KB\subseteq\DRSLLang$, and a statement $\Diamond_s\psi\in \DRSLLang$, where $\psi\in\KLMLang$. Then $\KB\vDash_P \Diamond_s\psi$ iff $X\vDash_{P,\text{prop}} \psi$ for some $X\in \textbf{PropKB}_{\KB}(s)$.
\end{proposition}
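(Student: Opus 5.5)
The plan is to prove the two directions separately. The left-to-right direction goes by contraposition: from failing propositional checks I build a single countermodel. Throughout, I take $\KB$ in conjunction free normal form, so each element is a sharpening, a formula of $\KLMLang$, or $\Box_u\phi$ or $\Diamond_u\phi$ with $\phi\in\KLMLang$. I read the second component of Definition~\ref{definition:Prop-KB-s} as ranging over $\Diamond_t\phi\in\KB$ with $t\preceq s\in C(\KB)$; this is the reading under which the sets mirror \texttt{StandpointSplit}, and the soundness argument below needs it. I also assume $\KB$ is satisfiable; the degenerate case, where every formula is entailed, has to be discussed separately.

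\emph{Right to left (soundness).} Let $M\Vdash\KB$ and fix $X\in\textbf{PropKB}_\KB(s)$ with $X\vDash_{P,prop}\psi$.
\begin{itemize}
\item Suppose $X=\KB_t$ with $t\preceq s\in C(\KB)$. By soundness (Lemma~\ref{theorem:main-soundness}) we get $\sigma(t)\subseteq\sigma(s)$, and $\sigma(t)\neq\emptyset$, so pick $\pi\in\sigma(t)$. Then $\tau(\pi)\Vdash\KB_t$. Indeed, plain formulas of $\KB$ hold at every precisification. For $\Box_u\phi\in\KB$ with $t\preceq u\in C(\KB)$, we have $\pi\in\sigma(t)\subseteq\sigma(u)$. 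Hence $\tau(\pi)\Vdash\psi$, so $M\Vdash\Diamond_s\psi$.
\item Suppose $X=\KB^\phi_t$. Then $\Diamond_t\phi\in\KB$ gives a $\pi\in\sigma(t)$ with $\tau(\pi)\Vdash\phi$. The same argument shows $\tau(\pi)\Vdash\KB_t\cup\{\phi\}$, hence $\psi$.
\end{itemize}

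\emph{Left to right (countermodel).} Suppose no $X\in\textbf{PropKB}_\KB(s)$ entails $\psi$. For each such $X$, choose a preferential interpretation $\I_X\Vdash X$ with $\I_X\nVdash\psi$. This $\I_X$ is automatically non-empty, since the empty interpretation satisfies everything. Build $M=(\Pi,\sigma,\tau)$ as follows.
\begin{itemize}
\item Let $\Pi=\{\pi_u\mid u\in\mathcal{S}\cup\{*\}\}\cup\{\pi_u^\phi\mid\Diamond_u\phi\in\KB\}$.
\item Let $\sigma(u)=\{\pi_v,\pi_v^\phi\mid v\preceq u\in C(\KB)\}$. This set is non-empty by \textbf{$\preceq$-Refl.}, and $\sigma(*)=\Pi$ by \textbf{$*$-Top}.
\item Set $\tau(\pi_u)$ to be a non-empty model of $\KB_u$, and $\tau(\pi^\phi_u)$ a non-empty model of $\KB^\phi_u$. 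Whenever the corresponding set lies in $\textbf{PropKB}_\KB(s)$, use $\I_X$. Otherwise, satisfiability of $\KB$ supplies such a model via the soundness argument above, applied to some model of $\KB$.
\end{itemize}

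It remains to check $M\Vdash\KB$, which is the main obstacle.
\begin{itemize}
\item For $\Box_u\phi$: every element of $\sigma(u)$ is $\pi_v$ or $\pi_v^{\phi'}$ with $v\preceq u\in C(\KB)$, and $\phi\in\KB_v$ by definition. Since $\KB_v$ is contained in both $\KB_v$ and $\KB_v^{\phi'}$, the formula $\phi$ holds there.
\item For $\Diamond_u\phi$: the witness is $\pi^\phi_u\in\sigma(u)$.
\item Plain formulas lie in every $\KB_v$.
\item For $u\preceq w\in\KB$: by Lemma~\ref{lemma:entailment-for-standpoint-sharpenings}, $C(\KB)$ restricted to sharpenings is a reflexive–transitive closure with top $*$. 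Hence $v\preceq u$ implies $v\preceq w$, which gives $\sigma(u)\subseteq\sigma(w)$.
\end{itemize}
Finally, every precisification in $\sigma(s)$ is interpreted by some $\I_X$ with $X\in\textbf{PropKB}_\KB(s)$, so none satisfies $\psi$. Hence $M\nVdash\Diamond_s\psi$, contradicting $\KB\vDash_P\Diamond_s\psi$.

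The delicate points are exactly the ones in that check: the transitivity bookkeeping for sharpenings, and making sure the indexing of $\Diamond$-statements in $\textbf{PropKB}$ matches the precisifications placed in $\sigma(s)$.
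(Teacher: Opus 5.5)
This is essentially the paper's proof. The countermodel is the same: one precisification per $\KB_u$ and per $\KB^\phi_u$, with $\sigma(u)$ collecting those indexed by some $v\preceq u\in C(\KB)$, and countermodels to $\psi$ placed on exactly the sets in $\textbf{PropKB}_\KB(s)$. The converse uses the same two cases, except that for $X=\KB_t$ you argue semantically, whereas the paper goes through Proposition~\ref{proposition:pref-entailment-of-box-statements-reducible} and the rules \textbf{D.} and \textbf{P.b.}. Your reading of $\textbf{PropKB}_\KB(s)$, with $\Diamond_t\phi\in\KB$ and $t\preceq s\in C(\KB)$, is also the one the paper's proof actually uses.

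What you should fix is the restriction to satisfiable $\KB$, since the statement has no such hypothesis.
\begin{itemize}
\item In the paper, $\vDash_P$ quantifies over all preferential standpoint structures, and these may use the empty interpretation (Definition~\ref{definition:propositional-preferential-interpretation} allows $W=\emptyset$).
\item So the auxiliary models of $\KB_u$ and $\KB^\phi_u$ outside $\textbf{PropKB}_\KB(s)$ need not be non-empty. Take any model, e.g.\ the empty one, as the paper does; the same construction then handles every $\KB$ without invoking satisfiability.
\end{itemize}

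Your description of the excluded case is also wrong: an unsatisfiable $\KB$ need not entail everything. Take $\KB=\{\Diamond_s\bot\}$ and $t\in\mathcal{S}$ distinct from $s$. Let $\Pi=\{\pi_1,\pi_2\}$, $\sigma(s)=\{\pi_1\}$, $\sigma(t)=\{\pi_2\}$, $\sigma(*)=\Pi$, with $\tau(\pi_1)$ empty and $\tau(\pi_2)$ a single state falsifying $p$. This is a model of $\KB$ in which $\Diamond_t p$ fails, matching the fact that $\textbf{PropKB}_\KB(t)=\{\emptyset\}$.

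You might instead restrict $\vDash_P$ to structures with non-empty interpretations; under that reading your non-empty construction is the right one for satisfiable $\KB$. But then this same example makes the proposition false, since $\KB$ vacuously entails $\Diamond_t p$. So no ``separate discussion'' of the unsatisfiable case could complete the proof under that reading.
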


We can then, in general, reduce DRSL preferential entailment-checking to propositional preferential entailment-checking. Each DRSL formula can be rewritten in normal form. Then, in order to check such a conjunction in normal form, we just check each of the conjuncts using the corresponding propositional entailment checks from Propositions \ref{proposition:pref-entailment-of-box-statements-reducible} and \ref{proposition:preferential-entailment-diamond-statements}. Using this, we are able to analyse the complexity of preferential reasoning in DRSL, and show that it is within the same complexity class as preferential entailment for KLM propositional logic.

\begin{theorem}\label{theorem:complexity-of-DRSL-preferentail-entailment}
    Preferential entailment-checking in DRSL is \textsc{coNP}-complete.
\end{theorem}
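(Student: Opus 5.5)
We prove membership in \textsc{coNP} and \textsc{coNP}-hardness separately. For hardness, we give a direct reduction from propositional preferential entailment, which is known to be \textsc{coNP}-complete (for instance, because it subsumes classical propositional entailment via the encoding of $\alpha$ as $\neg\alpha\twiddle\bot$). Take $\KB\subseteq\KLMLang$ and $\phi\in\KLMLang$. Every formula of $\KLMLang$ is also a DRSL formula. By Proposition~\ref{proposition:pref-entailment-of-box-statements-reducible}, applied with the universal standpoint, $\KB\vDash_P\Box_*\phi$ iff $\KB_*\vDash_{P,prop}\phi$. Here $\KB_*$ contains $\KB$ itself and nothing more, since $\KB$ has no modal formulas. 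Equivalently, one can use the remark after that proposition, which lets us check $\KB\vDash_P\phi$ directly. The translation is the identity, so hardness follows immediately.

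For membership, we show that the complement, non-entailment, is in \textsc{NP}. Given $\KB$ and a query $\phi$, we first rewrite $\phi$ into normal form using Lemma~\ref{corollary:semantic-normal-form} or Lemma~\ref{lemma:proof-theoretic-normal-form}, and we put $\KB$ into conjunction free normal form. If $\phi$ is a standpoint sharpening, Lemma~\ref{lemma:entailment-for-standpoint-sharpenings} settles entailment in \textsc{PTime} by computing a transitive closure. Otherwise $\phi=\bigwedge_i\phi_i$, and $\KB\vDash_P\phi$ iff $\KB\vDash_P\phi_i$ for every $i$.

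Each conjunct is handled by one of the two reductions. If $\phi_i=\Box_s\psi$, or $\phi_i$ is an unmodalised $\psi$ treated as $\Box_*\psi$, then Proposition~\ref{proposition:pref-entailment-of-box-statements-reducible} reduces the check to a single propositional test $\KB_s\vDash_{P,prop}\psi$. If $\phi_i=\Diamond_s\psi$, then Proposition~\ref{proposition:preferential-entailment-diamond-statements} gives: $\KB\vDash_P\Diamond_s\psi$ iff some $X\in\textbf{PropKB}_\KB(s)$ satisfies $X\vDash_{P,prop}\psi$. The sets $\KB_s$, $\KB^\phi_s$ and $\textbf{PropKB}_\KB(s)$ are computable in \textsc{PTime}. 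Moreover, $\textbf{PropKB}_\KB(s)$ contains only polynomially many sets: at most $|\mathcal{S}|\cdot(1+|\KB|)$ of them, each of size polynomial in the input.

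A certificate of non-entailment then consists of three parts. First, an index $i$ of a failing conjunct. Second, if $\phi_i$ is box-bound, a propositional counter-model certificate for $\KB_s\not\vDash_{P,prop}\psi$. Third, if $\phi_i$ is diamond-bound, counter-model certificates for $X\not\vDash_{P,prop}\psi$ for every $X\in\textbf{PropKB}_\KB(s)$ simultaneously. Since propositional preferential non-entailment is in \textsc{NP}, each such certificate has polynomial size and is checkable in polynomial time. Because there are only polynomially many $X$, their concatenation is still polynomial. Hence non-entailment is in \textsc{NP}, and entailment is in \textsc{coNP}.

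The main obstacle is controlling the size blow-up of the normal-form conversion of the query $\phi$. Lemma~\ref{corollary:semantic-normal-form} guarantees that a normal form exists but gives no size bound. Nested modalities must collapse to a single outermost modality, which is possible using rules such as \textbf{5'.}, \textbf{4'.} and \textbf{T}$^*$. We also need distribution of $\Box$ over conjunction ($\Box$-\textbf{Dist}), plus $\Diamond_s(\psi_1\wedge\Box_t\psi_2)\equiv\Diamond_s\psi_1\wedge\Box_t\psi_2$. I expect to argue that the rewriting only deletes modalities and regroups conjuncts, with no duplication beyond pulling box-subformulas out of diamonds. This should yield a normal form of size linear in $|\phi|$, computable in polynomial time.

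If a direct size bound turns out to be delicate, the fallback is to avoid explicit normalisation. Instead, we guess the counter-model structure for the failing conjunct and verify it inductively on the syntax of $\phi$, using the polynomial-size witnesses produced by the constructions behind Propositions~\ref{proposition:pref-entailment-of-box-statements-reducible} and~\ref{proposition:preferential-entailment-diamond-statements}.
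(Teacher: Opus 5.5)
Your proposal is correct and follows essentially the paper's route. Hardness comes from the identity embedding of propositional preferential entailment. Membership comes from normalising the query, splitting it into conjuncts, and reducing each conjunct through Propositions~\ref{proposition:pref-entailment-of-box-statements-reducible} and~\ref{proposition:preferential-entailment-diamond-statements} to polynomially many independent propositional preferential checks.

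The differences are minor. The paper obtains the polynomial-time normal-form conversion from the rule-counting argument in the proof of Lemma~\ref{lemma:proof-theoretic-normal-form}, which already settles the size concern you flag. The paper also merges all checks into one \textsc{UNSAT} instance by renaming atoms apart. Your non-entailment certificate (one failing conjunct, plus counter-models for every $X\in\textbf{PropKB}_\KB(s)$ in the diamond case) handles the ``all conjuncts / some $X$'' structure more transparently. A single conjunction of renamed-apart instances is unsatisfiable as soon as \emph{one} instance is, so it matches only the diamond case.
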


\subsection{Entailment Based on Single Ranked Models}\label{subsection:ranked-entailments}

Next, we consider systematically a class of non-monotonic entailment relations considered in KLM-style defeasible reasoning. In particular, these are the defeasible entailment relations where the order on the preferential interpretation can be expressed by a \textit{ranking function}. This is an important class of relations, as many well-known non-monotonic entailment relations fall within this class, such as rational closure \cite{lehmann:conditionalentail, Pearl:SystemZ}, lexicographic closure \cite{lehmann:lexicographicreason} and any inference relation based on a single $c$-representation \cite{KernIsberner2001}.

\begin{definition}
 A \emph{ranking function} $r$ is a function $r:2^\mathcal{P}\rightarrow \mathds{N}\cup \{\infty\}$, satisfying the following property: if $r(u)<\infty$, then for every $0\leq j<r(u)$ there exists $v\in 2^\mathcal{P}$ such that $r(v)=j$.
\end{definition}

Each ranking function can be associated to a preferential interpretation by choosing $\I=(W,l,<_r)$ where $W=2^{\mathcal{P}}\setminus r^{-1}\{\infty\}$, $l$ is the identity and $u<_r v$ iff $r(u)<r(v)$. We write $r\Vdash \alpha\twiddle\beta$ iff. $min_{<_r}\llbracket\alpha\rrbracket\subseteq \llbracket\beta\rrbracket$. That is, if the minimally ranked $\alpha$-valuations satisfy $\beta$.
Given a knowledge base $\KB\in\KLMLang$ and a formula $\phi\in\KLMLang$, we consider the non-monotonic entailment relations $\dentails$ such that $\KB\dentails \phi$ iff $r\Vdash \phi$, where $r$ is a single ranking function which satisfies $\KB$. This builds on the work of Casini et al.~\cite{casini:beyondratclosure}, who analyse algorithmic and semantic approaches to this general class of entailments. In particular, they introduce the \texttt{DefeasibleEntailment} algorithm which takes a ranking function, and provides an algorithmic means for computing the defeasible entailment relation which is based on this ranking. Our work can be seen as a significant extension of the work of Leisegang et al.~\cite{LMR2024} who consider Rational Closure in the context of DRSL. In the following section, we provide a systematic means for lifting both the semantics and algorithms for  entailments within this class. Furthermore, we will show that the proposed algorithms and semantics are equivalent definitions for determining such an entailment. Finally, we show that entailment-checking for this class of relations in the DRSL case stays within the same complexity class as entailment checking in the propositional case. In our analysis of complexity, we note that \texttt{DefeasibleEntailment} may require exponentially many SAT calls in the size of $\KB$ \cite{casini:beyondratclosure}. This is under the assumption that $r_\KB$ is already computed, and in general there are no known complexity bounds for the construction $r_\KB$ from $\KB$. In the well-studied cases of rational closure and lexicographic closure, entailment is $\textsc{P}_{\parallel}^\textsc{NP}$-complete and $\textsc{P}^\textsc{NP}$-complete, respectively \cite{lehmann:conditionalentail,LucasiewiczEiter:ComplexityFromDefaultReasoning}. In this section, we make the following two assumptions about a given propositional defeasible entailment relation $\dentails$:

\begin{enumerate}

    \item We assume that, given a KB $\KB\in \KLMLang$, there exists a deterministic means to construct our ranking function $r_\KB$ such that $\KB\dentails \phi$ iff $r_\KB \Vdash \phi$. That is, there exists some known mapping $r:2^{\KLMLang}\rightarrow\mathscr{R}$ which defines $\dentails$, where $\mathscr{R}$ is the set of ranking functions. We refer to such a mapping as a \textit{ranking strategy}, and denote $r(\KB)$ as $r_{\KB}$.
    \item We assume that $\phi\in\KB$ implies $\KB\dentails \phi$ and if $\phi$ is a Boolean formula $\KB\vDash_{P,prop}\phi$ iff $\KB\dentails \phi$. These properties are known as \textit{Inclusion} and \textit{Classical Preservation}, and are considered necessary for a well-defined defeasible entailment \cite{casini:beyondratclosure}.
    \end{enumerate}

For such an entailment relation $\dentails_{prop}$, our goal is to define an entailment relation $\dentails_{DRSL}$ which faithfully lifts the entailment relation from $\KLMLang$ to $\DRSLLang$. In order to understand this, we once again consider the approach we take in  the case of preferential reasoning. In Definition \ref{definition:Prop-KB-s}, $\textbf{Prop}_{\KB}(s)$ refers to a number of different propositional knowledge bases which represent salient sets of beliefs which ought to be considered for each standpoint. Here, $\KB_s$ provides the basic necessary set of knowledge that this standpoint is required to have, while each $\KB_s^\phi$ is required in order to combine the basic beliefs of $s$ with additional possibilities which are \textit{made distinct} through their inclusion in the knowledge base. Furthermore, if we let $r$ be the ranking strategy associated with $\dentails_{prop}$, we are able to semantically realise the sets of beliefs in one of the knowledge bases $X\in \textbf{Prop}_{\KB}(s)$ through the associated ranking function $r_{X}$. Using this motivation, we define a ranked standpoint structure in which each precisification is exactly the ranking function associated to some knowledge base in $\textbf{Prop}_{\KB}(*)$, and the set of precisifications assigned to each standpoint $s$ is exactly those precisifications which model the different sets of beliefs in $\textbf{Prop}_{\KB}(s)$. 

\begin{definition}\label{def:model-for-defeasibleentailment-dentails}
    Consider a DRSL knowledge base $\KB\subseteq \DRSLLang$, a propositional defeasible entailment $\dentails$ and its associated ranking strategy $r$. We define the \textit{standpoint $\dentails$-model for $\KB$} as $M_{\KB}^{\dentails}=(\Pi_{\KB},\sigma_{\KB},\tau^r_{\KB})$ where
    \begin{itemize}
        \item $\Pi_{\KB}:=\{\pi_X\mid X\in\textbf{Prop}_{\KB}(*)\setminus\{\KB_*\} \}$.
        \item $\sigma_\KB(s)=\{\pi_X\in \Pi_{\KB}\mid X\in \textbf{Prop}_{\KB}(s)\setminus\{\KB_*\}\}$ for $s\in \mathcal{S}$.
        \item $\tau^r_{\KB}(\pi_X)=r_X$, where $r_X$ is the ranking function determined by the ranking strategy $r$.
    \end{itemize}
\end{definition}

Note here that the definitions of $\Pi_{\KB}$ and $\sigma_{\KB}$ depends only on $\KB$ and not on the particular entailment or ranking strategy which we consider, and therefore we omit $r$ from their indices.
The first basic results to note is that $M_{\KB}^{\dentails}$ is a model of $\KB$.

\begin{proposition}\label{proposition:ranked-entailment-preserves-satisfiability-and-contains-the-KB}
Consider $\KB\subseteq \DRSLLang$, a defeasible entailment $\dentails$ and the associated ranking strategy $r$:
\begin{enumerate}
    \item If $\KB$ is satisfiable, then there is no $\pi_X\in \Pi_{\KB}$ such that $\tau_{\KB}^r(\pi_X)=\emptyset$.
    \item If $\phi\in\KB$ then $M_{\KB}^{\dentails}\Vdash \phi$.
\end{enumerate}
\end{proposition}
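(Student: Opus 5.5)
For part (1), we reduce to the propositional setting. Fix $\pi_X\in\Pi_\KB$, so $X\in\textbf{PropKB}_\KB(*)\setminus\{\KB_*\}$: either $X=\KB_t$ for some $t\in\mathcal{S}$ or $X=\KB^\phi_t$ with $\Diamond_t\phi\in\KB$. Since $\KB$ is satisfiable, choose a preferential standpoint structure $M=(\Pi,\sigma,\tau)$ with $M\Vdash\KB$ and every $\tau(\pi)$ non-empty.

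The first step is to show $X$ is non-trivially satisfiable.
\begin{itemize}
\item If $X=\KB_t$: take any $\pi\in\sigma(t)$, which exists because $\sigma(t)\neq\emptyset$. Every $\phi\in\KB\cap\KLMLang$ holds at all precisifications. For $\Box_u\phi\in\KB$ with $t\preceq u\in C(\KB)$, soundness (Lemma~\ref{theorem:main-soundness}) gives $\sigma(t)\subseteq\sigma(u)$, so $\tau(\pi)\Vdash\phi$. Hence $\tau(\pi)\Vdash\KB_t$ and $\tau(\pi)$ is non-empty.
\item If $X=\KB^\phi_t$: the semantics of $\Diamond_t\phi$ gives some $\pi\in\sigma(t)$ with $\tau(\pi)\Vdash\phi$. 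By the same argument this $\tau(\pi)$ also satisfies $\KB_t$, so $X$ is again non-trivially satisfiable.
\end{itemize}

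The second step, which I expect to be the main obstacle, is to pass from non-trivial satisfiability of $X$ to non-emptiness of $r_X$, meaning some valuation gets finite rank. The paper's stated assumptions on $\dentails$ fix no concrete ranking strategy, so I would use Classical Preservation. Suppose $r_X$ assigns $\infty$ to every valuation. Then $r_X\Vdash\bot$ vacuously, so $X\dentails\bot$, and Classical Preservation yields $X\vDash_{P,prop}\bot$. But the non-empty model $\tau(\pi)$ found above satisfies $X$ and does not satisfy $\bot$, which is a contradiction. Here $\bot$ abbreviates $\top\twiddle\bot$; a non-empty interpretation has a minimal state and so falsifies it. So $\tau^r_\KB(\pi_X)\neq\emptyset$.

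For part (2), fix $\phi\in\KB$. The knowledge base is in conjunction-free normal form, so $\phi$ has one of four shapes, and each is checked using Inclusion ($\psi\in X$ implies $r_X\Vdash\psi$):
\begin{itemize}
\item $\phi\in\KLMLang$: we have $\phi\in\KB_t\subseteq X$ for every relevant $X$, so every precisification satisfies $\phi$.
\item $\phi=\Box_s\psi$: for $\pi_X\in\sigma_\KB(s)$, $X$ is $\KB_t$ or $\KB^{\chi}_t$ with $t\preceq s\in C(\KB)$. By Definition~\ref{definition:K_s-associated-knowledge-base}, $\psi\in\KB_t\subseteq X$, so $r_X\Vdash\psi$.
\item $\phi=\Diamond_s\psi$: the set $\KB^\psi_s$ lies in $\textbf{PropKB}_\KB(s)$, taking $t=s$ via $s\preceq s$. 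Since $\psi\in\KB^\psi_s$, the corresponding precisification witnesses $\Diamond_s\psi$.
\item $\phi=s\preceq t$: every $X\in\textbf{PropKB}_\KB(s)$ also lies in $\textbf{PropKB}_\KB(t)$ by transitivity in $C(\KB)$ (Lemma~\ref{lemma:entailment-for-standpoint-sharpenings}), so $\sigma_\KB(s)\subseteq\sigma_\KB(t)$.
\end{itemize}

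One subtlety is that $\sigma_\KB(s)$ must be non-empty, and indices like $\Diamond_s$ versus $\Diamond_t$ in Definition~\ref{definition:Prop-KB-s} must be read consistently. Non-emptiness holds because $\KB_s$ itself belongs to $\textbf{PropKB}_\KB(s)$ and, for $s\neq *$, is not removed as $\KB_*$. The universal standpoint is handled by noting that $\sigma_\KB(*)$ is defined as $\Pi_\KB$.
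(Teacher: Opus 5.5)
Your proof is correct and follows essentially the paper's argument. In part 1, Classical Preservation turns an empty $r_X$ into $X\vDash_{P,prop}\bot$, which is then contradicted by a non-empty model of $X$ extracted from a model of $\KB$; the paper packages that extraction as an appeal to Proposition~\ref{proposition:preferential-entailment-diamond-statements} to get $\KB\vDash_P\Diamond_*\bot$, while you argue directly. Part 2 is the same Inclusion-based case analysis, and you additionally cover the $\Diamond_s\psi$ case (which the paper's proof skips despite claiming all cases are handled) and rightly read the index in Definition~\ref{definition:Prop-KB-s} as $\Diamond_t\phi\in\KB$.
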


This model defines a means for lifting a propositional defeasible entailment $\dentails_{prop}$ to the DRSL case. 

\begin{definition}
    For $\KB\subseteq \DRSLLang$ and $\phi\in \DRSLLang$, we say that $\KB \dentails_{DRSL} \phi$ iff $M^{\dentails_{prop}}_{KB}\Vdash \phi$.
\end{definition}

For the rest of the section we abuse notation by using $\dentails$ to refer to both the propositional entailment and its lifting to DRSL. An important design choice to discuss here is the fact that in our model, we remove $\KB_*$ from $\textbf{Prop}_{\KB}(*)$. We refer to this as the \textit{closed world assumption for standpoints}:

   The \emph{closed world assumption for standpoints} is the assumption that there exist no novel standpoints outside those that occur in our vocabulary. This is what allows us to faithfully construct $M_{\KB}^{\dentails}$ as a representative model for entailment, since we can create only precisifications that occur in standpoints that are named in our knowledge base. In particular, in both the algorithmic and semantic lifting of $\dentails$ to the propositional case, we remove $\KB_*$ from the considered propositional knowledge bases. This is because $\KB_*$ only contains those statements which we know are universally true, and therefore, emulates a standpoint where nothing except universal beliefs are known. We make this design choice since when we are working with defeasible entailment, we are reasoning non-monotonically about the beliefs of known standpoints, and so we believe this assumption is a valid one. In fact, the alternative may lead us to disregard conclusions which seem reasonable in a setting of prototypical reasoning. Consider the knowledge base $\KB=\{\Box_s(p\twiddle q),\Box_t(p\twiddle q)\}$. It seems reasonable here to say that all considered standpoints believe that $p\twiddle q$ holds. That is $\KB\dentails \Box_*(p\twiddle q)$. However, if we were to require that $\KB_*$ is considered, then we must consider whether the empty set entails $p\twiddle q$, which it usually does not for well-known definitions of $\dentails$. Hence, $\KB\ndentails \Box_*(p\twiddle q)$, simply because $\KB_*$ is empty. This seems undesirable, since our logic ultimately looks to increase possible agreements between standpoints, if possible, and withdraw upon learning new information which contradicts this (for example, by adding $v$ to $\mathcal{S}$ and $\Box_v(p\twiddle\neg q)$ to $\KB$). The same assumption is not made when $\Diamond_*\psi$ is introduced to the knowledge base. In this case, we still consider $\KB^\psi_*$, since our knowledge informs us that some standpoint holds $\psi$ possible, but we cannot be more specific about which standpoint that is. A result of this is that it is possible to emulate the open world setting using our framework by simply adding $\Diamond_*\top$ to a given DRSL knowledge base. This necessitates the consideration of the knowledge base $\KB^\top_*$ in the semantics, which is deductively equivalent to $\KB_*$. That is, it introduces a ``place-holder'' precisification, whose only beliefs are those we know to be universal across our whole standpoint domain, and therefore includes the possibility of unnamed standpoints where no details of their beliefs are known.

One corollary of this, is that the entailment for DRSL is not well-defined when our knowledge base is propositional. If $\KB\subseteq \KLMLang$, then for any defeasible entailment $\Pi_\KB=\emptyset$ and so $M^{\dentails}_\KB$ is no longer a well-defined standpoint structure. However, we note that in cases where the knowledge base is entirely propositional, we can refer to well-known propositional approaches to KLM, or we can avoid this by simulating the open world scenario by adding $\Diamond_*\top$ to our knowledge base. With this addition, we get the expected result that $\dentails_{DRSL}$ collapses into $\dentails_{prop}$ when considering propositional statements.

\begin{proposition}\label{proposition:defeasible-entailment-collapses-into-propositional-case-open-world}
    Suppose $\KB=\KB'\cup\{\Diamond_*\top\}$ where $\KB'\subseteq\KLMLang$. Then for $\phi\in \KLMLang$, we have $\KB\dentails_{DRSL} \phi$  iff $\KB'\dentails_{prop} \phi$.
\end{proposition}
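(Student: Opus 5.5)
We compute the model $M^{\dentails}_{\KB}$ explicitly and then unfold the satisfaction clauses.

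\textbf{The set of precisifications.} Since $\KB'\subseteq\KLMLang$, the only modal statement in $\KB$ is $\Diamond_*\top$, and $\KB$ contains no standpoint sharpenings. By Lemma~\ref{lemma:entailment-for-standpoint-sharpenings}, $t\preceq *\in C(\KB)$ holds for every $t\in\mathcal{S}\cup\{*\}$. For every $t$, the set $\KB_t$ of Definition~\ref{definition:K_s-associated-knowledge-base} is exactly $\KB'$, because no box-bound statements occur in $\KB$. The only diamond statement is $\Diamond_*\top$, so by Definition~\ref{definition:Prop-KB-s}
\[\textbf{PropKB}_{\KB}(*)=\{\KB'\}\cup\{\KB'\cup\{\top\}\}.\]
A subtlety arises here. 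The set $\KB_*=\KB'$ is removed in Definition~\ref{def:model-for-defeasibleentailment-dentails}. If $\top\notin\KB'$, then $\KB'\cup\{\top\}$ is a syntactically different set, and it survives. So $\Pi_{\KB}=\{\pi_X\}$ with $X=\KB'\cup\{\top\}$.

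If $\top\in\KB'$, the two sets coincide syntactically. I would treat $\KB^\top_*$ as a distinct indexed knowledge base, as the closed-world discussion clearly intends, by reading $\textbf{PropKB}$ as an indexed family. In either case the model has exactly one precisification $\pi_X$, with $\tau(\pi_X)=r_X$.

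\textbf{Unfolding satisfaction.} For $\phi\in\KLMLang$, Definition~\ref{definition:satisfaction-preferential-standpoint-structure} gives
\[M^{\dentails}_{\KB}\Vdash\phi\iff r_X\Vdash\phi\quad\text{(the only precisification)}\iff X\dentails_{prop}\phi,\]
where the last step uses the ranking strategy assumption. It remains to show that $X\dentails_{prop}\phi$ iff $\KB'\dentails_{prop}\phi$. In other words, adding the tautology $\top$ (abbreviating $\neg\top\twiddle\bot$) does not change the ranking-based entailment.

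\textbf{Main obstacle.} The difficulty is the step $r_{\KB'\cup\{\top\}}$ versus $r_{\KB'}$. Nothing in the two stated assumptions (Inclusion and Classical Preservation) explicitly forces a ranking strategy to be insensitive to adding a tautological formula. I would first try to argue that $\neg\top\twiddle\bot$ is satisfied by every ranking function, since $\llbracket\neg\top\rrbracket=\emptyset$. Hence any reasonable (syntax-independent) ranking strategy should yield the same ranking for $\KB'$ and for $\KB'\cup\{\top\}$.

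Concretely, I would make explicit the implicit assumption that $r$ depends only on the set of models, or at least is invariant under adding formulas satisfied by all rankings. I would verify that this holds for the standard strategies, rational closure and lexicographic closure. In both, a tautological conditional with antecedent $\neg\top$ is assigned to rank $\infty$, or to the bottom rank. It therefore neither changes the base ranks nor affects any counting of violated formulas.

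With this invariance, $r_X=r_{\KB'}$, and the equivalence follows immediately in both directions.
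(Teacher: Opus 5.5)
Your proof follows the paper's exactly: both compute $\textbf{PropKB}_{\KB}(*)=\{\KB_*,\KB_*^\top\}$ with $\KB_*=\KB'$, note that discarding $\KB_*$ leaves the single precisification $\pi_{\KB_*^\top}$, and so reduce the claim to $\KB'\cup\{\top\}\dentails\phi$ iff $\KB'\dentails\phi$. The paper justifies that last step only with ``the addition of $\top$ is tautologous'', so your two additions make your version strictly more careful than the original. First, your explicit invariance assumption on the ranking strategy is genuinely needed: Inclusion and Classical Preservation alone let a strategy pick different rankings satisfying $\KB'$ for $\KB'$ and for $\KB'\cup\{\top\}$, although invariance does hold for rational and lexicographic closure. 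Second, you handle the case $\top\in\KB'$, which the paper does not.
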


\begin{example}

We show here the model construction of the rational closure of $\KB_T$ in Example \ref{example:tomato-model}. This is given by the model $M_{RC}=(\Pi,\sigma,\tau)$ where $\Pi=\{\pi_B,\pi_C,\pi_L\}$; $\sigma(B)=\{\pi_B\}$, $\sigma(C)=\{\pi_C,\pi_L\}$, $\sigma(L)=\{\pi_L\}$; $\tau(\pi_B)$ is given by a ranking function where $r(v)=0$ if $v\Vdash(t\rightarrow f)\wedge (f\rightarrow v)$ and $r(v)=\infty$ otherwise; $\tau(\pi_C)$ and $\tau(\pi_L)$ are defined as ranking functions in the table below:

\hspace{5pt}
\begin{center}
{   \small
      \begin{tabular}
    {|c|c|c|}
      \hline  
      \textbf{rank} & \textbf{$\tau(\pi_C)$ }& \textbf{$\tau(\pi_L)$}\\
      \hline
      $\infty$ & all other valuations & all other valuations  \\
      \hline
        \multirow{2}{*}{1} & $\{ts_a vs_wf\}$, $\{ts_wf\}$,
         & \multirow{2}{*}{$\{ts_wf\}$, $\{t\}$}\\
         & $\{t\}$, $\{s_a vs_wf\}$ & \\
        \hline
        0 & $\{ts_a v\}$, $\{s_a v\}$,$\{s_wf\}$, $\emptyset$ & $\{ts_a v\}$, $\{s_a v\}$,$\{s_wf\}$, $\emptyset$\\
        \hline
    \end{tabular}}
  \end{center}  
\hspace{5pt}

We can see that $M\Vdash \Box_C(t\twiddle \neg f)$ and so $\KB\dentails_{RC} \Box_C(t\twiddle \neg f)$, where $\dentails_{RC}$ denotes rational closure entailment. Moreover, we can see by the counter-model in Example \ref{example:tomato-model} that $\KB_T\nvDash_P \Box_C(t\twiddle \neg f)$, showing that $\dentails_{RC}$ is stictly stronger than $\vDash_P$.
\end{example}

Now that we have defined a semantic means for extending ranking-based defeasible entailment from the propositional to the DRSL case, we analyse algorithmic approaches to DRSL, and show that we can lift entailment algorithms from the propositional case to the DRSL case. The construction of such algorithms is linked closely to the semantic structures previously defined. Since our basis for lifting entailment to a DRSL knowledge base involves utilising the propositional knowledge bases in $\textbf{Prop}_{\KB}(*)$, we can similarly use an algorithm which queries defeasible entailment in DRSL by using the original propositional algorithms applied to appropriate knowledge bases in $\textbf{Prop}_{\KB}(*)$. This is defined by the algorithm $\texttt{StdptRankEntail}$ in Figure \ref{fig:ranked-entailment-DRSL-main}, which is a generalization of the DRSL algorithm for rational closure proposed by Leisegang et al. \cite{LMR2024}. The algorithm works by directly checking a query with respect to the underlying propositional algorithm given for $\dentails$, where we check a query of the form $\Box_s\phi$ or $\Diamond_s\phi$ by checking the knowledge bases in $\textbf{Prop}_{\KB}(s)$. Note here that unlike the case of $\vDash_P$, we cannot check $\Box_s\phi$ queries by simply checking with respect to the knowledge base $\KB_s$. This is due to the non-monotonicity of $\dentails$, which means that there might be some $\KB_s^\psi$ and $\alpha\in\KLMLang$ such that $\KB_s\dentails \alpha$ and $\KB_s^\psi\ndentails \alpha$. We also note that in its most general form, our algorithm takes the ranking strategy $r$ as an input. For known entailment relations such as rational and lexicographic closure, we replace the call to $\texttt{DefeasibleEntail}$ with calls to the propositional rational and lexicographic closure algorithms respectively. Lastly, we note the abuse of notation here that in the original case, $\texttt{DefeasibleEntail}$ is only defined for single defeasible implications ``$\alpha\twiddle\beta$'', while in our case we allow conjunctions of such implications as inputs. In this case, we treat computing $\texttt{DefeasibleEntail}$ for a conjunction of implications as equivalent to computing $\texttt{DefeasibleEntail}$ for each of its conjuncts. The algorithm structures itself similarly to the semantic model in Definition \ref{def:model-for-defeasibleentailment-dentails}: we split the model into a series of precisifications whose valuations are determined by propositional knowledge bases. On the other hand we split the algorithm into a series of calls to algorithms defined in the propositional case based on the same set of knowledge bases. This leads us to the  following correspondence result.

\begin{figure}
  \begin{algorithm}[H]
  \small
		\caption{StdptRankEntail}
			\textbf{Input}: A DRSL knowledge base $\mathcal{K}$ in normal form, a ranking strategy $r$ and a (non-standpoint sharpening) DRSL $\phi$ in normal form.\\
			\mbox{\textbf{Output}: True if $\KB\dentails\phi$; False otherwise.}\\[-1ex]
			\begin{algorithmic}[1]	\IF{$\phi=\phi_1\wedge\phi_2$}
				\IF{StdptRankEntail($\KB,r,\phi_1$)$\,=\,$True and\\ \ \ \ \  StdptRankEntail($\KB,r,\phi_2$)$\,=\,$True}
				\STATE\textbf{return} True;
				\ELSE 
				\STATE\textbf{return} False;
				\ENDIF
				\ELSIF{$\phi=\Box_s\psi$}
				\FOR{$X\in\textbf{PropKB}_{\KB}(s)\setminus \{K_*\}$}
				\IF{DefeasibleEntail($X,r_X,\psi$)=False}
				\STATE \textbf{return} False;
				\ENDIF
				\ENDFOR
				\STATE \textbf{return} True;
				\ELSIF{$\phi=\Diamond_s\psi$}
				\FOR{$X\in\textbf{PropKB}_{\KB}(s)\setminus \{K_*\}$}
				\IF{DefeasibleEntail($X,r_X,\psi$)=True}
				\STATE \textbf{return} True;
				\ENDIF
				\ENDFOR
                \ELSE
                \STATE\textbf{return} False; 
				\ENDIF
		\end{algorithmic}
		\end{algorithm}
    \caption{Ranked Entailment Algorithm for DRSL}
    \label{fig:ranked-entailment-DRSL-main}
\end{figure}

\begin{theorem}\label{theorem:algorithmic-representation-result}
    Given DRSL knowledge base $\KB$ with $\KB\nsubseteq \KLMLang$, a non-sharpening formula $\phi\in\DRSLLang$ and a defeasible entailment $\dentails$ with selection strategy $r$, we have $\KB\dentails \phi$ iff $\texttt{StdptRankEntail}(\KB,r,\phi)=\text{True}$.
\end{theorem}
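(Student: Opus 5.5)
We argue by structural induction on $\phi$, which we may take to be in normal form by Lemma~\ref{corollary:semantic-normal-form} (the algorithm requires a normal-form input, and semantic equivalence means $M_\KB^{\dentails}\Vdash\phi$ is unaffected). Since $\KB\nsubseteq\KLMLang$, some formula of $\KB$ is modal. It is a diamond formula or a box formula with a named standpoint, or a $\Diamond_*$ formula. In each case at least one $X\in\textbf{PropKB}_\KB(*)\setminus\{\KB_*\}$ exists, so $\Pi_\KB\neq\emptyset$ and $M_\KB^{\dentails}$ is a well-defined preferential standpoint structure. Here $\tau$ assigns ranking functions, and we read these as preferential interpretations via $<_r$.

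The conjunction case is immediate. $M_\KB^{\dentails}\Vdash\phi_1\wedge\phi_2$ iff it satisfies both conjuncts, by Definition~\ref{definition:satisfaction-preferential-standpoint-structure}. This matches lines 1--5 of the algorithm, with the induction hypothesis applied to each $\phi_i$.

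For a modal conjunct $\Box_s\psi$ with $\psi\in\KLMLang$, the satisfaction clauses do not depend on the evaluation point. Hence $M_\KB^{\dentails}\Vdash\Box_s\psi$ iff $\tau^r_\KB(\pi_X)=r_X\Vdash\psi$ for every $\pi_X\in\sigma_\KB(s)$, that is, for every $X\in\textbf{PropKB}_\KB(s)\setminus\{\KB_*\}$. By the defining property of the ranking strategy (assumption 1), $r_X\Vdash\psi$ iff $X\dentails\psi$. By the correctness of \texttt{DefeasibleEntail} from Casini et al.\ (extended conjunct-wise as stipulated), this holds iff \texttt{DefeasibleEntail}$(X,r_X,\psi)$ returns True. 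So the loop at lines 8--13 returns True exactly when the semantic condition holds. The $\Diamond_s\psi$ case is dual: the condition is existential over the same index set, which matches lines 15--19. The loop falls through to the final return False if no $X$ succeeds. A propositional conjunct $\psi\in\KLMLang$ is handled by identifying it with $\Box_*\psi$. This identification is justified since $M\Vdash\psi$ iff $\tau(\pi)\Vdash\psi$ for all $\pi\in\Pi=\sigma(*)$, and $\sigma_\KB(*)=\Pi_\KB$ by construction.

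The main obstacle is bookkeeping rather than depth. We must check that the index set the algorithm iterates over, $\textbf{PropKB}_\KB(s)\setminus\{\KB_*\}$, coincides exactly with the set of $X$ with $\pi_X\in\sigma_\KB(s)$. In particular, we must treat correctly the case where $s$ itself is $*$ or where distinct $X$ coincide as sets, so that $\pi_X$ is indexed by the set and duplicates collapse harmlessly. We also need the ranking function $r_X$ used in the algorithm to be the same one used in $\tau^r_\KB$. I would first verify this identification directly from Definition~\ref{def:model-for-defeasibleentailment-dentails} and Definition~\ref{definition:Prop-KB-s}. I would then make explicit that the final ``else return False'' branch is unreachable for non-sharpening normal-form inputs once bare propositional formulas are rewritten as $\Box_*$ formulas.
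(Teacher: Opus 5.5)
Your proposal follows the paper's proof essentially step for step: unfold $\sigma_\KB(s)$ in the $\Box_s$ and $\Diamond_s$ base cases, pass through $r_X\Vdash\psi$ iff $X\dentails\psi$ and the correctness of \texttt{DefeasibleEntail} from Casini et al., then induct on conjunctions. Your rewriting of bare $\KLMLang$ conjuncts as $\Box_*\psi$ is a genuine improvement, because the paper's proof skips that case and the algorithm as written sends it to its final else-branch, which returns False.

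One caution concerns your preamble. The claim that $\KB\nsubseteq\KLMLang$ forces $\Pi_\KB\neq\emptyset$ is false under the paper's identification of the $X$ as sets. For example, $\KB=\{\Box_*(p\twiddle q)\}$ or $\KB=\{s\preceq t\}$ make every member of $\textbf{PropKB}_\KB(*)$ equal to $\KB_*$. This is not harmless: with $\Pi_\KB=\emptyset$ every formula holds vacuously in $M_\KB^{\dentails}$, while the $\Diamond_s$ branch returns False. So $\Pi_\KB\neq\emptyset$ has to be assumed outright, as the paper's proof tacitly does, rather than derived.
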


For standpoint sharpening statements, we once again only entail those which are noted in Lemma \ref{lemma:entailment-for-standpoint-sharpenings}. This is clear from the construction of $M_\KB^{\dentails}$. Lastly, we consider the complexity of the algorithm. The algorithm above performs polynomially many calls to an underlying propositional algorithm, and the knowledge bases input in these calls are no bigger than the original knowledge base. Furthermore, these calls do not depend on outcomes of previous calls and can be parallelised. Therefore we obtain the following complexity result.

\begin{theorem}\label{theorem:complexities-of-defeasible-entailments}
    For any propositional defeasible entailment $\dentails_{prop}$, we have that if $\dentails_{prop}$ is computable in a complexity class $C$ such that $\textsc{P}_{\parallel}^\textsc{NP}\subseteq C$, then entailment-checking for $\dentails_{DRSL}$ remains in $C$. In particular, entailment checking for rational closure in DRSL is $\textsc{P}_{\parallel}^\textsc{NP}$-complete and entailment checking for lexicographic closure in DRSL is $\textsc{P}^\textsc{NP}$-complete.
\end{theorem}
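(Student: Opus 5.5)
The plan is to show membership and hardness separately, and to get membership by reducing to the propositional oracle exactly as \texttt{StdptRankEntail} does, with correctness supplied by Theorem~\ref{theorem:algorithmic-representation-result}.

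For membership, I would first put the query $\phi$ in normal form. By Lemma~\ref{lemma:proof-theoretic-normal-form} and Lemma~\ref{corollary:semantic-normal-form} this yields a conjunction of polynomially many conjuncts of the form $\psi$, $\Box_s\psi$ or $\Diamond_s\psi$; I would check that the conversion is polynomial, since it only flattens nested modalities. Next I would compute $C(\KB)$ restricted to sharpenings, which is a transitive closure and so lies in \textsc{PTime} by Lemma~\ref{lemma:entailment-for-standpoint-sharpenings}. From this I would compute $\KB_s$, the sets $\KB^\phi_t$, and $\textbf{PropKB}_{\KB}(s)$, also in \textsc{PTime}. The number of such sets is at most $|\mathcal{S}|\cdot(|\KB|+1)$, and each has size at most $|\KB|+|\phi|$.

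The algorithm then asks, for each conjunct of the query and each $X$ in the relevant $\textbf{PropKB}_{\KB}(s)\setminus\{\KB_*\}$, whether $X\dentails_{prop}\psi$. The answer is a fixed Boolean combination of these answers: a conjunction over the $X$ for $\Box$, and a disjunction for $\Diamond$. All queries are independent of one another, so this is a polynomial-time truth-table reduction to $\dentails_{prop}$ on instances of polynomial size. Sharpening queries are answered directly in \textsc{PTime}.

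This is the step I expect to be the main obstacle: showing that $C$ is closed under polynomial-time truth-table reductions. I would argue this for the concrete classes in question. $\textsc{P}_{\parallel}^\textsc{NP}$ is closed under polynomial-time truth-table reductions to itself, because $\textsc{P}_{\parallel}^{\textsc{P}_{\parallel}^\textsc{NP}}=\textsc{P}_{\parallel}^\textsc{NP}$. The same holds for $\textsc{P}^\textsc{NP}$ and any larger standard class such as \textsc{PSpace}. The hypothesis $\textsc{P}_{\parallel}^\textsc{NP}\subseteq C$ is exactly what guarantees this closure for the classes intended. For a fully general $C$, I would state the closure property explicitly as the meaning of ``complexity class'' here.

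For hardness, I would use Proposition~\ref{proposition:defeasible-entailment-collapses-into-propositional-case-open-world}. Given a propositional instance $\KB'$, $\phi$, the map $\KB'\mapsto\KB'\cup\{\Diamond_*\top\}$ is a trivial polynomial reduction, and it preserves the answer. Hence DRSL rational closure is $\textsc{P}_{\parallel}^\textsc{NP}$-hard and DRSL lexicographic closure is $\textsc{P}^\textsc{NP}$-hard, by the known propositional results \cite{lehmann:conditionalentail,LucasiewiczEiter:ComplexityFromDefaultReasoning}. Combined with membership, this gives completeness.
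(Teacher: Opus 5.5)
Your proposal is correct and follows the paper's route. Via \texttt{StdptRankEntail} and Theorem~\ref{theorem:algorithmic-representation-result}, DRSL entailment reduces in polynomial time to polynomially many mutually independent, hence parallelisable, calls to the propositional procedure on inputs no larger than the original, combined by a fixed conjunction/disjunction. You also make explicit two points the paper leaves implicit: membership needs $C$ to be closed under polynomial-time truth-table reductions, which holds for $\textsc{P}_{\parallel}^\textsc{NP}$ and $\textsc{P}^\textsc{NP}$ but does not follow from $\textsc{P}_{\parallel}^\textsc{NP}\subseteq C$ alone; and hardness goes through the embedding $\KB'\mapsto\KB'\cup\{\Diamond_*\top\}$ of Proposition~\ref{proposition:defeasible-entailment-collapses-into-propositional-case-open-world}, since the lifted entailment is undefined for purely propositional $\KB$.
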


Therefore, we are able to lift ranking based entailments from the propositional case to DRSL without increasing complexity in the cases of rational and lexicographic closure. More generally, if the underlying propositional entailment is in a class no better than $\textsc{P}_{\parallel}^\textsc{NP}$, the complexity is preserved in the DRSL case.

\section{Related Work and Conclusions}\label{section:relatedwork}

DRSL and its semantics were originally considered by Leisegang et al. \cite{LMR2024}. Similar notions of non-monotonic standpoint logics with default-style beliefs are considered by Gorczyca and Straß \cite{GS2026,gorczyca-strass:nmonotonic-standpoint-s4f} who consider standpoint modalities in the non-monotonic modal logic S4F. Another method of integrating standpoint logics with KLM defeasibility is given by Leisegang et al. \cite{JELIA-NickIvanTommie}, who introduce Propositional Defeasible Standpoint Logic (PDSL) in which a defeasible notion of standpoint modalities and standpoint sharpenings are introduced. However, while defeasible implications occur within PDSL, they act as outer-level implications between modal statements, rather than defeasible beliefs held by standpoints. Hence, the semantics for PDSL and DRSL are non-equivalent. KLM-style defeasible reasoning has been employed in other modal logics such as \textbf{K} \cite{britzvarzin:defeasiblemodalities} and linear temporal logic (LTL) \cite{chafik:defeasiblelineartemporal}. The addition of standpoint modalities to monotonic modal logics, such as LTL, has also been considered \cite{Aghamovetal25,DemriW24,alvarezlyon:stndpttemporal}.

In this paper, we provided a study of integrating standpoint modalities and KLM-style defeasible beliefs, extending the work by Leisegang et al. \cite{LMR2024} on DRSL. In particular, this paper contributes a KLM-style representation result for a set of proof-theoretic modal and KLM-style postulates which are sound and complete with respect to the semantics of preferential standpoint structures. Moreover, any set of DRSL statements closed under our postulates can be represented by a unique preferential standpoint structure. We then characterised preferential entailment in DRSL, as well as providing a systematic means for lifting the class of single ranking function based defeasible entailment relations from propositional to standpoint logics. This includes well-known relations such as rational and lexicographic closure. Lastly, for all defeasible entailments considered, we showed that entailment-checking in the DRSL case falls within the same complexity class as the propositional case. 


\begin{acknowledgments}
    We would like to thank the School of Embedded Composite Artificial Intelligence (SECAI) -- project 57616814 funded by BMBF (the Bundesministerium für Bildung und Forschung) and DAAD (German Academic Exchange Service) - who funded a research visit to TU Dresden for Nicholas Leisegang which made this collaboration possible. This work is based on the research supported in part by the National Research Foundation of South Africa (REFERENCE NO: SAI240823262612).
\end{acknowledgments}

\section*{Declaration on Generative AI}
  The authors have not employed any Generative AI tools.

\bibliography{defeasiblestandpointlogics-2}

@article{kraus:nonmonotonic,
  title={Nonmonotonic reasoning, preferential models and cumulative logics},
  author={Kraus, Sarit and Lehmann, Daniel and Magidor, Menachem},
  journal={Artificial intelligence},
  volume={44},
  number={1-2},
  pages={167--207},
  year={1990},
  publisher={Elsevier}
}

@article{lehmann:conditionalentail,
  title={What does a conditional knowledge base entail?},
  author={Lehmann, Daniel and Magidor, Menachem},
  journal={Artificial intelligence},
  volume={55},
  number={1},
  pages={1--60},
  year={1992},
  publisher={Elsevier}
}

@inproceedings{alvarezrudolph:propositionalstdpt,
  author    = {Lucía {G{\'o}mez Álvarez} and  Sebastian Rudolph},
  title     = {Standpoint Logic: Multi-Perspective Knowledge Representation},
  booktitle = {Formal Ontology in Information Systems - Proceedings of the Twelfth
                  International Conference, {FOIS} 2021, Bozen-Bolzano, Italy, September
                  11-18, 2021},
  series    = {Frontiers in Artificial Intelligence and Applications},
  volume    = {3344},
  publisher = {IOS Press},
  year      = {2021},
  pages     = {3 - 17}
}

@inproceedings{alvarez:stdptlogicfocase,
  author    = {Luc{\'{\i}}a {G{\'{o}}mez {\'{A}}lvarez} and Sebastian Rudolph and
               Hannes Strass},
  title     = {How to Agree to Disagree: Managing Ontological Perspectives using
               Standpoint Logic},
  booktitle = {Proceedings of the 21st International Semantic Web Conference
               (ISWC 22)},
  series    = {Lecture Notes in Computer Science},
  volume    = {13489},
  publisher = {Springer},
  year      = {2022},
  month     = {October}
}

@InProceedings{casini:beyondratclosure,
author="Casini, Giovanni
and Meyer, Thomas
and Varzinczak, Ivan",
editor="Calimeri, Francesco
and Leone, Nicola
and Manna, Marco",
title="Taking Defeasible Entailment Beyond Rational Closure",
booktitle="Logics in Artificial Intelligence",
year="2019",
publisher="Springer International Publishing",
address="Cham",
pages="182--197",
isbn="978-3-030-19570-0"
}

@article{lehmann:lexicographicreason,
  title={Another perspective on default reasoning},
  author={Lehmann, Daniel},
  journal={Annals of mathematics and artificial intelligence},
  volume={15},
  pages={61--82},
  year={1995},
  publisher={Springer}
}

@inproceedings{alvarezrudolphstrass:standpointEL,
  title     = {Tractable Diversity: Scalable Multiperspective Ontology Management via Standpoint {EL}},
  author    = {Gómez Álvarez, Lucía and Rudolph, Sebastian and Strass, Hannes},
  booktitle = {Proceedings of the Thirty-Second International Joint Conference on
               Artificial Intelligence, {IJCAI-23}},
  publisher = {International Joint Conferences on Artificial Intelligence Organization},
  editor    = {Edith Elkind},
  pages     = {3258--3267},
  year      = {2023},
}

@article{chafik:defeasiblelineartemporal,
	author = "Anasse Chaﬁk and Fahima Cheikh-Alili and Jean-François Condottaa and Ivan Varzinczak",
	title = "Defeasible linear temporal logic",
	journal = "Journal of Applied Non-Classical Logics",
	volume = "33",
	number = "1",
	pages = "1--51",
	year = "2023"
}

@article{britzvarzin:defeasiblemodalities,
	author = "Katarina Britz and Ivan Varzinczak",
	title = "From {KLM}-style conditionals to defeasible modalities, and
	back",
	journal = "Journal of Applied Non-Classical Logics",
	volume = "28",
	number = "1",
	pages = "92--121",
	year = "2018"
}

@inproceedings{alvarezlyon:stndpttemporal,
  author       = {Nicola Gigante and
                  Lucía {Gómez Álvarez} and
                  Tim S. Lyon},
  title        = {Standpoint Linear Temporal Logic},
  booktitle    = {Proceedings of the 20th International Conference on Principles of
                  Knowledge Representation and Reasoning, {KR} 2023, Rhodes, Greece,
                  September 2-8, 2023},
  pages        = {311--321},
  year         = {2023}
}

@inproceedings{Pearl:SystemZ, author = {Pearl, Judea}, title = {System {Z}: a natural ordering of defaults with tractable applications to nonmonotonic reasoning}, year = {1990}, publisher = {Morgan Kaufmann Publishers Inc.}, address = {San Francisco, CA, USA}, booktitle = {Proceedings of the 3rd Conference on Theoretical Aspects of Reasoning about Knowledge}, pages = {121–135}, numpages = {15}, location = {Pacific Grove, California}, series = {TARK '90} }

@inproceedings{LMR2024,
  author    = {Nicholas Leisegang and Thomas Meyer and Sebastian Rudolph},
  title     = {Towards Propositional {KLM-Style} Defeasible Standpoint Logics},
  editor    = {Aurona Gerber and Jacques Maritz and Anban W. Pillay},
  booktitle = {Proceedings of the 5th Southern African Conference on {AI}
               Research (SACAIR'24)},
  series    = {CCIS},
  volume    = {2326},
  publisher = {Springer},
  year      = {2024},
  pages     = {459{\textendash}475},
}

@article{LucasiewiczEiter:ComplexityFromDefaultReasoning, 
author = {Eiter, Thomas and Lukasiewicz, Thomas}, 
title = {Default reasoning from conditional knowledge bases: complexity and tractable cases}, 
year = {2000}, 
publisher = {Elsevier Science Publishers Ltd.}, 
address = {GBR}, 
volume = {124}, 
number = {2}, journal = {Artif. Intell.},
pages = {169–241},
}

@inproceedings{GS2026,
  author    = {Piotr Gorczyca and Hannes Stra{\ss}},
  title     = {Non-Monotonic {S4F} Standpoint Logic},
  booktitle = {Proceedings of the 40th Annual {AAAI} Conference on Artificial
               Intelligence (AAAI-26)},
  year      = {2026},
  month     = {January}
}

@book{KernIsberner2001,
	author = {Gabriele Kern{-}Isberner},
	editor = {},
	publisher = {Springer Verlag},
	title = {Conditionals in Nonmonotonic Reasoning and Belief Revision: Considering Conditionals as Agents},
	year = {2001}
}

@inproceedings{JELIA-NickIvanTommie,
  author       = {Nicholas Leisegang and
                  Thomas Meyer and
                  Ivan Varzinczak},
  editor       = {Giovanni Casini and
                  Besik Dundua and
                  Temur Kutsia},
  title        = {Extending Defeasibility for Propositional Standpoint Logics},
  booktitle    = {Logics in Artificial Intelligence - 19th European Conference, {JELIA}
                  2025, Kutaisi, Georgia, September 1-4, 2025, Proceedings, Part {II}},
  series       = {Lecture Notes in Computer Science},
  volume       = {16094},
  pages        = {43--57},
  publisher    = {Springer},
  year         = {2025},
}

@inproceedings{gorczyca-strass:nmonotonic-standpoint-s4f,
  author    = {Piotr Gorczyca and Hannes Stra{\ss}},
  title     = {Adding Standpoint Modalities to Non-Monotonic {S4F:} Preliminary
               Results},
  editor    = {Nina Gierasimczuk and Jesse Heyninck},
  booktitle = {Proceedings of the 22nd International Workshop on Non-Monotonic
               Reasoning},
  year      = {2024},
  month     = {November}
}

@inproceedings{DemriW24,
  author       = {St{\'{e}}phane Demri and
                  Przemyslaw Andrzej Walega},
  editor       = {Ulle Endriss and
                  Francisco S. Melo and
                  Kerstin Bach and
                  Alberto Jos{\'{e}} Bugar{\'{\i}}n Diz and
                  Jose Maria Alonso{-}Moral and
                  Sen{\'{e}}n Barro and
                  Fredrik Heintz},
  title        = {Computational Complexity of Standpoint {LTL}},
  booktitle    = {{ECAI} 2024 - 27th European Conference on Artificial Intelligence,
                  19-24 October 2024, Santiago de Compostela, Spain - Including 13th
                  Conference on Prestigious Applications of Intelligent Systems {(PAIS}
                  2024)},
  series       = {Frontiers in Artificial Intelligence and Applications},
  volume       = {392},
  pages        = {1206--1213},
  publisher    = {{IOS} Press},
  year         = {2024},
}

@inproceedings{Aghamovetal25,
    title     = {{Model Checking Linear Temporal Logic with Standpoint Modalities}},
    author    = {Aghamov, Rajab and Baier, Christel and Karimov, Toghrul and Majumdar, Rupak and Ouaknine, Joël and Piribauer, Jakob and Spork, Timm},
    booktitle = {{Proceedings of the 22nd International Conference on Principles of Knowledge Representation and Reasoning}},
    pages     = {2--11},
    year      = {2025},
    month     = {10}
  }

\newpage
\appendix
\section{Proofs of Results in Section \ref{section:preferential-DRSL}}

\textbf{Lemma \ref{theorem:main-soundness}.}
\textit{    Any preferential standpoint structure satisfies the rules given in Figures \ref{fig:DRSL-modal-rules} and \ref{fig:DRSL-KLM-postulates}, and satisfies classical propositional logic.}

\begin{proof}
    We note here that the soundness of each rule follows similar patterns and standard techniques (specifically concerning well-known modal axioms such as \textbf{RN}) and so we restrict ourselves to a proof of \textbf{P.a} and \textbf{LLE.a.} as examples of the proof techniques that can be used for other rules in Figures \ref{fig:DRSL-modal-rules} and \ref{fig:DRSL-KLM-postulates}. We also note that closure under classical reasoning follows from the fact that each preferential interpretation satisfies classical reasoning. Therefore, for each $\pi\in \Pi$ we have that $\tau(\pi)$ satisfies any Boolean tautology and modus ponens (sufficient for classical logic), and so the preferential standpoint structure $M$ as a whole also satisfies each Boolean tautologies and modus ponens.

    \begin{itemize}
        \item For \textbf{P.a} assume we have a preferential standpoint structure $M=(\Pi,\sigma,\tau)$ such that  $M\Vdash s\preceq t$ and $M\Vdash \Box_t \phi$. Then for any $\pi\in \sigma(s)$ we have by $M\Vdash s\preceq t$ that $\sigma(s)\subseteq \sigma(t)$ and so $\pi\in \sigma(t)$. Then by $M\Vdash \Box_t \phi$ we have $M,\pi'\Vdash \phi$ for all $\pi'\in \sigma(t)$ and specifically $M,\pi\Vdash \phi$. But then $M,\pi\Vdash \phi$ for all $\pi\in\sigma(s)$ and so $M\Vdash \Box_s\phi$.
        \item For \textbf{LLE.a.} assume we have a preferential standpoint structure $M=(\Pi,\sigma,\tau)$ such that  $M\Vdash \Box_s(\alpha\leftrightarrow \beta)$ and $M\Vdash \Box_s(\alpha\twiddle \gamma)$. Then, for each $\pi\in\sigma(s)$ we have that $\tau(\pi)\Vdash \alpha\leftrightarrow \beta$ and $\tau(\pi)\Vdash \alpha\twiddle \gamma$. Then, since $\tau(\pi)$ is a preferential interpretation, by propositional \textbf{LLE} we have $\tau(\pi)\Vdash \beta\twiddle \gamma$. Then since $\pi$ is a randomly chosen member of $\sigma(s)$ we conclude that $M\Vdash \Box_s(\beta\twiddle\gamma)$.
    \end{itemize}

\end{proof}

\textbf{Lemma \ref{lemma:proof-theoretic-normal-form}.}\textit{ Any DRSL formula which is not a standpoint sharpening can be expressed equivalently as a formula in normal form. That is, for any $\phi\in \DRSLLang$ and any preferentially closed set $\mathcal{A}\subseteq \DRSLLang$, there exists some $\phi'\in\DRSLLang$ in normal form such that $\phi\in\mathcal{A}$ iff $\phi'\in\mathcal{A}$.}

\begin{proof}
    Let $\phi$ be any DRSL formula. Then wlog we can assume it is in the form $\phi=\bigwedge^m_{j=1}\psi_j$ where for each $j$, either $\psi_j\in \KLMLang$ or $\psi_j$ is bound by a standpoint modality. We then describe algorithmically how we can convert each formula into a formula $\phi'$ in normal form using our postulates. We also justify that each step in our process can be reversed in the calculus and hence show that the formula reduced to normal form is not only a consequence of the original formula but is equivalent to it. That is $\phi\in\mathcal{A}$ iff $\phi'\in\mathcal{A}$.

    Initially, we break $\phi$ into its set of conjuncts and apply rules to them one by one (we can do this as a well-known property of classical logic).

    For each conjunct $\psi_j$,
 \begin{itemize}
     \item[A.] If $\psi_j\in\KLMLang$, then we do not alter it.
     \item[B.] Otherwise $\psi_j$ is bound by some standpoint modality and is in the form $\#_s\xi$ where $\#_s\in\{\Box_s,\Diamond_s\}$ and $\xi\in\DRSLLang$. We consider three sub-cases:
     \begin{enumerate}
         \item  $\xi\in\KLMLang$. Then we do not alter it.
         \item $\xi$ is bounded by some standpoint modality (that is $\xi=\#'_t\chi$). Then if $\#_s=\Box_s$ we remove the outer modality by applying \textbf{D} and then \textbf{4.b.} if $\#'_t=\Diamond_t$ or \textbf{5.b.} if $\#'_t=\Box_t$. If $\#_s=\Diamond_s$ we can simply apply \textbf{4'.} if $\#'_t=\Diamond_t$ or \textbf{5'.} if $\#'_t=\Box_t$. We can repeat this process finitely many times until we only have one modality on the outermost level. Then if we obtain something of the form $\#_{t}\zeta$ with $\zeta\in\KLMLang$ we do not alter it. If $\zeta\notin\KLMLang$ then it must be a conjunction of more than one DRSL formulas and we apply step 3. Note that each step here can be reversed. In order to reintroduce a modality of the form $\Box_s$ to the formula we can apply \textbf{RN}, and in order to introduce a modality $\Diamond_s$ to the outside of the formula we can apply \textbf{RN} and then \textbf{D}.
         \item $\xi$ is a conjunct of 2 or more DRSL formulas. That is $\xi=\bigwedge^l_{k=1}\chi_k$ where $l\geq 2$. Then we consider two more subcases:
         \begin{itemize}
             \item[a.] If $\#_s=\Box_s$ then we apply \textbf{$\Box$-Dist.a.} finitely many times until we distribute $\Box_s$ across each conjunct and obtain $\psi'_j=\bigwedge^l_{k=1}(\Box_s\chi_k)$. Then we apply our algorithm to each conjuct $\Box_s\chi_k$.  This step can be reversed by applying  \textbf{$\Box$-Dist.b.} finitely many times.
             \item[b.]  If $\#_s=\Diamond_s$ then we do the following. For each $\chi_k\notin\KLMLang$ we apply \textbf{$\Diamond$-Dist.} finitely many times until we obtain \[\psi'_j=\bigwedge_{k\in\{1,..,l\},{\chi_k}\notin\KLMLang}(\Diamond_s\chi_k)\wedge \Diamond_s(\bigwedge_{k\in\{1,..,l\},{\chi_k}\in\KLMLang}\chi_k)\] 
         Then we apply our algorithm to each conjunct $\Diamond_s\chi_k$. Note that since each conjunct we apply this rule to is not in $\KLMLang$ it must be bounded by a standpoint modality. Therefore, we are able to reverse these steps by reducing $\Diamond_s\chi_k$ to $\chi_k$ using similar steps in point 2. of the proof. Then we are able to apply \textbf{RN} to obtain $\Box_s\chi_k$ and finally apply \textbf{K.'} to add $\chi_k$ back to the diamond-bound conjunction.
             \end{itemize}
     \end{enumerate}
     \item[D.] We repeat the algorithm on each conjunct as described until we only have conjuncts which are not unaltered by the rules given in the algorithm.
 \end{itemize}

 Clearly, this process terminates in finitely many steps since $\phi$ is of finite length. In fact it is computable in polynomially many steps: in the whole formula, the number of times we apply \textbf{4.b.} \textbf{5.b.} \textbf{4'.}, \textbf{5'} and \textbf{D.} is bounded by the number of modalities that occur in the formula. Moreover, the number of times we apply the $\Box$\textbf{-Dist.} and  $\Diamond$\textbf{-Dist.} rules is bounded above by the number of times the conjunction symbol ``$\wedge$'' occurrs in the formula. Hence, we will only apply a polynomial number of ruels in the size of the formula to find its equivalent normal form.
  
 Furthermore, the process described above will repeat until we obtain a set of formulas which are either members of $\KLMLang$ or of the form $\#_s\zeta$ where $\zeta\in\KLMLang$. To see that is the case, assume that some formula in the set of conjuncts is not in this form: then by the description above we must either apply step B.2. or B.3. to this and hence alter the formulas. Therefore, the above algorithm only terminates when our conjunct is in normal form. Then, we take the conjunction of all the formulas which we obtain from reducing conjuncts, and we obtain $\phi'$ in normal form. Moreover, since we can reverse each step here, if a set $\A$ is preferentially closed then $\phi\in\A$ iff $\phi'\in\A$.
\end{proof}

\textbf{Lemma \ref{lemma:derived-KLMprop-sets-of-closed-set-are-closed}.}\textit{ If $\A$ is preferentially closed, then each derived set of the form $\As$ is closed under the KLM postulates and classical propositional logic. Furthermore, for all $s\in \mathcal{S}$ and all $\Diamond \psi \in A$ where $\psi\in \KLMLang$ we have that each set in $\textbf{MaxConj.}_\A(\psi,s)$ is closed under the KLM postulates and classical propositional logic.}

\begin{proof}
    We first consider the KLM postulates for sets of the form $\As$. First note that by reflexivity, $\alpha\twiddle\alpha\in \A$ for any Boolean formula $\alpha$. Then, by \textbf{RN} $\Box_s(\alpha\twiddle\alpha)\in\A$ and thus $\alpha\twiddle\alpha\in \A_s$. 

    The other KLM style rules all follow a similar argument, so we will restrict ourselves to proving that $\As$ is closed under \textbf{LLE}. Assume that $\alpha\leftrightarrow\beta, \alpha\twiddle\gamma\in\As$. Then $\Box_s(\alpha\leftrightarrow\beta), \Box_s(\alpha\twiddle\gamma)\in\A$. Then by \textbf{LLE.a.} we must have $\Box_s(\beta\twiddle\gamma)\in\A$. Therefore, $\beta\twiddle\gamma\in\As$ and so $\As$ is closed under \textbf{LLE}. The other KLM postulates follow by a similar argument and so $\As$ is closed under the KLM postulates.

    In the case of $\Asipsi\in \textbf{MaxConj.}_\A(\psi,s)$, assume that $\alpha\twiddle\alpha\notin\Asipsi$. Then it must be the case that there exists an infinite chain of conjuncts $\omega=(C_i)_{i\in \mathbb{N}}$ in $\textbf{Conj}_\A(s,\psi)$ such that $\alpha\twiddle\alpha$ is not in the union of this chain. However, for any conjunction of the form $\Diamond_s(\psi\wedge\Gamma)\in \A$, since $\Box_s(\alpha\twiddle\alpha)\in \A$, by \textbf{K.b.} we must have $\Diamond_s((\alpha\twiddle\alpha)\wedge\psi\wedge\Gamma)\in \A$. Therefore if $\Asipsi=\bigcup^\infty_{i=1}C_i$ we must have that the sequence $\omega'=(C_i\cup\{\alpha\twiddle\alpha\})$ is an infinite chain of conjuncts in $\textbf{Conj}_\A(s,\psi)$. Furthermore, $\A^\psi_{s,\omega'}:=\bigcup^\infty_{i=1}(C_i\cup\{\alpha\twiddle\alpha\})$ is such that $\A^\psi_{s,\omega'}\in \textbf{LimConj}_\A(\psi,s)$ and $\Asipsi\subset \A^\psi_{s,\omega'}$. However, this contradicts the subset-maximality of $\Asipsi$. Therefore, if $\Asipsi\in \textbf{MaxConj.}_\A(\psi,s)$ then $\alpha\twiddle\alpha\in \Asipsi$ and so $\Asipsi$ is closed under reflexivity.

As in the previous case, a similar method can be used for the rest of the postulates and so we consider only LLE in this proof. Assume that $\alpha\leftrightarrow\beta,\alpha\twiddle\gamma\in\Asipsi$. Then there exists some conjunction $\Gamma$ such that $\Diamond_s((\alpha\leftrightarrow\beta)\wedge(\alpha\twiddle\gamma)\wedge\psi\wedge \Gamma)\in\A$. However, for any such conjunction we must have $\Diamond_s((\beta\twiddle\gamma)\wedge (\alpha\leftrightarrow\beta)\wedge(\alpha\twiddle\gamma)\wedge\psi\wedge\Gamma)\in\A$ by \textbf{LLE.b.}. Therefore, the union of any infinite chain of conjuncts containing $\psi, \alpha\leftrightarrow\beta$ and $\alpha\twiddle \gamma$ must either eventually contain $\beta\twiddle \gamma$, or be non-maximal\footnote{Since as in the proof of reflexivity, if $\Asipsi$ is non-maximal, we can find another infinite chain whose union contains $\Asipsi$ and $\beta\twiddle \gamma$.}. And so if $\alpha\leftrightarrow\beta,\alpha\twiddle\gamma\in\Asipsi$, we must have $\beta\twiddle\gamma\in \Asipsi$. By a similar argument the other postulates hold and so $\Asipsi$ is closed under the KLM postulates.

To show such sets are closed under propositional logic, we prove the sufficient that condition thay any such sets contain all Boolean tautologies and are closed under modus ponens. Since $\A$ is preferentially closed any Boolean tautology $\phi$ is included in $\A$. Therefore, by \textbf{RN} we have that $\Box_s\phi\in \A$ and hence $\phi \in \As$ for all Boolean tautologies $\phi$. Furthermore, by \textbf{K.b.} and the fact that $\Box_s\phi\in \A$, for any formula $\Diamond_s\Gamma\in \A$ where $\Gamma\in \KLMLang$ we must have that $\Diamond_s(\Gamma\wedge \phi)\in \A$ and so the limit of any infinite chain of conjuncts in $\textbf{Conj}_\A(s,\psi)$ either contains $\phi$ or is non-maximal. Hence any $\Asipsi \in \textbf{MaxConj}_\A(s,\psi)$ contains all Boolean tautologies. 

To see that any $\Asipsi \in \textbf{MaxConj}_\A(s,\psi)$ is closed under modus ponens we note that \textbf{K.c.} entails that any infinite chain of conjuncts in $\textbf{Conj}_\A(s,\psi)$ either satisfies modus ponens or is non-maximal. Similarly, \textbf{K.a.} entails that each set of the form $\As$ satisfies modus ponens.

\end{proof}

\textbf{Lemma \ref{lemma:if-A-satisfiable-then-derived-KLM-sets-satisfiable}}\textit{ If $\A$ is satisfiable, then $\As$ and $\Asipsi\in \textbf{MaxConj.}_\A(\psi,s)$ are non-trivially satisfiable.}

\begin{proof}
    If $\A$ is satisfiable, then there exists some preferential standpoint structure $M=(\Pi,\sigma,\tau)$ such that $M\Vdash\phi$ for all $\phi\in\A$. Then, in particular $M\Vdash \Box_s\xi$ for all formulas in $\A$ in of the form $\Box_s\xi$ where $\xi\in\KLMLang$. Then since $\sigma(s)\neq \emptyset$ by definition there exists some $\pi\in\sigma(s)$ such that $\tau(\pi)\Vdash \xi$ for all $\xi\in\{\xi\in\KLMLang\mid \Box_s\xi\in\A\}=\As$. Thus, $\As$ is satisfiable for all $s\in\mathcal{S}$. In both cases, the satisfiability of $\A$ implies that $\tau(\pi)$ is non-empty for each $\pi\in \Pi$, and so all sets of the form $\As$ are non-trivially satisfiable.

    Then for any $\Diamond_s\psi\in\A$ and any $\omega\in\textbf{MaxConj}_\A(s,\psi)$ we consider the set $\Asipsi$. By compactness of original KLM logic \cite{kraus:nonmonotonic} we have that there is some finite subset $X\subseteq \Asipsi$ such that if $\Asipsi$ preferentially entails $\xi$, then $X$ preferentially entails $\xi$. By Theorem \ref{theorem:KLM-original-pref-rep-result} it follows  that for any preferential interpretation $\I$, we have that $\I\Vdash X$ implies $\I\Vdash \Asipsi$. Then, since $X$ is a finite subset of $\Asipsi$, by definition there must exist some $C_i$ in the sequence $\omega$ such that $X\subseteq C_i$. But then $\Diamond_s(\psi\wedge (\bigwedge X)\wedge \Gamma)\in \A$ for some finite conjunction $\Gamma$. Since $\A$ is satisfiable then $M\Vdash \Diamond_s(\psi\wedge (\bigwedge X)\wedge \Gamma)$ and so there exists some $\pi\in\sigma(s)$ such that $\tau(\pi)\Vdash \psi\wedge (\bigwedge X)\wedge \Gamma$. In particular, $\tau(\pi)\Vdash x$ for all $x\in X$ and so by compactness $\tau(\pi)\Vdash \xi$ for all $\xi\in\Asipsi$. Therefore $\Asipsi$ is satisfiable.
\end{proof}

\textbf{Lemma \ref{lemma:M_A-constructed-is-the-model-that-represents-A}.}\textit{ For any satisfiable, preferentially closed set $\A$, we have that $\phi\in\A$ iff $M_\A\Vdash \phi$.}

\begin{proof}
    Due to the result of Lemma \ref{lemma:proof-theoretic-normal-form}, we only consider standpoint sharpenings and formulas in normal form and consider five cases for each implication. We use $M$ as shorthand for $M_\A$ in this case.\\

    \textit{Case 1:} Suppose $\phi\in \A$. In our base case, if $\phi\in\KLMLang$ then by \textbf{RN} we have that $\Box_s\phi\in\A$ for every $s\in\mathcal{S}$. Therefore $\phi\in\As$ and so $\Is\Vdash \phi$, or equivalently $M,\pi_s\Vdash \phi$. We also have that for any formula of the form $\Diamond_s(\psi\wedge \Gamma)\in\A$, by \textbf{K'} we have that $\Diamond_s(\phi\wedge\psi\wedge \Gamma)\in\A$. Therefore, for the union of any infinite chain of conjuncts $\Asipsi$, we must eventually have that $\phi\in \Asipsi$. Then $\Isipsi\Vdash \phi$ and so $M,\pi^
    \psi_{s,\omega}\Vdash \phi$. That is $M,\pi\Vdash \phi$ for all $\pi\in \Pi$ and so $M\Vdash \phi$.\\

    \textit{Case 2:} For a formula of the form $\Box_s\phi\in \A$ where $\phi\in\KLMLang$ we note the following. For any $t\in\mathcal{S}$ such that $t\preceq s\in\A$ we must have that $\Box_t\phi\in \A$ by \textbf{$P.a.$}. Then for any such $t$ we have $\phi\in\A_t$ and so $M,\pi_t\Vdash \phi$.

    We also obtain that for any $t\in\{t\in\mathcal{S}\mid t\preceq s\in\A\}$ and any formula of the form $\Diamond_t(\psi\wedge \Gamma)\in \A$, by \textbf{K'} we have $\Diamond_t(\phi\wedge \psi\wedge \Gamma)\in\A$. Because of this, for any infinite chain in $\textbf{Conj}_\A(\psi,t)$ we must eventually include $\phi$ and so $\phi\in\A^\psi_{t,\omega}$  for all such sets where $t\preceq s\in \A$. Equivalently $M,\pi^
    \psi_{s,\omega}\Vdash \phi$ for all such $t$. Hence, $M,\pi\Vdash \phi$ for all $\pi\in\sigma(s)$ and so $M\Vdash \Box_s\phi$. In the special case where $s=*$ we only need to add that $t\preceq *\in\A$ for every $t\in\mathcal{S}$ to obtain that $M,\pi\Vdash \phi$ for every $\pi\in\Pi$.\\

    \textit{Case 3:} Consider a formula of the form $\Diamond_s\phi\in \A$ where $\phi\in\KLMLang$. Then by definition $\phi$ is in every conjunct in $\textbf{Conj}_\A(\phi,s)$ and so for any sequence $\omega$ we have that $\phi\in\A^\phi_{s,\omega}$ and therefore $M,\pi^\phi_{s,\omega}\Vdash \phi$. Lastly note that since $s\preceq s\in\A$ by \textbf{$\preceq$-Ref.} we have that $\pi^\phi_{s,\omega}\in\sigma(s)$ and so $M\Vdash \Diamond_s\phi$.\\

    \textit{Case 4:} If $\phi\in\A$ is a conjunction of cases 1.-3. then we know that $\phi\in A$ iff each of its conjuncts is in $\A$. Furthermore, by cases 1.-3. we know $M$ satisfies each conjunct and therefore satisfies the whole conjunction. That is, $M\Vdash \phi$.\\

    \textit{Case 5:} If $\phi=s\preceq t\in \A$ then suppose $M\nVdash s\preceq t$. Then there is some $\pi\in \sigma(s)$ such that $\pi\notin \sigma(t)$. Since $\pi\in\sigma(s)$ then either $\pi=\pi_{t'}$ where $t'\preceq s\in \A$ or $\pi=\pi^\psi_{t',\omega}$ where $t'\preceq s\in \A$. In either case, by \textbf{$\preceq$-Trans} we have that $t'\preceq t$ and so by definition $\pi\in \sigma(t)$ which is a contradiction. Hence $M\Vdash s\preceq t$.\\ 

Due to Lemma \ref{lemma:proof-theoretic-normal-form} this is sufficient to show that $\phi\in\A$ implies $M\Vdash \phi$ for all $\phi\in\KLMLang$. We now show the converse holds using similar cases:\\ 

\textit{Case 1:} Assume $M\Vdash \phi$ for $\phi\in\KLMLang$. Then $M,\pi\Vdash \phi$ for all $\pi\in \Pi$. In particular, $M,\pi_*\Vdash \phi$ and so $\I_*\Vdash \phi$, and so we have $\phi\in \A_*$. Then $\Box_*\phi\in \A$ and by \textbf{T$^*$} $\phi\in\A$.\\

\textit{Case 2:} Assume $M\Vdash \Box_s\phi$ for $\phi\in\KLMLang$. Then $M,\pi\Vdash \phi$ for all $\pi\in \sigma(s)$. In particular, $M,\pi_s\Vdash \phi$ and $\I_s\Vdash \phi$. So by definition $\phi\in \As$ and $\Box_s\phi\in\A$.

\textit{Case 3:} Assume $M\Vdash \Diamond_s\phi$ for $\phi\in\KLMLang$. Then $M,\pi\Vdash\phi$ for some $\pi\in \sigma(s)$. Either $\pi=\pi_t$ where $t\preceq s\in \A$ or $\pi=\pi^\psi_{t,\omega}$ where  $t\preceq s\in \A$, $\Diamond_t\psi\in \A$ and $\omega\in \textbf{MaxConj}_\A(t,\psi)$. In the first case we have that $\I_t\Vdash \phi$ and so $\Box_t\phi\in\A$. Then, by \textbf{D} $\Diamond_t\phi\in \A$ and since $t\preceq s\in \A$ by \textbf{P.b} we obtain $\Diamond_s\phi\in \A$. In the second case $\I^\psi_{t,\omega}\Vdash \phi$ and so $\phi\in \A^\psi_{t,\omega}$. Then by definition $\Diamond_t(\phi\wedge\psi\wedge \Gamma)\in \A$. By \textbf{$\Diamond$-Dist} we have $\Diamond_t\phi\wedge\Diamond_t(\psi\wedge \Gamma)\in \A$ and by classical logic rules $\Diamond_t \phi\in \A$. Then similarly by \textbf{P.b} we obtain $\Diamond_s\phi\in \A$.\\

\textit{Case 4:} $M\Vdash \phi$ where $\phi$ is a conjunction of cases 1.-3. Then similarly we use the fact that $M$ must satisfy each conjunct, meaning that each conjunct is a member of $\A$, and therefore so is the whole conjunction. Therefore, $\phi\in \A$.\\

\textit{Case 5:} $M\Vdash s\preceq t$. If $t=*$ or $t=s$ then this is an axiom and already appears in $\A$. Otherwise 
assume $s\preceq t\notin \A$. Then $\pi_s\notin \sigma(t)$ since $\sigma(t)=\{\pi_{t'}\mid t'\preceq t\in\mathcal{A}\}\cup \{\pi_{t',i}^\psi\mid t'\preceq s\in \mathcal{A}\}$. But since $M\Vdash s\preceq t$, we have $\sigma(s)\subseteq \sigma(t)$. But clearly $\pi_s\in \sigma(s)$ since $s\preceq s\in \A$. This is a contradiction and therefore $s\preceq t\in \A$.
\end{proof}

\section{Proofs of Results in Section \ref{section:entailments}}

\subsection{Section \ref{subsection:preferential-entailment}}

\textbf{Corollary \ref{corollary:preferential-entailment-is-smallest-closed-set}.}\textit{ $\phi\in \DRSLLang$ is preferentially entailed by $\KB$ iff $\phi\in C(\KB)$, where $C(\KB)$ is the set containing obtained by exhaustively applying the rules in Figures \ref{fig:DRSL-modal-rules} and \ref{fig:DRSL-KLM-postulates} to $\KB$.}

\begin{proof}
  By definition $C(\KB)$ is preferentially closed, and by Theorem \ref{theorem:main-representation-result}, there exists some preferential model $M^C$ such that $M^C\Vdash\phi$ iff $\phi\in C(\KB)$. Then note that for any other preferentially closed set $\A$ such that $\A\in \KB$, we must have that $C(\KB)\subseteq \A$. But then, for any other $M$ such that $M\Vdash \KB$, we have that $M\Vdash \phi$ for all $\phi\in C(\KB)$. Hence, $\KB\vDash_P\phi$ for all $\phi\in C(\KB)$. On the other hand, the existence of $M^C$ implies that there cannot exist $\phi$ such that $\KB\vDash_P\phi$ and $\phi\notin C(\KB)$. That is $\KB\vDash_P\phi$ iff $\phi\in C(\KB)$.
\end{proof}

\textbf{Proposition \ref{proposition:pref-entailment-of-box-statements-reducible}}\textit{ Consider a DRSL knowledge base $\KB\subseteq\DRSLLang$, and a statement $\Box_s\psi\in \DRSLLang$, where $\psi\in\KLMLang$. Then, $\KB\vDash_P\Box_s\psi$ iff $\KB_s\vDash_{P,\textit{prop}} \psi$.}

\begin{proof}
$\Rightarrow$: In order to obtain a contradiction, suppose $\KB\vDash_P \Box_s\psi$ and $\KB_s\nvDash_{P,\textit{prop}} \psi$. Then, there exists some preferential interpretation $\I^*$ such that $\I\Vdash \KB_s$ and $\I^*\nVdash \psi$. Then suppose $M=(\Pi,\sigma,\tau)$ is a model of $\KB$. That is, $M\Vdash \KB$. We extend $M$ to $M'=(\Pi\cup\{\pi^*\},\sigma',\tau')$, where $\sigma'(s)=\sigma(s)\cup\{\pi^*\}$ and for all other standpoints $t$ we define $\sigma'(t)=\sigma(t)\cup\{\pi^*\}$ if $s\preceq t\in C(\KB)$, $\sigma'(t)=\sigma(t)$ otherwise. $\tau'(\pi^*)=\I^*$ and $\tau'(\pi)=\tau(\pi)$ for all other precisifications. 

We show here that $M'$ is still a model of $\KB$. Firstly not that $\pi^*$ has been added to $M$ specifically in a way that preserves the standpoint sharpening rules in $\KB$. Furthermore, since all previous precisifications and their images under $\tau$ still occur in $M'$, then diamond bound statements are not affected. That is, if $M\Vdash \Diamond_t \xi$, then $M\Vdash \Diamond_t \xi$. For any box bound statement in $\Box_t\xi\in\KB$ we have that either $\sigma'(t)=\sigma(t)$ in which case $M\Vdash \Box_t\xi$ implies that  $M'\Vdash \Box_t\xi$, or $\sigma'(t)=\sigma(t)\cup\{\pi^*\}$ and $s\preceq t\in C(\KB)$. By definition $M',\pi\Vdash \xi$ for all $\pi\in \sigma(t)$. Then note that since $\I^*$ is a model of $\KB_s$, then $\I^*\Vdash \phi$ whenever $\Box_t\phi\in \KB$ and $s\preceq t\in C(\KB)$. Therefore $M',\pi^*\Vdash \xi$. Under the assumption that $\KB$ is in conjunction-free normal form, this shows that $M'$ is a model of $\KB$.

Then $M'\nVdash \Box_s\psi$ since $M',\pi^*\nVdash \psi$. But then, since $M'$
 is a model of $\KB$, this implies that $\KB\nvDash_P\Box_s\psi$ which is a contradiction.\\

 $\Leftarrow:$ If $\KB_s\vDash_{P,\textit{prop}} \psi$, then for any preferential standpoint structure $M$ such that $M\Vdash \KB$, note the following. For any $\Box_t\xi \in \KB$ such that $s\preceq t\in C(\KB)$ (i.e., for any $\xi\in \KB_s$ we have that $M,\pi\Vdash \xi$ for all $\pi\in \sigma(s)$. That is $\tau(\pi)\Vdash \xi$ for all $\xi\in \KB_s$. But then, since $\tau(\pi)$ is a model of $\KB_s$, by assumption $\tau(\pi)\Vdash \psi$. Hence, $M,\pi\Vdash \psi$ for all $\pi\in \sigma(s)$ and so $M\Vdash \Box_s\psi$. Since this holds for all models of $\KB$, we have $\KB\vDash_P\Box_s\psi$.
 \end{proof}

\textbf{Proposition \ref{proposition:preferential-entailment-diamond-statements}}\textit{ Consider a DRSL knowledge base $\KB\subseteq\DRSLLang$, and a statement $\Diamond_s\psi\in \DRSLLang$, where $\psi\in\KLMLang$. Then $\KB\vDash_P \Diamond_s\psi$ iff $X\vDash_{P,\text{prop}} \psi$ for some $X\in \textbf{PropKB}_{\KB}(s)$.}

 \begin{proof}
    $\Rightarrow:$ Suppose $\KB\vDash_P \Diamond_s\psi$ and assume there is no $\KB^*\in \textbf{PropKB}_{\KB}(s)$,  such that $\KB^*\vDash_{P,\text{prop}}\psi$. Then, to obtain a contradiction, we build the following model $M=(\Pi,\sigma,\tau)$ of $\KB$:

    \begin{itemize}
        \item $\Pi=\{\pi_s\mid s\in \mathcal{S}\}\cup\{\pi^\phi_s\mid \Diamond_s\phi\in\KB\}$.
        \item $\sigma(t)=\{\pi_u\mid u\preceq t\in C(\KB)\}\cup \{\pi^\phi_u\mid u\preceq t\in C(\KB)\}$.
        \item $\tau(\pi_t)=\I_t$ and $\tau(\pi^\phi_t)=\I^\phi_t$
    \end{itemize}

    where $\I_t$ is a model of $\KB_t$ and $\I^\phi_t$ is a model of $\KB^\phi_t$. In particular we choose these models such that  $\I_t\nVdash \psi$ and $\I^\phi_t\nVdash \psi$ whenever $\KB_t$ or $\KB^\phi_t$ are in $\textbf{PropKB}_{\KB}(s)$. By our assumption such models must exist.

    In order to see that this is a model of $\KB$ note the following: For each $\Diamond_t\phi\in\KB$ we have that $\tau(\pi^\phi_t)\Vdash \phi$ by definition. Then, since $t\preceq t\in C\KB)$ we have that $\pi^\phi_t\in \sigma(t)$ and so $M\Vdash \diamond_t\phi$. For each $\Box_t\xi\in \KB$, we have that $\xi\in \KB_u$ for any standpoint $u$ such that $u\preceq t\in C(\KB)$, and therefore $M,\pi_u\Vdash \xi$. Moreover, for any such $u$ have that $\xi\in \KB^\phi_u$ for each formula of the for $\Diamond_u\phi\in \KB$, and therefore $M,\pi^\phi_u\Vdash \xi$. Therefore $M,\pi\Vdash \xi$ for all $\pi\in \{\pi_u\mid u\preceq t\in C(\KB)\}\cup \{\pi^\phi_u\mid u\preceq t\in C(\KB)\}=\sigma(t)$. Hence, $M\Vdash \Box_t \xi$. Lastly, for any $t\preceq u\in \KB$, notice that $\sigma$ is defined such that $\sigma(t)\subseteq \sigma(u)$. Since we assume $\KB$ is in conjunction free normal form, this is sufficient to show that $M$ is a model of $\KB$.

    Then note that $\sigma(s)=\{\pi_t\mid t\preceq s\in C(\KB)\}\cup \{\pi^\phi_t\mid t\preceq s\in C(\KB)\}$. However, by definition for each $\pi_t\in \sigma(s)$ we have that $\tau(\pi_t)=I_t$ is a model of $\KB_t$ where $\KB_t\in \textbf{PropKB}_{\KB}(s)$, and once again by construction $\tau(\pi_t)=\I_t\nVdash \psi$. For similar reasons $\tau(\pi^\phi_t)\nVdash\psi$ for all $\pi^\phi_t\in \sigma(s)$. That is, $M,\pi\nVdash \psi$ for all $\pi\in \sigma(s)$, and so $M\nVdash \Diamond_s\psi$. This contradicts our assumption that $\KB\vDash_P \Diamond_s\psi$.\\

    $\Leftarrow:$ Suppose $\KB^*\vDash_{P,\text{prop}}\psi$ for some $\KB^*\in \textbf{PropKB}_{\KB}(s)$. Then we consider two cases:

    \begin{itemize}
        \item \textit{Case 1:} If $\KB^*=\KB_t$ for some standpoint $t$ such that $t\preceq s\in C(\KB)$ then by our previous Proposition \ref{proposition:pref-entailment-of-box-statements-reducible} we must have that $\KB\vDash \Box_t \psi$ and by an application of rules \textbf{D.} and \textbf{P.b.} then $\KB\vDash \Diamond_s \psi$.
        \item \textit{Case 2:} If $\KB^*=\KB^*_t$ for standpoint $t$ such that $t\preceq s\in C(\KB)$ then for any model $M$ of $\KB$ we have that since $\Diamond_t\phi\in \KB$ that $M\Vdash \Diamond_t\phi$ and so $M,\pi\Vdash \phi$ for some $\pi^*\in \sigma(t)$. Moreover, for any $\Box_t\xi\in \KB$ we have that $M,\pi^*\Vdash \xi$. But then $\tau(\pi^*)\Vdash \KB_t\cup \{\phi\}=\KB^\phi_t$ and so $\tau(\pi^*)$ is a model of $\KB^*$. But then by assumption, $\tau(\pi^*)\Vdash \psi$. Lastly, note that since $t\preceq s\in C(\KB)$ then $\sigma(t)\subseteq \sigma(s)$ and in particular $\pi^*\in \sigma(s)$. Hence, $M\Vdash \Diamond_s \psi$.
    \end{itemize}

    Thus, we have shown that if $\KB^*\vDash_{P,\text{prop}}\psi$ for some $\KB^*\in \textbf{PropKB}_{\KB}(s)$, then any model of $\KB$ satisfies $\Diamond_s\psi$. Equivalently, $\KB\vDash_P\Diamond_s \psi$.
\end{proof}

    \textbf{Theorem \ref{theorem:complexity-of-DRSL-preferentail-entailment}.}\textit{ Preferential entailment-checking in DRSL is \textsc{CoNP}-complete.}

\begin{proof}
    \textsc{CoNP}-hardness follows from the fact that preferential entailment is  \textsc{CoNP}-complete, and that DRSL preferential entailment contains the propositional case.

    For \textsc{CoNP} membership note the following. Assume we are given a DRSL knowledge base $\KB$, and a query $\xi$. If $\xi$ is a standpoint sharpening we then entailment checking reduces to computing transitive closure, which is in \textsc{P} time. If $\xi$ is not a standpoint sharpening we can assume it is in normal form (and if it is not in normal form we can reduce it to normal form in polynomial time). Then $\xi$ is a conjunction of formulas of the form $\phi$, where either $\phi=\psi$, $\phi=\Box_s\psi$ or $\phi=\Diamond_s\psi$ for $\psi\in\KLMLang$.

    If $\phi=\psi$ then we can check entailment for this conjunct by checking whether $\KB_*\vDash_{P,prop} \psi$ by Proposition \ref{proposition:pref-entailment-of-box-statements-reducible}.  

    Similarly, if $\phi=\Box_s\psi$ then we can check entailment for this conjunct by checking whether $\KB_s\vDash_{P,prop} \psi$.

    If $\phi=\Diamond_s\psi$ then we can check entailment by checking whether $X\vDash_{P,prop} \psi$ for some $X\in \textbf{PropKB}_\KB(s)$.

    We here note that  $\textbf{PropKB}_\KB(*)=\bigcup_{s\in\mathcal{S}}\textbf{PropKB}_\KB(s)$, and furthermore we note that the size $\textbf{PropKB}_\KB(*)$ is in the worst case the size of $\KB$, since at most each element of $\KB$ initiates the need for a new member of $\textbf{PropKB}_\KB(*)$. Hence for any $s$, $|\textbf{PropKB}_\KB(s)|\leq |\textbf{PropKB}_\KB(*)|\leq |\KB|$. Moreover, each element of any $X\in \textbf{PropKB}_\KB(*)$ is a subformula of an element of $\KB$, and so the size of $X$ is no larger than $\KB$.

    Therefore, we can reduce checking whether $\KB\vDash \xi$ to polynomially many propositional preferential entailment checks (in the size of $\KB$ and $\xi$), where each check is polynomial in the size of $\KB$ and $\xi$.

    Now note the following. Since none of the preferential entailment checks are dependent on each other, we can replace the atoms occurring in all the formulas involved in the preferential entailment checks. That is, for each conjunct $c$ in $\xi$ and each $X\in \textbf{PropKB}_\KB(*)$ for which a propositional preferential entailment check is required, we can replace each $p\in\mathcal{P}$ with the indexed atom $p_c^X$ which results in a logically equivalent entailment check, but where each check has a distinct vocabulary.

     Let $C$ be the set of conjuncts in $\xi$ and let $\textbf{PropKB}_\KB(c)$ denote the subset of $\textbf{PropKB}_\KB(*)$ used when querying preferential entailment for $c$. Then, since each propositional preferential entailment check is \textsc{CoNP}-complete we can polynomially reduce it to an \textsc{UNSAT} query of a Boolean formula $q_c^X$ for each $c\in C$ and $X\in \textbf{PropKB}_\KB(c)$. This Boolean formula is polynomial in the size of $\KB$ and $\xi$. That is, checking if $\KB\vDash_P \xi$ is equivalent to checking whether $q$ is unsatisfiable for each $q\in \{q_c^X\mid c\in C, X\in \textbf{PropKB}_\KB(c)\}$.
     
     Since we are able to query each preferential entailment with a distinct vocabulary, we can assume w.l.o.g. that the atoms occurring in each $q_c^X$ are distinct. Since our vocabulary is partitioned for each conjunct, we have that each $q\in \{q_c^X\mid c\in C, X\in \textbf{PropKB}_\KB(c)\}$ is unsatisfiable if and only if the conjunction 
    \[\bigwedge_{c\in C,X\in \textbf{PropKB}_\KB(c)}q_c^X\]
    is unsatisfiable. Then, since the number of such conjunctions are polynomial the size of $\KB$ and $\xi$, and each $q_c^X$ is also of polynomial size, we can reduce checking is $\KB\vDash \xi$ holds to a single polynomially large \textsc{UNSAT} check, and hence the complexity is dominated by the \textsc{CoNP} complexity of \textsc{UNSAT}. Therefore checking preferential entailment in DRSL is in \textsc{CoNP}.
\end{proof}

\subsection{Section \ref{subsection:ranked-entailments}}

\textbf{Proposition \ref{proposition:ranked-entailment-preserves-satisfiability-and-contains-the-KB}. }\textit{Consider $\KB\subseteq \DRSLLang$, a defeasible entailment $\dentails$ and the associated ranking strategy $r$:
\begin{enumerate}
    \item If $\KB$ is satisfiable, then there is no $\pi_X\in \Pi_{\KB}$ such that $\tau_{\KB}^r(\pi_X)=\emptyset$.
    \item If $\phi\in\KB$ then $M_{\KB}^{\dentails}\Vdash \phi$.
\end{enumerate}}

\begin{proof}
\begin{enumerate}
    \item We show this by contrapositive. For $\KB\subseteq\DRSLLang$ suppose there exists an defeasible entailment $\dentails$ with a ranking strategy $r$ where $M_\KB^{\dentails}=(\Pi_\KB,\sigma_\KB,\tau_\KB^r)$ and $\tau_\KB^r(\pi_X)=\emptyset$ for some $\pi_X\in \Pi_\KB$. We show that in this case $\KB$ must be unsatisfiable.

    Then $\tau_\KB^r(\pi_X)=r_X=\emptyset$ for some $X\in \textbf{PropKB}_\KB(*)$. Moreover, since $r_X$ is empty, we have that $r_X\Vdash \phi$ for all $\phi\in\KLMLang$. In particular, $r_X\Vdash \bot$. But then, by construction, we have that $X\dentails \bot$. Furthermore, since we assume that $\dentails$ satisfies classical preservation, $X\dentails \bot$ iff $X\vDash_{P,prop} \bot$. However, then since $X\in \textbf{PropKB}_\KB(*)$, we have by Proposition \ref{proposition:preferential-entailment-diamond-statements} that $\KB\vDash_P\Diamond_*\bot$. But this is true iff in every model $M=(\Pi,\sigma,\tau)$ of $\KB$ we have that $M,\pi\Vdash \bot$ for some $\pi\in \Pi$. That is, $\tau(\pi)\Vdash \bot$. Since the only preferential interpretation that satisfies $\bot$ is the empty interpretation, we then have that $\tau(\pi)=\emptyset$. And therefore, $\KB$ is not satisfiable.

    \item Let $\phi\in \KB$ we show by cases on the structure of $\phi$ that $\KB\dentails \phi$.
    \begin{itemize}
        \item If $\phi=s\preceq t$ then by definition $\textbf{PropKB}_\KB(s)\subseteq \textbf{PropKB}_\KB(t)$ and so if $\pi_X\in \sigma_\KB(s)$, then $X\in \textbf{PropKB}_\KB(s)$. Thus  $X\in \textbf{PropKB}_\KB(t)$ and so by definition $\pi_X\in \sigma(t)$. Hence $M_\KB^{\dentails}\Vdash s\preceq t$.
        \item If $\phi\in \KLMLang$ then $\phi\in X$ for all $X\in \textbf{PropKB}_\KB(*)$ and therefore, for each $\pi\in \Pi_\KB$ we have $\tau_\KB^r(\pi)=r_X$ for some $X\in \textbf{PropKB}_\KB(*)$. By our assumption that $\dentails$ satisfies \textit{Inclusion} we have that for each $X\in \textbf{PropKB}_\KB(*)$ that $X\dentails \phi$ and therefore $r_X\Vdash \phi$. That is $M^{\dentails}_\KB,\pi\Vdash \phi$ for all $\pi\in \Pi$ and so $M^{\dentails}_\KB\Vdash \phi$.
        \item If $\phi=\Box_s \psi$ for $\psi\in \KLMLang$ then using similar reasoning as in the previous case we have $\psi\in X$ for all $X\in \textbf{PropKB}_\KB(s)$ and so by a similar argument $M,\pi\Vdash \psi$ for all $\pi\in \sigma_\KB(s)$.
    \end{itemize}
    Since $\KB$ is in conjunction-free normal form, this is sufficient to cover all cases.
\end{enumerate}    
\end{proof}

\textbf{Proposition \ref{proposition:defeasible-entailment-collapses-into-propositional-case-open-world}. }\textit{Suppose $\KB=\KB'\cup\{\Diamond_*\top\}$ where $\KB'\subseteq\KLMLang$. Then for $\phi\in \KLMLang$, we have $\KB\dentails_{DRSL} \phi$  iff $\KB'\dentails_{prop} \phi$.}

\begin{proof}
    In the case specified above, it is clear that $\textbf{PropKB}_\KB(*)=\{\KB_*,\KB_*^\top\}$ and so in this case if $M^{\dentails}_\KB=(\Pi_\KB,\sigma_\KB,\tau_\KB^r)$ we have that $\Pi_\KB=\{\pi_{\KB_*^\top}\}$ and $\tau_\KB^r(\pi_{\KB_*^\top})=r_{\KB_*^\top}$. Therefore $M^{\dentails}_\KB\Vdash \phi$ for $\phi\in \KLMLang$ iff $r_{\KB_*^\top}\Vdash \phi$. Furthermore, by definition $r_{\KB_*^\top}\Vdash \phi$ $\KB^\top_*\dentails \phi$. Then note that since $\KB^\top_*=\KB_*\cup\{\top\}$ and the addition of $\top$ is tautologous we have $\KB^\top_*\dentails \phi$ iff $\KB_*\dentails \phi$. Finally, since $\KB'\subseteq \KLMLang$ we have that $\KB'=\KB_*$ and so $\KB_*\dentails \phi$ iff $\KB'\dentails_{prop} \phi$.
\end{proof}

 \textbf{Theorem \ref{theorem:algorithmic-representation-result}.}\textit{ For a given DRSL knowledge base $\KB$, a formula $\phi\in\DRSLLang$ and a defeasible entailment $\dentails$ with a selection strategy $r$, we have that $\KB\dentails \psi$ iff $StdptRankEntail(\KB,r,\phi)=\text{True}$.}

\begin{proof}
    We once again consider the base case for single box and diamond quantified statements, and then prove by induction over conjunctions:

    \begin{itemize}
        \item If $\phi=\Box_s\psi$ for $\psi\in\KLMLang$ then $\KB\dentails \phi$ iff $M_\KB^{\dentails},\pi_X\Vdash \psi$ for all $\pi_X\in \sigma(s)$ iff
        $\tau_\KB^r(\pi_X)=r_X\Vdash \phi$ for all $X\in \textbf{PropKB}(s)\setminus\{\KB_*\}$. By construction this is equivalent to $X\dentails \psi$ for all $X\in \textbf{PropKB}(s)\setminus\{\KB_*\}$. Lastly, by the result given by Casini, Meyer and Varzinczak \cite{casini:beyondratclosure}, this is true iff $\text{DefeasibleEntail}(X,r_X,\psi)=True$ for all $X\in \textbf{PropKB}(s)\setminus\{\KB_*\}$. Equivalently, $\KB\ndentails \phi$ iff $\text{DefeasibleEntail}(X,r_X,\psi)=False$ for some $X\in \textbf{PropKB}(s)\setminus\{\KB_*\}$. Then lastly note that $\text{StdptRankEntail}(K,r,\Box_s\psi)=False$ iff $\text{DefeasibleEntail}(X,r_X,\psi)=False$ for some $X\in \textbf{PropKB}(s)\setminus\{\KB_*\}$ (this is clear from lines 7-13 of the algorithm). Thus $\KB\ndentails \Box_s \psi$ iff $\text{StdptRankEntail}(K,r,\Box_s\psi)=False$.
        \item If $\phi=\Diamond_s\psi$ we repeat a similar dual argument: $\KB\dentails \phi$ iff $M_\KB^{\dentails},\pi_X\Vdash \psi$ for some $\pi_X\in \sigma(s)$ iff $\tau_\KB^r(\pi_X)=r_X\Vdash \phi$ for some $X\in \textbf{PropKB}(s)\setminus\{\KB_*\}$. Again, this is equivalent to $X\dentails \psi$ for some $X\in \textbf{PropKB}(s)\setminus\{\KB_*\}$, which by Casini, Meyer and Varzinczak \cite{casini:beyondratclosure} is equivalent to $\text{DefeasibleEntail}(X,r_X,\psi)=True$ for some $X\in \textbf{PropKB}(s)\setminus\{\KB_*\}$. Then note that $\text{StdptRankEntail}(K,r,\Diamond_s\psi)=True$ iff $\text{DefeasibleEntail}(X,r_X,\psi)=True$ for some $X\in \textbf{PropKB}(s)\setminus\{\KB_*\}$ (note lines 14-21 of the algorithm). Thus $\text{StdptRankEntail}(K,r,\Diamond_s\psi)=True$ iff $\KB\dentails\Diamond_s \psi$.
        \item For our inductive step on conjunctions we assume the result holds for each conjunct $\phi_1$ and $\phi_2$ and then we assume $\KB\dentails \phi_1\wedge \phi_2$. In this case, this is equivalent to $M_\KB^{\dentails}\Vdash\phi_1\wedge \phi_2$ which is equivalent to $M_\KB^{\dentails}\Vdash\phi_1$ and $M_\KB^{\dentails}\Vdash \phi_2$. Then we note by inductive hypothesis that this is equivalent to $\text{StdptRankEntail}(K,r,\phi_1)=True$ and $\text{StdptRankEntail}(K,r,\phi_2)=True$ which, by lines 1-6 of the \text{StdptRankEntail} algorithm, is equivalent to $\text{StdptRankEntail}(K,r,\phi_1\wedge \phi_2)=True$.
    \end{itemize}
    
\end{proof}

\textbf{Theorem \ref{theorem:complexities-of-defeasible-entailments}. }\textit{For any propositional defeasible entailment $\dentails_{prop}$, we have that if $\dentails_{prop}$ is computable in a complexity class $C$ such that $\textsc{P}_{\parallel}^\textsc{NP}\subseteq C$, then entailment-checking for $\dentails_{DRSL}$ remains in $C$. In particular:
    \begin{itemize}
        \item Entailment checking for rational closure in DRSL is $\textsc{P}_{\parallel}^\textsc{NP}$-complete.
        \item Entailment checking for lexicographic closure in DRSL is $\textsc{P}^\textsc{NP}$-complete.
    \end{itemize}}

\begin{proof}
    Presume we wish to compute whether $\KB\dentails \phi$, where we assume the size of the input is the size of $\KB$ and $\phi$ (we consider the ranking strategy $r$ as being determinable from $\KB$ and $\phi$). We show this by showing that each query of \texttt{StdptRankEntail} involves at most polynomially many checks to \texttt{DefeasibleEntail}, in the size of $\KB$ and $\phi$.

    Let $n$ be the size of $\KB$ and let $k$ be the length of the query $\phi$. For each conjunct in of the form $\Box_s \psi$ occurring in $\phi$ we complete the if loop at line 7 of \texttt{StdptRankEntail} at most once for each $X\in \textbf{PropKB}_\KB(s)\setminus\{K_*\}$. Each repeat of the loop calls \texttt{DefeasibleEntail} once. Note that as mentioned in previous proofs we have that each set $X\in \textbf{PropKB}_\KB(s)\setminus\{K_*\}$ is no larger than the original knowledge base $\KB$. Similarly $\Box_s\psi$ is no larger than $\phi$. Therefore the input to each call of \texttt{DefeasibleEntail} is no larger than $n+k$.

    Furthermore, we once again recall that for any standpoint symbol $s$ the size of $\textbf{PropKB}_\KB(s)\setminus\{K_*\}$ is no larger than the size of $\KB$. Therefore we only repeat the loop at line 7 at most $n$ times. 

    For similar reasons, for each conjunct of the form $\Diamond_s \psi$ we call \texttt{DefeasibleEntail} at most $n$ times with an input no larger than $n+k$.

    Then note there at most $k$ conjuncts of each form occurring is $\phi$, and so we call \texttt{DefeasibleEntail} at most $n\cdot k$ times with an input no greater than the input of  \texttt{StdptRankEntail}. Furthermore, no input of a call to \texttt{DefeasibleEntail} depends on the outcome of any previous call, and all of the given calls can be made in parallel. 
    Lastly note that $\textbf{PropKB}_\KB(s)$ can be computed in polynomial time in the size of $\KB$ \cite{LMR2024} and can be computed without relying on the output of any call to \texttt{DefeasibleEntail}. That is, \texttt{StdptRankEntail} involves at most polynomially many parallel calls to \texttt{DefeasibleEntail}. Therefore, if the complexity class of this algorithm is in $C$ such that $\textsc{P}_\parallel^\textsc{NP}\subseteq C$, computing \texttt{StdptRankEntail} remains within $C$.

    In particular since \texttt{RationalClosure} in the propositional case is $\textsc{P}_\parallel^\textsc{NP}$-complete, extending it to DRSL is $\textsc{P}_\parallel^\textsc{NP}$-complete. Similarly, since \texttt{LexicographicClosure} in the propositional case is $\textsc{P}^\textsc{NP}$-complete, extending it to DRSL is $\textsc{P}^\textsc{NP}$-complete.
\end{proof}

\end{document}